\documentclass[10pt,twocolumn,twoside]{IEEEtran}
\usepackage{amsmath}%
\usepackage{amsthm}%
\usepackage{amsfonts}%
\usepackage{amssymb}%
\usepackage{graphicx}
\usepackage{algpseudocode}
\usepackage{algorithm}
\usepackage{booktabs} 
\usepackage{lscape}
\usepackage[space]{grffile}
\usepackage{epsfig,psfrag,amstext,subfigure,subeqnarray,nccfoots,color,multirow,bm}
\usepackage{subfigure} 
\usepackage{threeparttable}

\usepackage{amsmath,graphicx}
\usepackage{amsmath}
\usepackage{graphicx} 
\usepackage{amsthm}
\usepackage{algpseudocode}
\usepackage{algorithm}
\usepackage{epstopdf}
\usepackage{latexsym,bm,amsmath, amssymb,amsfonts}%
\usepackage{enumitem,enumerate}
\usepackage{booktabs}
\usepackage{multirow,url}

\usepackage{cuted}
\usepackage{color}
\usepackage{cite}
\usepackage{subfigure}
\usepackage{stfloats}
\newtheorem{fact}{Fact}
\newtheorem{thm}{Theorem}
\newtheorem{lem}{Lemma}
\newtheorem{prop}{Proposition}

\newtheorem{rmk}{Remark}

\theoremstyle{plain}

\definecolor{orange}{RGB}{255,107,0}

%
\ifCLASSINFOpdf
\else
\fi 
\hyphenation{op-tical net-works semi-conduc-tor}



\begin{document}
	%
	\title{Stochastic Mirror Descent for Low-Rank Tensor Decomposition Under Non-Euclidean Losses}
	%
	%
	%
	%
	\author{Wenqiang Pu,
		Shahana Ibrahim,
		Xiao Fu, 
		and Mingyi Hong
		\thanks{X. Fu is supported in part by NSF ECCS 1808159,  IIS-1910118 and  ARO  award  W911NF-19-1-0247.   M. Hong  is  supported  in  part  by  NSF  Award  CIF-1910385,  and  ARO award W911NF-19-1-0247.}
		\thanks{Wenqiang Pu is with Shenzhen Research Institute of Big Data, China.}
		\thanks{Shahana Ibrahim and Xiao Fu are with Electrical Engineering and Computer Science, Oregon State University, Corvallis, USA.}
		\thanks{Mingyi Hong is with Electrical and Computer Engineering, University of Minnesota, MN, USA}}

	\maketitle
	
	\begin{abstract}
		This work considers low-rank canonical polyadic decomposition (CPD) under a class of non-Euclidean loss functions that frequently arise in statistical machine learning and signal processing. These loss functions are often used for certain types of tensor data, e.g., count and binary tensors, where the least squares loss is considered unnatural.
		Compared to the least squares loss, the non-Euclidean losses are generally more challenging to handle. 
		Non-Euclidean CPD has attracted considerable interests and a number of prior works exist.
		However, pressing computational and theoretical challenges, such as scalability and convergence issues, still remain.
		This work offers a unified stochastic algorithmic framework for large-scale CPD decomposition under a variety of non-Euclidean loss functions. Our key contribution lies in a tensor fiber sampling strategy-based flexible stochastic mirror descent framework. Leveraging the sampling scheme and the multilinear algebraic structure of low-rank tensors, the proposed lightweight algorithm ensures global convergence to a stationary point under reasonable conditions. Numerical results show that our framework attains promising non-Euclidean CPD performance. The proposed framework also exhibits substantial computational savings compared to state-of-the-art methods.
	\end{abstract}
	
	\begin{IEEEkeywords}
		Tensor decomposition, count and binary tensor, stochastic optimization, mirror descent method
	\end{IEEEkeywords}

	%
	\IEEEpeerreviewmaketitle

	\section{Introduction}
	\label{sec:intro}
	
	\textit{Canonical polyadic decomposition} (CPD) has been used in many core tasks in signal processing and machine learning, such as neural signal analysis, video processing, array signal processing, text mining, social network analysis, link prediction, among others---see \cite{sidiropoulos2017tensor,kolda2009tensor,Cichocki2015,anandkumar2014tensor}.
	

	The majority of classic CPD models and algorithms were developed for least squares (LS) problems using the Euclidean distance-based fitting criterion; see \cite{kolda2009tensor,Cichocki2015,sidiropoulos2017tensor,fu2020nonconvex} and references therein. However, the Euclidean distance is unnatural for measuring the ``distance'' between many types of real-world data, e.g., stochastic, integer, and binary data. In principle, using certain ``data geometry-aware'' divergences to serve as the fitting criteria may greatly improve performance and robustness in practice~\cite{hong2020generalized,wang2020learning,vandecappelle2020second}. For example, the ``distance'' between two probability distributions is typically measured by a proper divergence, such as the generalized {\it Kullback-Leibler} (KL) divergence~\cite{huang2017kullback,kargas2019learning,Fu2019Link,cheng2020learning} and Itakura-Saito (IS) divergence~\cite{ermics2015link}. These divergences take into consideration that the data is constrained in the probabilistic simplex, and thus are often more effective relative to the LS criterion in analyzing data that are not generated over the entire Euclidean space.
	From a statistical estimation viewpoint, many non-Euclidean divergences are closely related to the {\it maximum likelihood estimators} (MLEs) under plausible data distributions.
	For example, the generalized KL divergence~\cite{chi2012tensors} and logistic loss~\cite{hong2020generalized,wang2020learning} can be derived from the MLEs of count integer data and binary data that follow certain Poisson distributions and Bernoulli distributions, respectively.

	However, computing CPD under non-Euclidean divergences is much more challenging compared with the case under Euclidean loss (or, the LS loss), especially when the data size becomes huge. 
	Algorithms developed under the LS loss are often not easily extendable to handle these more complicated loss functions, due to the lack of ``nice'' properties that are possessed by the LS loss, e.g., the gradient Lipschitz continuity under relatively mild conditions.
	Below, we provide a brief review on existing developments for CPD models with specific loss function. 

	\subsection{Prior Works}
	\newcommand{\minitab}[2][l]{\begin{tabular}{#1}#2\end{tabular}}
	\begin{table*}
		\centering
		\caption{Brief review of algorithms for CPD model.}
		\small
		\begin{threeparttable}
			\begin{tabular}{c c c c c}
				\toprule
				\textbf{Algorithm}  & \textbf{Stochastic}  & \textbf{Loss function} & \textbf{Data Type} \\
				\midrule
				First-order type algorithm~\cite{comon2009tensor,phan2013fast,xu2013block}& No &  LS & continuous \\
				Primal-dual algorithm~\cite{huang2016flexible}& No &  LS & continuous \\
				(Quasi-)Second-order algorithm~\cite{sorber2013optimization,phan2013low}& No &  LS & continuous \\
				Stochastic optimization algorithm~\cite{beutel2014flexifact,vervliet2015randomized,Sorensen2019,fu2020block}& Yes &  LS & continuous \\
				Hierarchical alternating optimization~\cite{cichocki2009fast}& No &  $\alpha,\beta-$ Div. & continuous \\
				Majorization-minimization algorithm~\cite{chi2012tensors}& No &  KL Div. & continuous and count \\
				Multiplicative update algorithm~\cite{ermics2015link}& No &  LS, KL Div., IS Div. & binary \\
				Exponential gradient algorithm~\cite{kargas2019learning}& No &  KL Div. & continuous and count \\
				Alternating optimization algorithm~\cite{wang2020learning}& No &  logistic loss and others & binary \\
				Generalized Gaussian Newton algorithm~\cite{vandecappelle2020second}& No &  $\beta$-div. & continuous and count \\
				Stochastic gradient descent algorithm~\cite{hong2020generalized}& Yes &  general loss$^*$ & continuous, count, and binary \\
				\textbf{Stochastic mirror descent algorithm (this work)}& Yes &  general loss$^*$& continuous, count, and binary\\
				\bottomrule
			\end{tabular}
			\begin{tablenotes}
				\footnotesize
				\item {\scriptsize *The general loss in this table refers to many ML criterion motivated losses~\cite{hong2020generalized} as well as statistical divergences such as KL div., IS div and etc. }
			\end{tablenotes}
		\end{threeparttable}
		\label{tab:alg}
	\end{table*}
	Many existing non-Euclidean CPD approaches employ the {\it block coordinate descent} (BCD) paradigm~\cite{razaviyayn2013unified} with divergence-specific strategies for block variable updating.
	For example, the work in \cite{cichocki2009fast} proposed a hierarchical alternating optimization algorithm for CPD with $\alpha$- and $\beta$-divergence. In~\cite{chi2012tensors}, the generalized KL-divergence loss was considered, where a block majorization-minimization (MM) algorithm was developed. In~\cite{kargas2019learning}, the exponential gradient algorithm was proposed for the KL-divergence. Similar strategies were developed for the the KL and IS divergences \cite{ermics2015link}. 
	Recently in~\cite{wang2020learning}, several ML-based loss functions for binary data were considered and an alternating optimization algorithm with line search was proposed. 
	Besides BCD, other optimization frameworks such as Gauss-Newton based methods~\cite{vandecappelle2020second} and stochastic gradient-based methods~\cite{hong2020generalized} were also developed for non-Euclidean CPD.
	
	
	It is important to note that most of the algorithms mentioned above (such as \cite{cichocki2009fast,chi2012tensors,fevotte2011algorithms,ermics2015link,wang2020learning,vandecappelle2020second}),  are {\it batch} algorithms, which utilize the entire data set to perform every update.  
	They become increasingly slow when the size of the data increases.  
	On the other hand, {\it stochastic} algorithms are effective in reducing per-iteration computational and memory burdens. Recently, a stochastic gradient descent (SGD) based algorithm~\cite{hong2020generalized} was proposed for CPD with statistical criterion based loss functions. The algorithm was developed based on randomly sampling the tensor entries. Hence, it is difficult to exploit some interesting multilinear algebraic properties of low-rank tensors to further improve computational efficiency. 
	In addition, the SGD algorithm in~\cite{hong2020generalized} lacks convergence guarantees. In Table~\ref{tab:alg}, we  summarize the properties of a number of recently developed algorithms for the CPD model.

	\subsection{Contributions}
	In this paper, we develop a \textit{unified}  {\it stochastic mirror descent} (SMD) algorithmic framework for large-scale CPD under various non-Euclidean losses. Our major contributions are summarized as follows:

	\noindent
	$\bullet$ \textbf{Efficient fiber-sampled stochastic MD framework:} 
	We propose a block-randomized SMD algorithmic framework that is tailored for tensor decomposition. 
	Both MD and SMD are known for its effectiveness in handling non-Euclidean losses~\cite{beck2003mirror}, but directly applying generic SMD does not fully exploit the underlying CPD structure.
	We use a recently emerged tensor data sampling strategy (namely, fiber sampling \cite{battaglino2018practical,fu2020block}) to assist designing SMD-type updates. The fiber sampling strategy judiciously uses the multilinear structure of low-rank tensors, which gives rise to structured (non-)convex subproblems. These structures can often be exploited to come up with economical update rules for CPD.

	\noindent
	$\bullet$ \textbf{A suite of solutions for various losses and constraints:} We carefully craft solutions for a series of non-Euclidean losses.
	The proposed algorithmic framework allows flexible choices of the local surrogate functions under the SMD framework to adapt to different loss functions.~Such flexibility also helps offer lightweight updates when the latent factors are under a variety of constraints that are of interest in data analytics. In particular, we pay special attention to binary and integer data CPD problems, which find numerous applications across disciplines.


	\noindent
	$\bullet$ \textbf{Guaranteed convergence:} 
	We offer convergence characterizations for our block-randomized SMD-based non-Euclidean CPD framework. Establishing stationary-point convergence for generic SMD is already a challenging problem.
	The work in~\cite{dang2015stochastic} on SMD requires its gradient estimation error converging to zero, which is unrealistic in many cases, especially under the context of CPD. In this work, we leverage the notion of {\it relative smoothness} and the tensor fiber sampling strategy to construct lightweight SMD updates for different losses. This design also helps circumvent stringent conditions (e.g., vanishing gradient estimation error) when establishing convergence. To our best knowledge, such convergence results for multi-block SMD under nonconvex settings have been elusive in the literature.
	
	Part of the work will appear in IEEE ICASSP 2021. The conference version considered algorithm design under the $\beta$-divergence loss. 
	This journal version extends the ideas to handle more non-Euclidean losses, e.g., the logistic loss that is critical in binary data analysis. More importantly, this version provides unified convergence analysis for the proposed algorithmic structure. Some important practical considerations, e.g., stepsize scheduling, is also discussed and experimented. The journal also contains substantially more simulations and real-data validation.

	\subsection{Notation}
	We follow the conventional notation in signal processing, $x$, $\bm{x}$, and $\bm{X}$, and $\underline{\bm{X}}$ denote scalar, vector, matrix, and tensor, respectively. Given a matrix $\bm{X}$, $\bm{X}^{.c}$ and $\exp(\bm{X})$ denote {the} entry-wise power and exponential operations respectively; $\mathrm{vec}(\bm{X})$ denote the vectorization operator that concatenates the columns of $\bm{X}$. We use $\circledast$, $\odot$, and $\oslash$ to denote the Hadamard product, the Khatri-Rao product, and entry-wise division respectively. $^T$ denotes the transpose operation. Script letter $\mathcal{C}$ is used to denote a discrete set and $|\mathcal{C}|$ is the cardinality $\mathcal{C}$. $\| \cdot \|$ denotes the Euclidean norm of vector, $\|\cdot \|_F$ denotes the Frobenius norm of matrix, and $\langle \bm{x}, \bm{y} \rangle$ denotes the inner production of vectors $\bm{x}$ and $\bm{y}$. Other notation will be explained when it first appears.

	\section{CPD under Non-Euclidean Losses}
	Consider a data tensor $\underline{\bm{X}}\in\mathbb{R}^{I_1\times I_2\times \ldots \times I_N}$, where $I_n>0,n\in[N]$ is the size of the $n$th mode of $\underline{\bm{X}}$. Such multi-way data tensors arise in many applications. The entries of the data tensor $\underline{\bm{X}}$ could be continuous real numbers, 
	non-negative integers or binaries. 
	A general problem of interest is to approximate $\underline{\bm{X}}$ using a low rank tensor $\underline{\bm{M}}$, defined as 
	\begin{equation}\label{eq:CPDmodel}
		\underline{\bm{M}}=\sum\nolimits_{r=1}^R \bm{A}_1(:,r)\circ \bm{A}_2(:,r) \circ \ldots \circ  \bm{A}_N(:,r),
	\end{equation}
	where ``$\circ$'' denotes the outer product of vectors, $\bm{A}_n\in\mathbb{R}^{I_n\times R}$ is the mode-$n$ latent factor matrix; $R$ is the smallest positive integer such that \eqref{eq:CPDmodel} holds, and it is also known as the rank of $\underline{\bm{M}}$.
	
	Denote an $N$-dimensional integer vector $\bm{i}$ as the entry coordinate, i.e., $$\bm{i}\in\mathcal{I}\triangleq\{ (i_1,i_2,\ldots, i_N)|\  i_n=1,2,\ldots,I_n,\forall n\}.$$
	Then the CPD problem can be formulated as the following minimization problem with a loss function of interest $\ell(\cdot,\cdot):\mathbb{R}\times\mathbb{R}\mapsto\mathbb{R}$,
	\begin{equation}\label{eq:optori}
		\begin{aligned}
			\min_{\bm{A}_1,\bm{A}_2,\ldots,\bm{A}_N}\ &\frac{1}{|\mathcal{I}|}\sum_{\bm{i}\in\mathcal{I}}  \ell \left({\underline{\bm{X}}_{\bm{i}}},\underline{\bm{M}}_{\bm{i}}\right)\\
			\textrm{s.t.}\quad\quad& {\underline{\bm{M}}_{\bm{i}}}={\sum\nolimits_{r=1}^R \prod\nolimits_{n=1}^N} \bm{A}_n(i_n,r),\ \forall \bm{i}\in\mathcal{I},\\
			&\bm{A}_n\in\mathcal{\bm{A}}_n,\ \forall n,
		\end{aligned}
	\end{equation}
	where ${\underline{\bm{X}}_{\bm{i}}}$ and ${\underline{\bm{M}}_{\bm{i}}}$ denote the entries of ${\underline{\bm{X}}}$ and ${\underline{\bm{M}}}$ indexed by $\bm{i}$, respectively, $\mathcal{\bm{A}}_n$ is a {constraint} set which captures the prior information about the structure of latent factors $\bm{A}_n$, e.g., non-negativity, sparsity, and smoothness. {By choosing proper loss functions $\ell$, Problem~\eqref{eq:optori} is used for handling different types of data, e.g., continuous, count, and binary data. Several representative motivating examples are as follows:}
	
	\begin{table*}
		\centering
		\caption{Distributions, link functions, and loss functions for different types of data.}
		\small
		\begin{threeparttable}
			\begin{tabular}{c c c c c c}
				\toprule
				\textbf{Data Type}  & \textbf{Distribution}  & \textbf{Link Function}  &\textbf{Loss function} & \textbf{Parameter Type} & \textbf{Name} \\
				\midrule
				& Gaussian &  $\theta(m)=m$ &$\frac{1}{2}(x-m)^2$ & $x,m\in\mathbb{R}$ & Euclidean Dis. \\
				Continuous & Gamma &  $\theta(m)=m$ &  $\frac{x}{m+\epsilon}+\log (m+\epsilon)$ & $x>0,m\geq0$  & IS Div. \\
				& -- & -- & $(m+\epsilon)^\beta/\beta-x (m+\epsilon)^{\beta-1}/(\beta-1)$ & $x\geq 0,m\geq0$ & $\beta$-Div. $\beta\in\mathbb{R}/\{0,1,2\}$\\
				\midrule
				Count & Poisson & $\theta(m)=m$ &  $m-x\log(m+\epsilon)$ & $x\in\mathbb{N},m\geq 0$ & generalized KL Div., \\
				&   &$\theta(m)=e^m$&  $e^m-xm$ & $x\in\mathbb{N},m\in\mathbb{R}$ & -- \\
				\midrule
				Binary &Bernoulli &  $\theta(m)=\frac{m}{1+m}$ &$\log(m+1)-x\log(m+\epsilon)$ & $x\in\{0,1\},m\geq0$ & -- \\
				&  &  $\theta(m)=\frac{e^m}{1+e^m}$& $\log(1+e^m)-xm$ & $x\in\{0,1\},m\in\mathbb{R}$ & --\\
				
				\bottomrule
			\end{tabular}
		\end{threeparttable}
		\label{tab:ell}
	\end{table*}
	

	{
		\noindent $\bullet$ {\bf KL-divergence for count data:} In many real-world scenarios, data is naturally recorded as nonnegative integers, {e.g.,
			crime numbers across locations and time\footnote{\scriptsize See official website of the city of Chicago, \texttt{www.cityofchicago.org.}} and email interactions recorded over months~\cite{shetty2004enron}.
		} 
		{As an information-theoretic measure, the KL divergence was originally proposed for quantifying similarity between two probability distributions.}
		The {\it generalized} KL-divergence that handles nonngeative quantities beyond distributions is also widely used in data analytics~\cite{chi2012tensors,Fu2019Link,huang2017kullback,fevotte2009nonnegative}. The generalized KL divergence has an $\ell$ defined as follows:
		\begin{equation}\label{eq:genKL}
			(\mathtt{KL}\text{-}\mathtt{Div.})\quad\ell(x,m)=m-x\log(m),
		\end{equation}
		where $ \ x\in\mathbb{N}$ and $m\geq0$. 
		Problem~\eqref{eq:optori} with KL-divergence can also be interpreted as the MLE for  estimating the Poisson parameter tensor $\underline{\bm{M}}$, which has a low-rank structure~\cite{chi2012tensors,hong2020generalized}. The corresponding statistical model is  $$\underline{\bm{X}}_{\bm{i}}\sim \mathtt{Poisson}(\underline{\bm{M}}_{\bm{i}}),\forall \bm{i},$$
		where $\mathtt{Poisson}(m)$ denotes the Poisson distribution with a mean of $m$. 
		
		
		\noindent $\bullet$ {\bf Log loss for binary data:} Binary data  is also frequently encountered in data analytics, e.g., in adjacency matrix-based social network community detection~\cite{huang2019detecting,anandkumar2014tensor} and knowledge base analysis~\cite{hong2020generalized,wang2020learning}. Binary data fitting is often handled using the following loss:
		\begin{equation}\label{eq:genLog}
			(\mathtt{Log\ Loss})\quad\ell(x,m)=\log(1+e^m)-xm,
		\end{equation}
		where $x\in\{0,1\}$ and $m\geq 0$. The log loss can be interpreted as MLE for finding the Bernoulli distribution parameter~\cite{wang2020learning}. The associated binary data generation model is
		$$\underline{\bm{X}}_{\bm{i}}\sim\mathtt{Bernoulli}(\underline{\bm{\Theta}}_{\bm{i}}),\ \underline{\bm{\Theta}}_{\bm{i}}=\underline{\bm{M}}_{\bm{i}}/(1+\underline{\bm{M}}_{\bm{i}}),\forall \bm{i},$$
		where $\mathtt{Bernoulli}(\theta)$ denotes the Bernoulli distribution, $\theta$ is the probability for $x$ taking $1$, and $\underline{\bm{\Theta}}$ has the same size of $\underline{\bm{X}}$. 
		
		
		\noindent $\bullet$ {\bf $\beta$-divergence:} 
		Non-Euclidean losses also find applications in continuous data CPD, especially under non-Gaussian and/or non-additive noise, e.g., multiplicative Gamma noise~\cite{vandecappelle2020second}. For example, the $\beta$-divergence was found useful for neural signal analysis~\cite{cichocki2009fast} and recently is studied as CPD fitting criterion~\cite{hong2020generalized,vandecappelle2020second,pu2021fiber}. The $\beta$-divergence is parametrized by a constant $\beta\in\mathbb{R}$ defined as 
		\begin{equation*}
			(\beta\text{-}\mathtt{Div.})\quad\ell(x,m)=
			\begin{cases}
				\frac{x}{m}-\log(\frac{x}{m})-1,&\beta=0,\\
				x\log\frac{x}{m}+m-x,&\beta=1,\\
				\frac{\left(x^\beta+(\beta-1)m^\beta-\beta x y^{\beta-1}\right)}{\beta(\beta-1)},& \textrm{o.w.}
			\end{cases}
		\end{equation*}
		The $\beta$-divergence subsumes the IS divergence ($\beta=0$), the generalized KL divergence ($\beta=1$), and the Euclidean distance ($\beta=2$) as special cases. When $\beta=0$, it can also be interpreted as MLE corresponds to data with multiplicative Gamma noise. In music data analysis, $\beta<2$ was found useful, since such loss functions capture low intensity spectra components---but the Euclidean loss tends to focus on significant variations in data~\cite{fevotte2009nonnegative}. 
		
		%
		%
		\begin{rmk}
			As one has seen in the examples,
			one way to select $\ell$ is to take a statistical analysis viewpoint. Each entry of the data tensor is treated as a random variable (r.v.) that is generated as follows:
			\begin{equation}\label{eq:x_modelM}
				\underline{\bm{X}}_{\bm{i}}\sim p\left(\underline{\bm{X}}_{\bm{i}}\mid \theta(\underline{\bm{M}}_{\bm{i}})\right),\ \forall \bm{i}\in\mathcal{I}.
			\end{equation}
			where $p(x;\theta)$ is a distribution with natural parameter $\theta$ (e.g., the Poisson and Bernoulli distribution) and $\theta(\cdot):\mathbb{R}\mapsto\mathbb{R}$ is an invertible function whose inverse is often referred as the \textit{link function} in statistics (e.g., $\theta(m)=m$ and $\theta(m)=\frac{m}{1+m}$). A straightforward intuition behind model~\eqref{eq:x_modelM} is that, the observed data tensor $\underline{\bm{X}}$ is `embedded' on a \textit{latent} low rank tensor $\underline{\bm{M}}$, whose generation procedure is characterized by $p(x;\theta)$ and $\theta(\cdot)$. To find $\underline{\bm{M}}$, a statistically efficient way is choosing $\ell(\cdot)$ as the negative log-likelihood function associated with model~\eqref{eq:x_modelM}, given as
			\begin{equation*}
				\ell(\underline{\bm{X}}_{\bm{i}},\underline{\bm{M}}_{\bm{i}})\triangleq-\log p(\underline{\bm{X}}_{\bm{i}}\mid \theta(\underline{\bm{M}}_{\bm{i}}))+\textrm{constant},
			\end{equation*}
			which naturally leads to a non-Euclidean CPD problem (if the distribution is not Gaussian).
			
			In Table~\ref{tab:ell}, some frequently used $\ell(\cdot)$, link function $\theta(\cdot)$, and distribution of our interests are illustrated. {In the table, $\epsilon>0$ is a sufficiently small number to avoid the function value or gradient being $\pm\infty$, i.e., $\epsilon=10^{-9}$. This modification is often used in the literature~\cite{hong2020generalized}.} 
		\end{rmk}
	}

	{
		Tackling Problem~\eqref{eq:optori} at scale is highly nontrivial.  
		For example, a $5000\times 5000\times 5000$ tensor can be as large as $900$GB if the double precision arithmetic is used,
		which means that batch algorithms may not be a viable option.
		Instead, stochastic algorithms that sample `partial data' per iteration become an attractive choice. In Euclidean loss CPD, it has been observed that stochastic algorithms can significantly reduce computational and memory cost per iteration; see \cite{vervliet2015randomized,battaglino2018practical,fu2020block}. {Nonetheless, unlike Euclidean CPD, various data sampling schemes and update rules may all offer competitive algorithms~\cite{fu2020nonconvex}, non-Euclidean losses' complex structures may make stochastic algorithm design a more delicate process. In other words, the sampling schemes may affect the subsequent update rules' complexity and convergence properties of the algorithm.}
		
		Next, we offer a unified stochastic algorithmic structure that can efficiently tackle CPD under a variety of non-Euclidean losses. Our development is an integrated design of data sampling and  {\it Lipschitz-like convexity}~\cite{bauschke2017descent} based local surrogate construction, leveraging the underlying multilinear structure of low-rank tensors.

		\section{Proposed Approach}
		\label{sec:smd}

		\subsection{Preliminaries}
		{
			A number of algorithmic frameworks have been considered for handling Problem~\eqref{eq:optori} with non-Euclidean losses, e.g., stochastic gradient descent (SGD) \cite{hong2020generalized}, block coordinate descent (or, alternating optimization (AO))~\cite{wang2020learning,chi2012tensors}, and the Gauss-Newton (GN) method~\cite{vandecappelle2020second}. 
			
			Let us denote $\bm{A}:=(\bm{A}_1,\bm{A}_2,\ldots,\bm{A}_N)$, $\mathcal{A}:=\mathcal{A}_1\times\mathcal{A}_2\times\ldots\times\mathcal{A}_N$. We also use $F(\bm{A})$ to represent the objective function of Problem~\eqref{eq:optori}. The updates of AO and SGD type algorithms can be summarized as follows:
			\begin{subequations}\label{eq:aosgd}
				\begin{align}
					(\mathtt{AO})\ \bm{A}_n^{t+1}&\approx \arg\min_{\bm{A}_n\in\mathcal{A}_n}\ F(\bm{A}_n;\bm{A}_{-n}^t),\label{eq:ao}\\
					(\mathtt{SGD})\ \bm{A}^{t+1}&=\textrm{Proj}_{\mathcal{A}}(\bm{A}-\eta_t \hat{\bm{G}}^t)\notag\\
					&=\arg\min_{\bm{A}\in\mathcal{A}}\ \langle \bm{A},\hat{\bm{G}}\rangle + \frac{1}{2\eta_t}\| \bm{A} - \bm{A}^t \|_F^2.\label{eq:sgd}
				\end{align}
			\end{subequations}
			In~\eqref{eq:aosgd}, $\bm{A}_{-n}$ corresponds to $\bm{A}$ with $\bm{A}_n$ being removed; $F(\bm{A}_n;\bm{A}_{-n}^t)$ is the objective function with fixed $\bm{A}_{-n}^t$; $\hat{\bm{G}}^t$ represents the {\it gradient estimation} from sampled data; $\eta_t>0$ is the step size; and $\textrm{Proj}_{\mathcal{A}}(\cdot)$ denotes the projection onto constraint set $\mathcal{A}$. 
			Many deterministic non-Euclidean tensor decomposition algorithms take the AO route; see, e.g., \cite{wang2020learning,chi2012tensors}, whereas the recent work in \cite{vandecappelle2020second} used a GN method to improve the iteration complexity (i.e., the number of iterations needed for reaching a satisfactory solution).
			However, the AO and GN methods face heavy per-iteration computational and memory complexities when handling large-scale tensors; see the ``MTTKRP'' challenge discussed in \cite{sidiropoulos2017tensor,fu2020block,fu2020nonconvex,kolda2009tensor}.

			The SGD approach in~\cite{hong2020generalized} is more lightweight in terms of the per-iteration resource consumption. In each iteration, the gradient estimation is computed as follows:
			\begin{equation}\label{eq:Gest}
				\hat{\bm G} = \frac{1}{|\mathcal{S}|}\sum_{\bm{i}\in\mathcal{S}}\nabla_{\bm{A}} \ell\left(\underline{\bm X}_{\bm i},{\sum\nolimits_{r=1}^R \prod\nolimits_{n=1}^N} \bm{A}_n(i_n,r)\right)
			\end{equation} 
			where $\mathcal{S}\subseteq\mathcal{I}$ is the sample index set used in this iteration, $\underline{\bm X}_{\bm i}$ for $\bm i\in {\cal S}$ is the sampled data, and $\nabla_{\bm{A}}$ denotes the operation of taking derivative with respect to $\bm{A}$. Constructing $\hat{\bm G}$ can be fairly economical since only partial data is used. This makes the per-iteration complexity of the algorithm affordable, even if the tensor of interest is large.
			
			However, simply using SGD for the non-Euclidean CPD problem may not be the most effective approach.  One can see that from~\eqref{eq:sgd}, every iteration of} SGD is equivalent to solving a quadratic program, which is used as a local surrogate of the original cost function. However, it is known that such quadratic functions may not be a good approximation for many non-Euclidean losses. In particular, using quadratic local surrogates may result in slow progresses since it largely ignores the geometry of the cost function ~\cite{beck2003mirror,bauschke2017descent}. This will become clearer later in Fig.~\ref{fig:cont} (see our detailed discussion in Remark~\ref{rmk:phi}).
		
		
		In this section, we will propose a stochastic mirror descent (SMD) framework to handle the non-Euclidean CPD problem. MD is able to take the problem geometry into consideration and thus could be substantially more efficient than GD under non-Euclidean cost functions, if properly designed. 
		Together with a tensor fiber sampling strategy advocated in \cite{battaglino2018practical,fu2020block}, the aforementioned challenges in constraint enforcing and convergence analysis will also be addressed.

	}
	
	\subsection{ Data Sampling}\label{subsec:fiber}
	A key ingredient for stochastic algorithms lies in the data sampling strategy, which uses partial data to estimate the direction-to-go in each iteration. Under the Euclidean loss, sub-tensor sampling~\cite{vervliet2015randomized}, random entry sampling~\cite{beutel2014flexifact}, and tensor fiber sampling~\cite{battaglino2018practical,fu2020block} were all considered---which all offered effective solutions. In principle, all the sampling strategies considered in the Euclidean case could still be used in the non-Euclidean cases. For example, the recent non-Euclidean CPD work in~\cite{hong2020generalized} used an entry sampling scheme.
	Nonetheless, since non-Euclidean losses are inherently more complex, different sampling strategies may lead to algorithms that admit drastically different updating rules and convergence properties.
	
	In this work, we advocate the fiber sampling strategy that was used in~\cite{battaglino2018practical,fu2020block} for Euclidean loss CPD;  see illustration in Fig.~\ref{fig:fiber} for tensor fibers. We find this sampling strategy particular handy in the non-Euclidean case, for a couple of reasons:
	\begin{figure}
		\centering
		\includegraphics[width=0.95\linewidth]{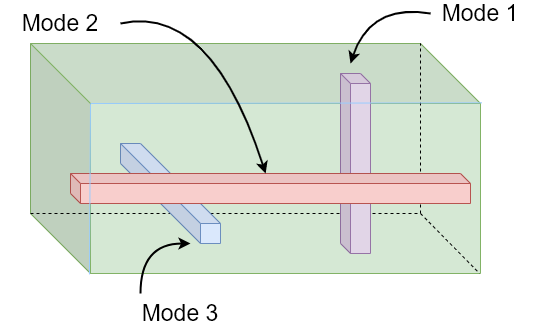}
		\caption{Mode-$i$ fibers of a third-order tensor, where $i=1,2,3$.}
		\label{fig:fiber}
	\end{figure}
	
	\noindent
	$\bullet$ {\bf Incorporating Prior on $\bm{A}_n$:} Randomly sampling some indexes $\bm{i}$~\cite{beutel2014flexifact}  or selecting a subtensor~\cite{vervliet2015randomized} face an issue that the sampled data may relate to only some rows of $\bm{A}_n$ [cf. Eq.~\eqref{eq:Gest}].
	This makes enforcing constraints on the columns of $\bm A_n$ impossible---while many important constraints under statistical non-Euclidean CPD are imposed on the columns of $\bm A_n$, e.g., the probability simplex constraints in \cite{kargas2019learning,yeredor2019estimation,yeredor2019maximum,ibrahim2020recovering}.
	Nonetheless, fiber sampling makes every batch of sampled data contains information about one full $\bm A_n$, making enforcing such constraints easy.
	
	\noindent
	$\bullet$ {\bf Convex Approximation to Optimize $\bm{A}_n$:}~The fiber sampling strategy also provides a way to further exploit the block-wise structure of the loss functions. 
	Even under complex non-Euclidean losses, with the notion of \textit{Lipschitz-like convexity}~\cite{bauschke2017descent}, the loss function $\ell(\cdot)$ with respect to each block ${\bm{A}}_n$ can often be locally approximated (or, to be precise, majorized) by a strongly convex function---which will prove useful for deriving lightweight updates and establishing convergence guarantees.

	The fiber sampling scheme can be understood using the \textit{matrix unfolding} representations of low-rank tensors. The mode-$n$ matrix unfolding of $\underline{\bm{X}}$ is a $J_n\times I_n$ matrix, denoted as $ \bm{X}_{n}$, and the entry-wise correspondence is 
	$\underline{\bm{X}}_{\bm{i}}=\bm{X}_n(j,i_n),\ j=1+\sum_{k=1,k\neq n}^N (i_k-1)J_k,$
	where $J_k=\prod_{m=1,m\neq n}^{k-1}I_m$. For the low-rank tensor $\underline{\bm{M}}$ in \eqref{eq:CPDmodel}, its mode-$n$ matrix unfolding can be expressed as 
	$\bm{M}_n=\bm{H}_n {\bm{A}}_n^T,$
	where $\bm{H}_n={\bm{A}}_N \odot \bm{A}_{N-1}\ldots\odot \bm{A}_{n+1}\odot \bm{A}_{n-1}\odot\ldots\odot \bm{A}_{1}$ and $\odot$ denotes the Khatri-Rao product.

	{Based on the mode-$n$ matrix unfolding for both $\underline{\bm{X}}$ and $\underline{\bm{M}}$, Problem~\eqref{eq:optori} with $\{A_m\}_{m\neq n}$ being fixed can be recast as follows:
		\begin{equation}\label{eq:opt_fiber}
			\begin{aligned}
				\min_{\bm{A}_n}\ \frac{1}{J_n I_n}\sum_{j=1}^{J_n} f_j^n(\bm{A}_n;\bm{H}_{n})\quad \mathrm{s.t.}\ \bm{A}_n\in\mathcal{A}_n,
			\end{aligned}
		\end{equation}
		where $f_j^n(\bm{A}_n;\bm{H}_{n})=\sum_{i=1}^{I_n} \ell ({\bm{X}}_n(j,i),{\bm{H}}_n(j,:)\bm{A}_n(i,:)^T)$. Each row of $\bm{X}_n$ is a mode-$n$ fiber. The mode-$n$ fiber sampling uses part of rows of $\bm{X}_n$ as well as the corresponding rows of $\bm{H}_n$.} Denote $\mathcal{F}_n\subset\{1,2,\ldots,J_n\}$ as the index set of the sampled fibers. Then, the sampled version of Problem~\eqref{eq:opt_fiber} becomes
	\begin{equation}\label{eq:subopt}
		\begin{aligned}
			\min_{\bm{A}_n\in\mathcal{\bm{A}}_n}\ &\frac{1}{|\mathcal{F}_n|I_n}\sum_{j=1}^{|\mathcal{F}_n|} \sum_{i=1}^{I_n} \ell ({\hat{\bm{X}}}_n(j,i),{\hat{\bm{H}}}_n(j,:)\bm{A}_n(i,:)^T),
		\end{aligned}
	\end{equation}
	where ${\hat{\bm{X}}}_n=\bm{X}_n(\mathcal{F}_n,:)$ and ${\hat{\bm{H}}}_n=\bm{H}_n(\mathcal{F}_n,:)$.

	{ Note that if $\ell$ is the Euclidean loss, the subproblem \eqref{eq:subopt}  is a (constrained) least squares problem, which is convex (if ${\cal A}_n$ is a convex set) and can be relatively easily solved~\cite{battaglino2018practical,fu2020block}.
		However, when $\ell$ is a non-Euclidean loss, the problem in \eqref{eq:subopt} may still be nonconvex ( e.g., $\beta$-divergence with $\beta<1$) and challenging. In addition, one hopes to solve \eqref{eq:subopt} using economical updates, which is often an art when non-Euclidean losses are considered.}

	\subsection{Block-Wise Approximation via Bregman Divergence }\label{subsec:cvxup}
	

	To derive a lightweight update of $\bm A_n$ from \eqref{eq:opt_fiber},
	note that all the loss functions $\ell(x,m)$'s given  in Table~\ref{tab:ell} can be decomposed into a convex part $\check{\ell}(x,m)$ plus a concave part $\hat{\ell}(x,m)$ as  $\ell(x,m)=\check{\ell}(x,m)+\hat{\ell}(x,m)$. To see how to make use of such a convex-concave property, let us consider the $(i,j)$th component in \eqref{eq:subopt}, and simplify it as $\ell(x,\bm{h}^T\bm{a})$, where we have defined $x={\hat{\bm{X}}}_n(j,i),\ \bm{h}^T=\hat{\bm{H}}_n(j,:),$ and $\bm{a}=\bm{A}_n(i,:)^T$.  The following lemma constructs a strongly convex surrogate function of $\ell(x,\bm{h}^T\bm{a})$:
	
	\begin{figure*}[ht]
		\centering
		\includegraphics[width=0.95\linewidth]{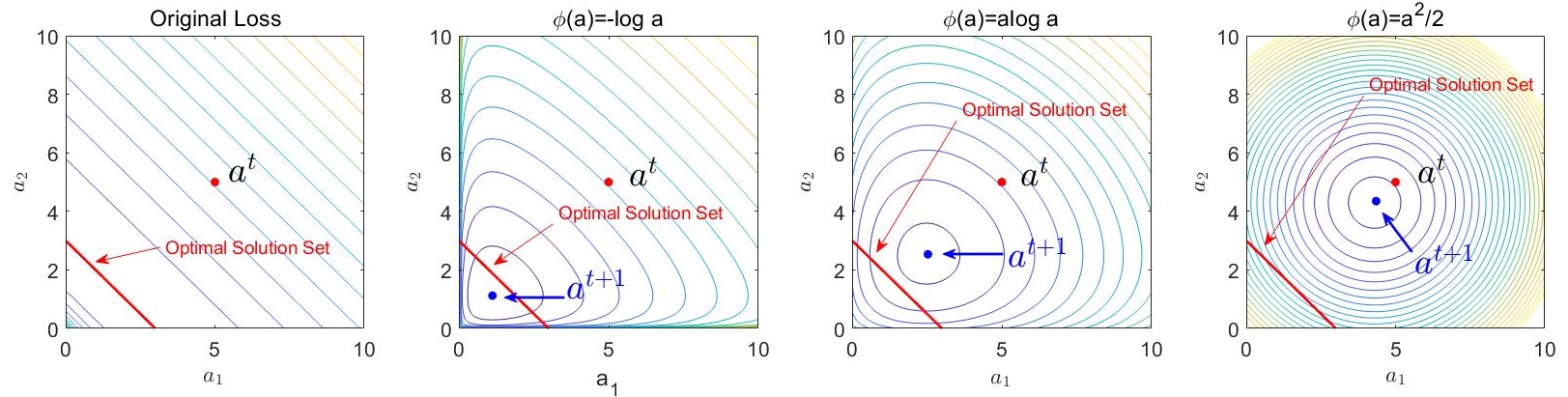}
		\caption{Contour maps of upper bound functions (except a same constant) indicated by~\eqref{eq:lem-ell} with different $\phi(a)$.  The loss function $\ell(x,m)=m-x\log (m)$, where $x=3$, $\epsilon=10^{-9}$, $m=\bm{h}^T\bm{a}$ with $\bm{h}=(1,1)$ and $\bm{a}=(a_1,a_2)$. The solid red point is $\bar{\bm{a}}=(5,5)$ and the solid red line denotes the solution set, $\bm{h}^T\bm{a}=x-\epsilon$.}
		\label{fig:cont}
	\end{figure*}
	
	\begin{lem}\label{lem:phi}
		Suppose for a given $\bm{h}$, there exists $\bar{\bm{a}}$  such that $h_r,\bar{a}_r>0,\forall r$.
		Let $\phi(\cdot):\mathbb{R}\mapsto\mathbb{R}$ be a strongly convex function satisfying the following condition 
		\begin{equation}\label{eq:cond} 
			\exists L<\infty \  \textrm{such that}\  L\phi(a_r)-\lambda_r\check{\ell}\left(x,\frac{h_r}{\lambda_r}{a}_r\right)\textrm{ is convex, }\ \forall r,
		\end{equation}
		where $\lambda_r :=\frac{h_r \bar{a}_r}{\bm{h}^T\bar{\bm{a}}}>0,\forall r$. 
		Let $D_\phi(\bm{a}, \bar{\bm{a}})$ be the Bregman divergence generated by $\phi(\cdot)$:
		\begin{equation*}
			D_\phi(\bm{a}, \bar{\bm{a}})=\sum_{r=1}^R\phi(a_r) - \phi(\bar{a}_r)-\langle\nabla\phi(\bar{a}_r), a_r-\bar{a}_r \rangle.
		\end{equation*}
		Then the following holds:
		\begin{equation}\label{eq:lem-ell}
			\ell(x,\bm{h}^T\bm{a})\leq \ell(x,\bm{h}^T\bar{\bm{a}})+\langle\nabla \ell(x,\bm{h}^T\bar{\bm{a}}), \bm{a}-\bar{\bm{a}}  \rangle + L D_\phi(\bm{a}, \bar{\bm{a}}),
		\end{equation}
		and equality holds if and only if $\bm{a}=\bar{\bm{a}}$. 
	\end{lem}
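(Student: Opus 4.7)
The plan is to use the convex--concave splitting $\ell(x,\bm{h}^T\bm{a}) = \check{\ell}(x,\bm{h}^T\bm{a}) + \hat{\ell}(x,\bm{h}^T\bm{a})$ mentioned just before the lemma and majorize each piece at $\bar{\bm{a}}$ separately: the convex part $\check{\ell}$ by a Jensen-plus-relative-smoothness chain, and the concave part $\hat{\ell}$ by its tangent hyperplane. Summing the two majorants will yield exactly~\eqref{eq:lem-ell}.

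For $\check{\ell}$, I would first observe that the weights $\lambda_r = h_r\bar{a}_r/(\bm{h}^T\bar{\bm{a}})$ are strictly positive (by the hypothesis $h_r, \bar{a}_r>0$) and satisfy $\sum_r \lambda_r = 1$, so that $\bm{h}^T\bm{a} = \sum_r \lambda_r(h_r/\lambda_r)a_r$ is a convex combination. Jensen's inequality applied to $\check{\ell}(x,\cdot)$ then gives $\check{\ell}(x,\bm{h}^T\bm{a}) \leq \sum_r \lambda_r\check{\ell}(x,\tfrac{h_r}{\lambda_r}a_r)$, with the crucial property that at $\bm{a}=\bar{\bm{a}}$ all arguments $\tfrac{h_r}{\lambda_r}\bar{a}_r$ collapse to $\bm{h}^T\bar{\bm{a}}$, so this bound is tight at $\bar{\bm{a}}$. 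To control each summand I would invoke~\eqref{eq:cond}: convexity of $L\phi(a) - \lambda_r\check{\ell}(x,\tfrac{h_r}{\lambda_r}a)$ in $a$ is equivalent, by the first-order characterization of convexity, to the one-dimensional descent inequality
\begin{equation*}
\lambda_r\check{\ell}(x,\tfrac{h_r}{\lambda_r}a_r) \leq \lambda_r\check{\ell}(x,\bm{h}^T\bar{\bm{a}}) + h_r\check{\ell}_m(x,\bm{h}^T\bar{\bm{a}})(a_r-\bar{a}_r) + L\bigl[\phi(a_r)-\phi(\bar{a}_r)-\phi'(\bar{a}_r)(a_r-\bar{a}_r)\bigr].
\end{equation*}
Summing over $r$, using $\sum_r\lambda_r=1$, and invoking the chain-rule identity $\nabla_{\bm{a}}\check{\ell}(x,\bm{h}^T\bar{\bm{a}}) = \bm{h}\,\check{\ell}_m(x,\bm{h}^T\bar{\bm{a}})$, the linear terms assemble into $\langle\nabla_{\bm{a}}\check{\ell}(x,\bm{h}^T\bar{\bm{a}}),\bm{a}-\bar{\bm{a}}\rangle$ and the $\phi$ terms into $LD_\phi(\bm{a},\bar{\bm{a}})$. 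Adding the elementary first-order concavity bound $\hat{\ell}(x,\bm{h}^T\bm{a}) \leq \hat{\ell}(x,\bm{h}^T\bar{\bm{a}}) + \langle\nabla_{\bm{a}}\hat{\ell}(x,\bm{h}^T\bar{\bm{a}}),\bm{a}-\bar{\bm{a}}\rangle$ then recovers~\eqref{eq:lem-ell}.

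The equality characterization is the subtle step and the main obstacle. The ``if'' direction is immediate because $D_\phi(\bar{\bm{a}},\bar{\bm{a}})=0$ and the inner product vanishes. For ``only if'', I would rewrite
\begin{equation*}
\mathrm{RHS}-\mathrm{LHS} = \bigl(LD_\phi(\bm{a},\bar{\bm{a}}) - B_{\check{\ell}}(\bm{a},\bar{\bm{a}})\bigr) + \bigl(-B_{\hat{\ell}}(\bm{a},\bar{\bm{a}})\bigr),
\end{equation*}
where $B_{\check{\ell}}$ and $B_{\hat{\ell}}$ denote the first-order Taylor remainders of $\check{\ell}$ and $\hat{\ell}$ at $\bar{\bm{a}}$. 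Both summands are nonnegative: the first by the Jensen--descent chain just established, the second by concavity of $\hat{\ell}$. To upgrade to strict positivity when $\bm{a}\neq\bar{\bm{a}}$, I would split into two cases. When $\bm{h}^T\bm{a}=\bm{h}^T\bar{\bm{a}}$ but $\bm{a}\neq\bar{\bm{a}}$, one has $B_{\check{\ell}}=B_{\hat{\ell}}=0$ automatically, while strong convexity of $\phi$ gives $LD_\phi(\bm{a},\bar{\bm{a}})>0$, making the first summand strictly positive. When $\bm{h}^T\bm{a}\neq\bm{h}^T\bar{\bm{a}}$, strict concavity of $\hat{\ell}(x,\cdot)$ (which holds for the nontrivial concave parts listed in Table~\ref{tab:ell}) gives $-B_{\hat{\ell}}>0$. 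Either way, $\mathrm{RHS}>\mathrm{LHS}$ whenever $\bm{a}\neq\bar{\bm{a}}$.
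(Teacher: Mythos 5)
Your proof of the inequality \eqref{eq:lem-ell} is essentially identical to the paper's: the same convex--concave splitting, the same Jensen step with weights $\lambda_r$ (tight at $\bar{\bm{a}}$), the same use of the first-order convexity of $L\phi(a_r)-\lambda_r\check{\ell}(x,\tfrac{h_r}{\lambda_r}a_r)$ from \eqref{eq:cond}, and the same tangent bound on $\hat{\ell}$. You go further than the paper on the ``equality iff $\bm{a}=\bar{\bm{a}}$'' claim, which the paper's proof silently omits; your argument there is sensible but, as you half-acknowledge, the case $\bm{h}^T\bm{a}\neq\bm{h}^T\bar{\bm{a}}$ leans on strict concavity of $\hat{\ell}(x,\cdot)$, which fails for purely convex losses in Table~\ref{tab:ell} (where $\hat{\ell}\equiv 0$), so in that subcase one would instead need strictness in the Jensen or relative-smoothness step --- a gap the lemma statement itself arguably shares.
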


	\begin{proof}
		See Appendix~\ref{app:phi} in Supplementary Materials.
	\end{proof}
	
	{The upper bound constructed in Lemma~\ref{lem:phi} is reminiscent of the majorization-minimization (MM) scheme developed in for $\beta$-divergence; see, e.g., ~\cite{fevotte2011algorithms}. The difference here is the choice for $\phi(\cdot)$. The MM scheme in~\cite{fevotte2011algorithms} suggested to choose $\phi(\cdot)$ to be the convex part of $\ell(\cdot)$ (up to a constant scaling) while Lemma~\ref{lem:phi} suggests to choose $\phi(\cdot)$ to `fit' the geometry of the convex part of $\ell(x,\bm{h}^T\mathbf{a})$. Clearly, the condition in \eqref{eq:cond} is more general. The corresponding functions $\left(\phi(\cdot),\check{\ell}(x,\cdot)\right)$ are referred to as the \textit{Lipschitz-like convexity} function pair~\cite{bauschke2017descent}.}

	Lemma \ref{lem:phi} suggests that if one can find the appropriate $\phi(\cdot)$, then problem~\eqref{eq:subopt} can be approximately solved via {the following update:}
	\begin{equation}\label{eq:updateA}
		\begin{aligned}
			\bm{A}_n^{t+1}=\arg\min_{\bm{A}\in\mathcal{\bm{A}}_n}\  \langle {\hat{\bm{G}}}^t_n, \bm{A} - \bm{A}_n^t\rangle+\frac{1}{\eta_t} D_{\phi}(\bm{A} , \bm{A}_n^t),
		\end{aligned}
	\end{equation}
	where
	$D_{\phi}(\bm{A} , \bm{A}_n^t)$ is the Bregman divergence between $\bm{A}_n$ and $\bm{A}_n^t$ defined as 
	\begin{equation}\label{eq:Dphi}
		\begin{aligned}
			D_{\phi}(\bm{A} , \bm{A}_n^t)=&\sum_{i,j}\phi(\bm{A}(i,j)) - \phi(\bm{A}_n^t(i,j))\\
			&-\langle\nabla\phi(\bm{A}_n^t(i,j)), \bm{A}(i,j)-\bm{A}_n^t(i,j) \rangle.
		\end{aligned}
	\end{equation}
	For different $\ell(x,m)$ in Table~\ref{tab:ell}, the expressions of ${\hat{\bm{G}}}^t_n$ are summarized in Table~\ref{tab:G} in Supplementary Materials. 

	
	
	The step in~\eqref{eq:updateA} is an SMD update for the block variable $\bm A_n$. We refer the readers to the optimization literature in \cite{dang2015stochastic,hanzely2018fastest,zhang2018convergence,lu2019relative} for generic SMD. The term $\langle {\hat{\bm{G}}}^t_n, \bm{A} - \bm{A}_n^t\rangle$ captures the first-order information and $\frac{1}{\eta_t}D_{\phi}(\bm{A},\bm{A}_n^t)$ in~\eqref{eq:updateA} can be regarded as a geometry-aware augmentation for properly approximating the loss in~\eqref{eq:subopt}. 
	If choosing $L$ in~\eqref{eq:cond} as $\frac{1}{\eta_t}$ such that \eqref{eq:cond} holds, then \eqref{eq:updateA} minimizes an upper bound function for the loss in \eqref{eq:subopt}. In addition, considering $\phi(a)=\frac{1}{2}\|a \|^2$, then~\eqref{eq:updateA} reduces to the well-known projected SGD.
	
	\begin{rmk}\label{rmk:phi}
		For non-Euclidean losses, the choice of $\phi(\cdot)$ can heavily affect the behavior of the algorithm. 
		On the other hand, the flexibility of using different  $\phi(\cdot)$'s also entails opportunities of developing fast non-Euclidean CPD algorithms.
		An illustrative example using the generalized KL loss is shown
		in Fig.~\ref{fig:cont}. One can see that using $\phi(a)=a^2/2$---which corresponds to the gradient descent in \eqref{eq:Gest}---the progress from $a^{t}$ to $a^{t+1}$ is very small. However, by using $\phi(a)$'s that are more adapted to the cost function's geometry (reflected by the contour of the cost function), the progress in one iteration can be much larger.
	\end{rmk}

	%

	
	\subsection{Stochastic Mirror Descent for CPD}\label{subsec:prop}
	
	The SMD step in~\eqref{eq:updateA} specifies the update for one latent matrix $\bm{A}_n$. Combining it with a random selection of block $n$, the proposed algorithm is summarized in Algorithm~\ref{alg:smd}. 
	The major advantage of using random block selection is that such scheme leads to an unbiased gradient estimation~\cite{fu2020block} (up to a constant scaling), which simplifies the convergence analysis.
	
	In essence, the proposed algorithm is a block-randomized (inexact) coordinate descent method,
	which admits a similar structure as the algorithm in \cite{fu2020block} for CPD under the Euclidean loss.
	The key difference is that Algorithm~\ref{alg:smd} employs SMD to solve each subproblem inexactly, while the algorithm in \cite{fu2020block} uses proximal gradient.
	For simplicity, we name it as Stochastic Mirror descent AlgoRiThm for CPD (\texttt{SmartCPD}).

	
	
	\begin{rmk}
		If no data sampling or block sampling is considered, the full batch version of
		\texttt{SmartCPD} subsumes many
		existing non-Euclidean and Euclidean matrix/tensor decomposition algorithms as its special cases---see the algorithms in \cite{fevotte2011algorithms,phan2013fast,chi2012tensors,ermics2015link,kargas2019learning,fu2020block,hien2020algorithms}. In particular, consider the KL divergence. If one chooses $\phi(a)=-\log a$ and choose the step size $\eta_t$ properly, \texttt{SmartCPD} becomes the MM algorithm~\cite{chi2012tensors}; when one uses $\phi(a)=a\log a$, then \texttt{SmartCPD} becomes the MD algorithm developed in~\cite{kargas2019learning}.
		This connection is not surprising, since MD includes many first-order approaches as its special cases. {Nonetheless, using this connection, our convergence analysis (cf. Sec.~\ref{sec:converge}) may also shed some light on the convergence behaviors of some existing algorithms whose convergence analyses were not considered at the time (e.g., \cite{ermics2015link}).}
	\end{rmk}

	\begin{algorithm}
		\caption{Stochastic Mirror Descent (SMD) Algorithm}
		\label{alg:smd}
		\small
		\begin{algorithmic}[1]
			\Require $X,\bm{A}_1^0,\bm{A}_2^0,\ldots,\bm{A}_N^0, \phi, \{\eta_t\}_{t=0,1,\dots}$
			\For{$t=0,1,\ldots,$ until meet some convergence criteria}
			\State Uniformly sample $n\in\{1,2,\ldots,N\}$;
			\State Uniformly sample fibers $\mathcal{F}_n\subset\{1,2,\ldots,J_n\}$;
			\State Compute the sampled gradient ${\hat{\bm{G}}}_n^t$;
			\State $\bm{A}_n^{t+1}=\arg\min_{\bm{A}\in\mathcal{\bm{A}}_n} \langle {\hat{\bm{G}}}_n^t, \bm{A} - \bm{A}_n^t\rangle +\frac{1}{\eta_t}D_{\phi}(\bm{A} , \bm{A}_n^t)$;
			\State $\bm{A}_i^{t+1} = \bm{A}_i^{t},\forall i\neq n$;
			\EndFor
		\end{algorithmic}
	\end{algorithm}
	

	%

	
	\subsection{Practical Implementation}
	\label{sub:L}
	
	To implement the \texttt{SmartCPD} algorithm in practice, a number of key aspects need to be considered carefully.
	In particular, as in all stochastic algorithms, the step size-related parameter selection (i.e., $\eta_t$ and $L$ in \texttt{SmartCPD}) needs to be carefully carried out. In addition, the $\phi(\cdot)$ function should be chosen judiciously. In this subsection, we discuss these aspects in detail.

	\subsubsection{Choice of $\phi(\cdot)$} 
	As indicated by Lemma~\ref{lem:phi}, $\phi(\cdot)$ should be chosen to adapt the geometry of $\ell(\cdot)$---e.g., by setting $\phi(\cdot)$ to be the convex part of $\ell(\cdot)$ if $\ell(\cdot)$ has convex-concave structure. In addition, the update~\eqref{eq:updateA} needs to solve a subproblem which minimizes the constructed surrogate loss over constraint set $\mathcal{\bm{A}}_n$. Practically, it is desirable that this subproblem can be solved easily, preferably in closed form. Hence, the choice of $\phi(\cdot)$ should be an integrated consideration of the function geometry of $\ell$ and the constraint ${\cal A}_n$. For example, consider $\ell(x,m)=x\log m$ with $\mathcal{\bm{A}}_n$ being the probability simplex constraint, i.e., $\bm{A}_n^T\bm{1}=\bm{1},\bm{A}_n(i,j)\geq 0,\forall i,j$. Then, it is preferred to choose $\phi(a)=a\log a$ other than $\phi(a)=-\log a$ since the former admits a closed-form solution of the MD update, which is also known as the {\it exponential gradient descent} or {\it entropic descent}~\cite{beck2003mirror}. Some choices of $\phi$ for non-Euclidean $\ell$'s under various ${\cal A}_n$'s are summarized in Table~\ref{tab:solu}. 
	\begin{table}
		\centering
		\caption{Some examples for pair $(\phi,\mathcal{\bm{A}}_n)$ and closed-form solution of~\eqref{eq:updateA}.}
		\resizebox{\linewidth}{!}{	\begin{threeparttable}
				\begin{tabular}{c c c}
					\toprule
					$\phi(\cdot)$  & $\mathcal{\bm{A}}_n$ & \textbf{Closed-form Solution}\\
					\midrule
					$-\log a$ & non-negative & $\bm{A}_n^t\circledast\left[\hat{\bm{G}}_n^t\circledast (\bm{A}_n^t\oslash L) + 1 \right]$\\
					$a\log a$ &  non-negative & $\bm{A}_n^t\circledast\exp(-\hat{\bm{G}}_n^t \oslash L)$\\
					$a^c$ ($c>1$ or $c<0$) & non-negative & $\left[(\bm{A}_n^t)^{.(c-1)} - \frac{1}{c}\hat{\bm{G}}_n^t\oslash L\right]^{.\frac{1}{c-1}} $ \\
					$a\log a$ &  simplex & $ \mathrm{colnorm}(\bm{A}_n^t\circledast\exp(-\hat{\bm{G}}_n^t \oslash L))$\\
					$a^2$ & many forms &  refer~\cite{parikh2014proximal,fu2020block}\\
					\bottomrule
				\end{tabular}
				\begin{tablenotes}
					\footnotesize
					\item {\scriptsize *In this table, $\mathrm{colnorm}(\bm{A})=\bm{A}\oslash 1(\bm{A}^T1)^T$, which denotes the column-wise normalization operation.}
				\end{tablenotes}
		\end{threeparttable}}
		\label{tab:solu}
	\end{table}
	
	\subsubsection{Choice of $\eta_t$} \label{sec:choiceL}
	Since the Bregman divergence $D_{\phi}(\bm{A},\bm{A}^\prime)$ in~\eqref{eq:Dphi} is defined in an entry-wise summation form, {scaling $D_{\phi}(\bm{A},\bm{A}^\prime)$ with a constant $\frac{1}{\eta_t}$ for all entries may be less effective for approximating the loss in~\eqref{eq:subopt}.
		In this work, we propose to use coordinate-dependent step size, i.e.,
		different positive scaling factors $\bm{\Gamma}_n^t(i_n,j)$'s for different coordinates $(i_n,j)$'s, where $i_n\in[I_N]$, $j\in[R]$, and $\bm{\Gamma}_n^t\in\mathbb{R}^{I_n\times R}$. Consequently, the term $\frac{1}{\eta_t} D_{\phi}(\bm{A} , \bm{A}_n^t)$ in~\eqref{eq:updateA} becomes 
		\begin{equation}\label{eq:updateA2}
			\begin{aligned}
				\sum_{i_n,j}\bm{\Gamma}_n^t(i_n,j)D_{\phi}(\bm{A}_n(i_n,j),\bm{A}_n^t(i_n,j)).
			\end{aligned}
		\end{equation}
		As illustrated in Fig.~\ref{fig:empL}, such a coordinate-specific $\bm{\Gamma}_n^t$ empirically helps accelerate convergence. In the what follows, we provide two schemes on choosing $\bm{\Gamma}_n^t(i_n,j)$'s}.
	
	
	
	\begin{figure*}[ht]
		\centering
		\subfigure[Continuous Data Case (diff. step size  and $\phi$)]{
			\includegraphics[width=0.31\linewidth]{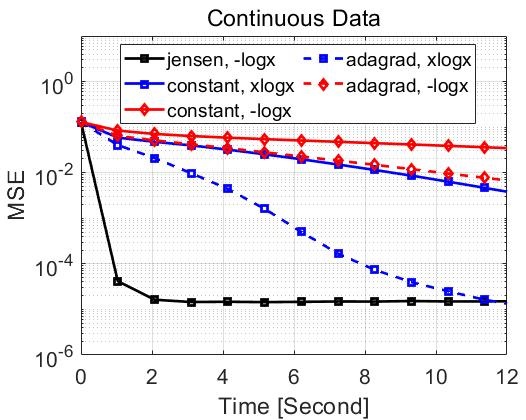}
			\label{subfig:La}}
		\subfigure[Count Data Case (diff. step size and $\phi$)]{
			\includegraphics[width=0.31\linewidth]{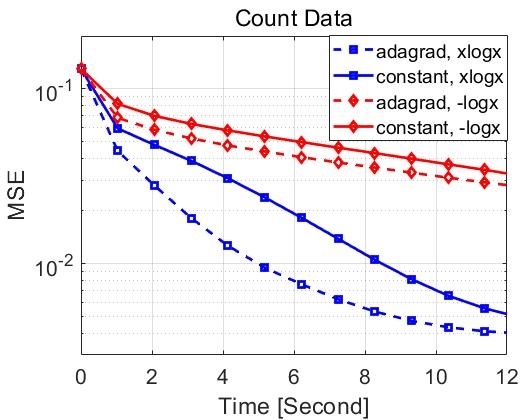}
			\label{subfig:Lb}}
		\subfigure[Continuous Data Case (diff. no. of iteration)]{
			\includegraphics[width=0.31\linewidth]{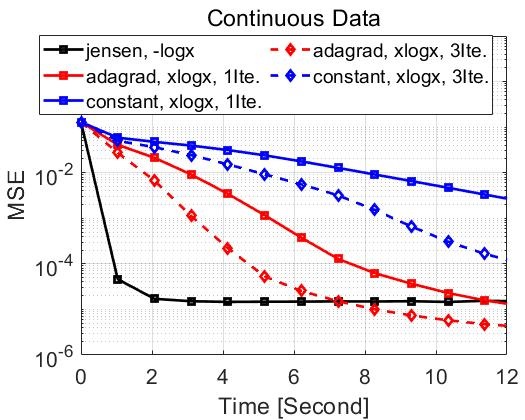}
			\label{subfig:Lc}}
		\caption{Averaged MSE over $20$ independent trials. The latent matrices are draw from i.i.d. uniform distribution between $0$ and $1$. For continuous data case (Figs.~\ref{subfig:La} and~~\ref{subfig:Lc}), Gaussian noise with SNR$=40$dB (see definition in Section~\ref{subsec:simul_continuous}) is added in the data; in count data case (Fig.~\ref{subfig:Lb}), each tensor entry is drawn from Poisson distribution with identity link function.}
		\label{fig:empL}
	\end{figure*}
	
	\noindent
	$\bullet$ {\bf Jensen's inequality based choice}: 
	{One important criteria is to choose $\bm{\Gamma}_n^t(i_n,j)$ such that the inequality in~\eqref{eq:lem-ell} holds for each coordinate $(i_n,j)$. Such a choice makes the local approximation in \eqref{eq:updateA} an upper bound of~\eqref{eq:subopt}.
		In the case where $\ell(\cdot)$ enjoys the convex-concave property and the constraint sets ${\cal A}_n$ satisfy $\mathbb{R}_+^{I_n\times R} \subseteq \mathcal{\bm{A}}_n,\forall n$, Lemma~\ref{lem:phi} implies that one can choose $\phi(\cdot)$ to be the convex part of $\ell(\cdot)$. Consequentially, $\bm{\Gamma}_n^t(i_n,j)$ can be derived based on the Jensen's inequality in each iteration. A number of examples of $\bm{\Gamma}_n^t(i_n,j)$ with respect to different $\ell(\cdot)$ are given in Table~\ref{tab:jensen}, where the derivation are based on~\eqref{eq:js} and \eqref{eq:lcvx_up} in Supplementary Materials. 
		We note that the Jensen's inequality based $\bm{\Gamma}_n^t(i_n,j)$ choice is often used 
		by popular methods such as MM~\cite{sun2016majorization} and block successive upper bound minimization (BSUM)~\cite{hong2015unified} to simplify the solution of the subproblems. 
		The difference here is our construction uses sampled fibers instead of the whole data.  Such Jensen's inequality-based step-size choice also works well with certain non-Euclidean losses under our stochastic settings, as will be seen in the experiments.
	}
	
	

	

	\noindent
	$\bullet$ {\bf Adaptive step size based choice}: The deep learning community has developed a number of effective adaptive step size scheduling methods, e.g., the Adagrad~\cite{duchi2011adaptive} and Adam type schemes~\cite{chen2018convergence}.
	These schemes typically exploit the past iterations' gradient information to scale the current sampled gradient in a coordinate-wise manner. 
	The upshot of these methods is that they often require very small amount of step size tuning, yet offer highly competitive empirical performance; also see theoretical understanding in \cite{chen2018convergence}.
	Under the Bregman divergence, the adaptive step size schemes can be used to `schedule' $\bm{\Gamma}_n^t(i_n,j)$ in each iteration. Specifically, we propose the following Adagrad step size rule for $\bm{\Gamma}_n^t(i_n,j)$,
	\begin{equation}\label{eq:szL}
		\bm{\Gamma}_n^t(i_n,j)=\sqrt{\sum_{t^\prime=0}^{t-1}[\hat{\bm{G}}_n^{t^\prime}(i_n,j)]^2+b},
	\end{equation}
	where $\hat{\bm{G}}_n^{t^\prime}(i_n,j)$ denotes the $(i_n,j)$-th entry of $\hat{\bm{G}}_n^{t^\prime}$ and $b> 0$ is a constant to ensure $\bm{\Gamma}_n^t(i_n,j)>0$. Note that the adaptive step size was considered in fiber-sampling based stochastic Euclidean CPD in \cite{fu2020block} and entry-sampling based stochastic non-Euclidean CPD \cite{hong2020generalized}, and encouraging results were observed in both cases.
	
	
	

	\begin{rmk}
		In Fig.~\ref{fig:empL}, a numerical example on a $100\times100\times100$ tensor with rank $R=10$ is presented,  where the generalized KL divergence is selected as the loss function and the averaged mean squared error (MSE, see definition in \cite{fu2020block}) of the latent matrix is used as performance metric. It can be observed that for continuous data (Fig.\ref{subfig:La}), using different $\bm{\Gamma}_n(i_n,j)>0$ for different coordinate $(i_n,j)$ exhibits much faster convergence behavior than using a constant step size for all coordinates. 
		For count data (Fig.~\ref{subfig:Lb}), we have similar observations. 
		
		Another observation from Fig.~\ref{fig:empL} is that, for count or binary data, the Jensen's inequality based step size scheme may be less competitive, especially when the data contains many zeros. 
		The zero entries may make $\bm{\Gamma}_n(i_n,j)$ very small, thereby causing numerical issues. The 'adagrad' scheme in~\eqref{eq:szL} has empirically much more stable convergence behavior under such circumstances.
	\end{rmk}

	\subsubsection{Inner iterations} 
	The proposed Algorithm~\ref{alg:smd} only contains one iteration per block.
	Nonetheless, one can also extend it to multiple SMD updates (i.e., inner iterations) per block.
	We have observed that implementing with a few more inner iterations could improve the practical convergence behavior---as shown in Fig.~\ref{subfig:Lc}. There are two ways of having multiple inner iterations. The first way is to repeat lines 3-5 in Algorithm~\ref{alg:smd} to update $\bm{A}_n$ several times before moving to the next block, where the fibers are re-sampled for each inner iteration. The second way is that, for fixed sampled fibers, repeat lines 4-5 for multiple times. Both methods work reasonably well in practice. In this work, we use latter because it is more sample efficient. 

	\section{Convergence Analysis }
	\label{sec:converge}

	In general, convergence guarantees of stochastic tensor decomposition algorithms are difficult to establish, as nonconvex constrained optimization problems are intrinsically harder to analyze under stochastic settings. 
	The non-Euclidean version is even more so, since the sampled subproblems may still be nonconvex. There has not been an analytical framework for SMD based non-Euclidean tensor/matrix factorization.
	The nonconvex block SMD in the optimization literature \cite{dang2015stochastic} is the most closely related to our algorithmic framework.
	However, the convergence analysis there does not cover the proposed algorithm. Specifically, the convergence of the algorithm in \cite{dang2015stochastic} hinges on some special {\it incremental block averaging steps}, which is not used in our algorithm.
	More importantly, the algorithm in \cite{dang2015stochastic} requires that the block-wise gradient estimation error vanishes to zero when the iterations progress. This may require implementing the algorithm with an active variance reduction technique, e.g., increasing the batch size in each iteration \cite{fu2020block},
	which is not entirely realistic, and it is somewhat against the purpose of using stochastic algorithms.

	Our goal is to offer tailored convergence analysis for \texttt{SmartCPD} that does not rely on conditions like incremental block averaging or vanishing gradient estimation error.
	We note that
	for constrained problems, convergence analysis for SMD with adaptive step size $\bm{\Gamma}_n^t$ is very challenging. Theoretical understanding of SGD with adaptive step size scheme was recently discussed in~\cite{chen2018convergence}---but the SMD case is still an open problem.
	In this work, we focus on the case where $\bm{\Gamma}_n^t(i_n,j)=\eta_t>0$ are all identical.

	Our analysis leverages the recently proposed notion of \textit{relative smoothness}~\cite{lu2018relatively}. Relative smoothness was used in
	several recent works to analyze single-block SMD type algorithms' convergence; see \cite{lu2019relative,davis2018stochastic,zhang2018convergence}. 
	Our analysis shares insights with these prior works and generalizes to cover the multi-block \texttt{SmartCPD}. Particularly, the proof takes advantage of the multilinear low-rank tensor structure and the block-randomized fiber sampling strategy to brige the gap between the single-block and multi-block cases.

	
	
	
	The objective function in \eqref{eq:optori} is denoted as $F(\bm{A})$ with $\bm{A}=(\bm{A}_1,\bm{A}_2,\ldots,\bm{A}_N)$. Then, Problem \eqref{eq:optori} can be re-expressed as: 
	\begin{equation}\label{eq:optref}
		\begin{aligned}
			\min_{\bm{A}}\ F(\bm{A}) + h(\bm{A})
		\end{aligned}
	\end{equation}
	where $h(\bm{A})=\sum_{n=1}^Nh_n(\bm{A}_n)$ and $h_n(\bm{A}_n)$ is the indicator function of set $\mathcal{A}_n$, i.e., $h_n(\bm{A}_n)=0$ if $\bm{A}_n\in\mathcal{A}_n$ and otherwise $h_n(\bm{A}_n)=\infty$. 
	
	
	
	
	Our first observation is as follows:
	\begin{lem}\label{lem:L}
		If $\{ \bm A_n^t\}$ all reside in a compact set for all $n$, there exists $0<L<\infty$ such that for any \ $\bm{A}^\prime,\bm{A}\in\mathcal{\bm{A}}$ we have,
		\begin{equation}
			\label{eq:asp_cond}
			\begin{aligned}
				|F(\bm{A}^\prime)-F(\bm{A})-\langle \nabla F(\bm{A}), \bm{A}^\prime-\bm{A}\rangle|
				\leq  L D_\phi(\bm{A}^\prime,\bm{A}),
			\end{aligned}
		\end{equation}
		where $\mathcal{\bm{A}}=\mathcal{\bm{A}}_1\times\mathcal{\bm{A}}_2\ldots\times\mathcal{\bm{A}}_N$ and $\phi(\cdot)$ can be any strongly convex function.
	\end{lem}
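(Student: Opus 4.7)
The plan is to establish relative smoothness by reducing to standard Euclidean Lipschitz smoothness on a bounded region and then comparing the Euclidean squared distance to the Bregman divergence via strong convexity of $\phi$.

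First I would exploit compactness. Since the iterates (and, implicitly, the two points $\bm{A}', \bm{A}$ at which the inequality is to hold) lie in a compact set $\mathcal{C} \subset \mathcal{A}$, and since each of the losses $\ell(x,\cdot)$ in Table~\ref{tab:ell} is twice continuously differentiable on its effective domain (the parameter $\epsilon>0$ bounds the argument away from singularities of $\log$ and negative powers), the composite objective $F(\bm{A}) = \frac{1}{|\mathcal{I}|}\sum_{\bm i} \ell(\underline{\bm X}_{\bm i}, \underline{\bm M}_{\bm i})$ is $C^2$ on $\mathcal{C}$. Hence the operator norm of $\nabla^2 F$ is uniformly bounded on any convex bounded enlargement $\mathcal{C}'$ of $\mathcal{C}$ containing the line segments between points of interest; call this bound $L_F < \infty$. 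A standard application of Taylor's theorem (or integrating $\nabla F$ along the segment from $\bm{A}$ to $\bm{A}'$) then yields the Euclidean descent lemma
\begin{equation*}
\bigl| F(\bm{A}') - F(\bm{A}) - \langle \nabla F(\bm{A}), \bm{A}' - \bm{A}\rangle \bigr| \;\le\; \tfrac{L_F}{2}\, \|\bm{A}' - \bm{A}\|_F^2.
\end{equation*}

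Second I would turn the squared Euclidean distance into the Bregman divergence $D_\phi$. Since $\phi$ is strongly convex, there exists $\mu>0$ such that
\begin{equation*}
D_\phi(\bm{A}', \bm{A}) \;\ge\; \tfrac{\mu}{2}\,\|\bm{A}' - \bm{A}\|_F^2
\end{equation*}
on the region where $\phi$ is defined. Chaining the two inequalities, one obtains
\begin{equation*}
\bigl|F(\bm{A}') - F(\bm{A}) - \langle \nabla F(\bm{A}), \bm{A}' - \bm{A}\rangle\bigr| \;\le\; \tfrac{L_F}{\mu}\, D_\phi(\bm{A}', \bm{A}),
\end{equation*}
so choosing $L = L_F/\mu$ gives \eqref{eq:asp_cond}.

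The main obstacle is justifying the finiteness of $L_F$ in the first step. The function $F$ is multilinear across the blocks $\bm{A}_1,\ldots,\bm{A}_N$, so its Hessian grows polynomially with the magnitudes of the factor entries; without compactness $L_F$ cannot exist. Compactness of $\{\bm{A}_n^t\}$ is precisely what rules this out, bounding each $|\underline{\bm{M}}_{\bm{i}}|$ and keeping the argument of $\ell(x,\cdot)$ inside a region where all the partial derivatives (e.g.\ $1/(m+\epsilon)$, $\log(m+\epsilon)$, $e^m/(1+e^m)^2$) are uniformly bounded. One subtlety is that $L_F$ depends on $\mathcal{C}$ and $\mu$ depends on the region where $\phi$ is strongly convex (for choices like $\phi(a)=-\log a$ or $a\log a$, strong convexity requires $a$ bounded away from $0$ and bounded above), but again compactness of the iterate trajectory and a slight enlargement to cover segment endpoints ensure both constants are simultaneously finite and positive, completing the argument.
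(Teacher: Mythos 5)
Your proposal is correct and follows essentially the same route as the paper: both arguments use compactness to obtain a uniform bound on the second-order behavior of $F$ (the derivatives of the multilinear map and of $\ell$ stay bounded on the compact set, thanks in part to $\epsilon>0$ keeping the losses away from their singularities) and then invoke the $\sigma$-strong convexity of $\phi$ to dominate the resulting Euclidean quadratic by $D_\phi$. The only cosmetic difference is that the paper works at the Hessian level, showing $L\nabla^2\phi(\bm{x})\pm\nabla^2 F(\bm{x})\succeq \bm{0}$ so that $L\phi\pm F$ is convex, whereas you chain the integrated inequalities (Euclidean descent lemma plus $D_\phi(\bm{A}^\prime,\bm{A})\geq \tfrac{\mu}{2}\|\bm{A}^\prime-\bm{A}\|_F^2$); both yield the claim with a finite, region-dependent $L$.
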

	\begin{proof}
		See Appendix~\ref{app:L} in the supplementary material.
	\end{proof}
	It is easy to see Lemma~\ref{lem:L} immediately implies inequality~\eqref{eq:lem-ell} in Lemma~\ref{lem:phi}. {Lemma~\ref{lem:L} is more general than inequality~\eqref{eq:lem-ell} since it has no restriction on the choice of $\phi(\cdot)$. This brings much convenience for the practical usage of Algorithm~\ref{alg:smd}. More important, Lemma~\ref{lem:L} holds for the all the optimization variables (as opposed to a block as in Lemma~\ref{lem:phi}) and indicates both lower and upper bound functions. This is important for establishing convergence guarantees.} Also, we note that inequality~\eqref{eq:asp_cond} is a generalization of standard Lipschitz-continuous gradient property of $F(\bm{A})$ under Bregman divergence, which is also known as \textit{relative smoothness}~\cite{lu2018relatively}. We  remark that the assumption that $\{ \bm A_n^t\}$ live in a compact set is not easy to check in advance, yet it is not hard to satisfy. When the constraints ${\cal A}_n$ are compact sets, then this assumption is naturally satisfied per the defined update in~\eqref{eq:updateA}. Even if ${\cal A}_n$ is not bounded, unbounded iterates are rarely (if not never) observed in our extensive numerical experiments.

	Based on Lemma~\ref{lem:L}, convergence of the proposed Algorithm~\ref{alg:smd} can be guaranteed for any strongly convex function $\phi(\cdot)$ (see Theorem~\ref{thm}). 
	{{Our convergence analysis starts by using the following ``reference'' function in each iteration:
			\begin{equation*}
				\mathcal{L}(\bm{A};\bm{A}^\prime):=F(\bm{A})+h(\bm{A})+\frac{1}{2\lambda}D_\phi(\bm{A},\bm{A}^\prime)
			\end{equation*}
			where {{$0<\lambda<\frac{1}{2L}$}} is a constant such that $\mathcal{L}(\bm{A};\bm{A}^\prime)$ is strongly convex in $\bm{A}$. The minimal value and minimizer of $\mathcal{L}(\bm{A};\bm{A}^\prime)$ for a given $\bm{A}^\prime$, which are also known as \emph{Bregman Moreau envelope} $\mathcal{M}(\bm{A}^\prime)$ and \emph{Bregman proximal mapping} $\mathcal{T}(\bm{A}^\prime)$, are defined as 
			\begin{equation*}
				\begin{aligned}
					\mathcal{M}(\bm{A}^\prime)=\min_{\bm{A}}\ \mathcal{L}(\bm{A};\bm{A}^\prime),\
					\mathcal{T}(\bm{A}^\prime)=\arg\min_{\bm{A}}\ \mathcal{L}(\bm{A};\bm{A}^\prime).
				\end{aligned}
			\end{equation*}
			Denote $\hat{\bm{A}}=\mathcal{T}(\bm{A})$ for a given $\bm{A}$,  we use the following lemma to show that $D_\phi(\hat{\bm{A}},\bm{A})$ is a measure for attaining stationary points of Problem~\eqref{eq:optref}.
			
			\begin{lem}\label{lem:station}
				$\bm{A}$ is a stationary point of Problem~\eqref{eq:optref}, i.e.,  $\bm{0}\in \nabla F(\bm{A})+\partial h(\bm{A})$, where $\partial h(\bm{A})$ denotes the subgradient, if and only if $D_{\phi}(\hat{\bm{A}},\bm{A})=0$.
			\end{lem}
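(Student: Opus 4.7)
The plan is to prove the equivalence in two directions, using strong convexity of $\phi$ together with the first-order optimality conditions of the Bregman proximal mapping $\mathcal{T}(\bm{A})$. The starting point is that the reference function $\mathcal{L}(\cdot;\bm{A})$ is itself strongly convex in its first argument whenever $\lambda<\tfrac{1}{2L}$: the relative smoothness in Lemma~\ref{lem:L} implies $F(\bm{A}')\geq F(\bm{A})+\langle\nabla F(\bm{A}),\bm{A}'-\bm{A}\rangle-LD_\phi(\bm{A}',\bm{A})$, and adding $\tfrac{1}{2\lambda}D_\phi(\bm{A}',\bm{A})$ with coefficient strictly exceeding $L$ yields strict positivity via the strong convexity of $\phi$. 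This guarantees that $\hat{\bm{A}}=\mathcal{T}(\bm{A})$ is the \emph{unique} minimizer, which is crucial for both directions.

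For the ``only if'' direction, I would start from the assumption $\bm{0}\in\nabla F(\bm{A})+\partial h(\bm{A})$ and verify directly that $\bm{A}$ itself satisfies the first-order optimality condition of $\min_{\bm{A}'}\mathcal{L}(\bm{A}';\bm{A})$. Indeed, the gradient of $D_\phi(\cdot,\bm{A})$ with respect to its first argument is $\nabla\phi(\cdot)-\nabla\phi(\bm{A})$, which vanishes at $\bm{A}'=\bm{A}$, so the optimality condition at $\bm{A}'=\bm{A}$ reduces exactly to $\bm{0}\in\nabla F(\bm{A})+\partial h(\bm{A})$. By uniqueness of the minimizer we conclude $\hat{\bm{A}}=\bm{A}$, and hence $D_\phi(\hat{\bm{A}},\bm{A})=D_\phi(\bm{A},\bm{A})=0$.

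For the ``if'' direction, I would use the strong convexity of $\phi$ to note that $D_\phi(\bm{A}',\bm{A})\geq\tfrac{\mu}{2}\|\bm{A}'-\bm{A}\|_F^2$ for some $\mu>0$, so $D_\phi(\hat{\bm{A}},\bm{A})=0$ forces $\hat{\bm{A}}=\bm{A}$. Writing down the first-order optimality condition satisfied by $\hat{\bm{A}}$ as the minimizer of $\mathcal{L}(\cdot;\bm{A})$ gives
\begin{equation*}
    \bm{0}\in\nabla F(\hat{\bm{A}})+\partial h(\hat{\bm{A}})+\tfrac{1}{2\lambda}\bigl(\nabla\phi(\hat{\bm{A}})-\nabla\phi(\bm{A})\bigr),
\end{equation*}
and substituting $\hat{\bm{A}}=\bm{A}$ cancels the Bregman term and recovers stationarity of $\bm{A}$ for Problem~\eqref{eq:optref}.

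The main obstacle I anticipate is not the logical structure, which is essentially a textbook argument for proximal operators, but making sure all regularity prerequisites line up: in particular that $\mathcal{L}(\cdot;\bm{A})$ admits a unique minimizer so that $\mathcal{T}(\bm{A})$ is well-defined, and that the subdifferential calculus rule $\partial(F+h+\tfrac{1}{2\lambda}D_\phi(\cdot,\bm{A}))=\nabla F(\cdot)+\partial h(\cdot)+\nabla_1 D_\phi(\cdot,\bm{A})$ applies (which needs $h$ to be proper convex lower semicontinuous, true here since each $h_n$ is the indicator of a set we implicitly treat as closed and convex). Once these points are secured, the equivalence follows directly from the two-sided argument above with no delicate estimates.
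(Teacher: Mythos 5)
Your proposal is correct, and one of its two directions departs from the paper's argument in a genuine way. The ``if'' direction ($D_{\phi}(\hat{\bm{A}},\bm{A})=0\Rightarrow$ stationarity) is essentially identical to the paper's: write the first-order optimality condition satisfied by $\hat{\bm{A}}=\mathcal{T}(\bm{A})$, use $\sigma$-strong convexity of $\phi$ so that $D_\phi(\hat{\bm{A}},\bm{A})=0$ forces $\hat{\bm{A}}=\bm{A}$, and note that the Bregman correction $\tfrac{1}{2\lambda}(\nabla\phi(\hat{\bm{A}})-\nabla\phi(\bm{A}))$ then cancels. The ``only if'' direction is where you diverge. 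The paper argues purely with function-value inequalities: it combines the subgradient inequality for $h$ at the stationary point, the lower bound $F(\hat{\bm{A}})-F(\bm{A})-\langle\nabla F(\bm{A}),\hat{\bm{A}}-\bm{A}\rangle\geq -L\,D_\phi(\hat{\bm{A}},\bm{A})$ supplied by Lemma~\ref{lem:L}, and the optimality of $\hat{\bm{A}}$ for $\mathcal{L}(\cdot;\bm{A})$, to squeeze out $0\geq(\tfrac{1}{2\lambda}-L)D_\phi(\hat{\bm{A}},\bm{A})$ and conclude via $\lambda<\tfrac{1}{2L}$. You instead verify that $\bm{A}$ itself satisfies the first-order condition of $\min_{\bm{A}'}\mathcal{L}(\bm{A}';\bm{A})$ (the Bregman gradient vanishes at $\bm{A}'=\bm{A}$) and invoke uniqueness of the minimizer of the strongly convex $\mathcal{L}(\cdot;\bm{A})$ to get $\hat{\bm{A}}=\bm{A}$ outright. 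Both are valid under the paper's standing assumptions ($\mathcal{A}_n$ closed and convex so $h$ is proper convex l.s.c., and $\lambda<\tfrac{1}{2L}$ so $\mathcal{L}$ is strongly convex, as the paper asserts when introducing $\mathcal{L}$). Your route is the standard fixed-point characterization of a Bregman proximal map; it is shorter, yields the stronger conclusion $\hat{\bm{A}}=\bm{A}$ in that direction, but leans on the sufficiency of first-order conditions for the convex surrogate and on the subdifferential sum rule, which you correctly flag and which holds here because the nonsmooth summand $h$ is convex while the remainder is differentiable. The paper's route trades those prerequisites for an explicit inequality chain that reuses Lemma~\ref{lem:L} and never needs convexity-based sufficiency of stationarity.
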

			\begin{proof}
				see Appendix~\ref{app:sta} in the supplementary material.
			\end{proof}
			
			The key step for analyzing Algorithm~\ref{alg:smd} is to quantify its one iteration behavior:
			\begin{lem}\label{lem:descent}
				Suppose that $\{ \bm A_n^t\}$ all reside in a compact set for all $n$, and that $\phi$ is a $\sigma$-strongly convex function. Let $\bm{A}^{t+1}$ be generated by Algorithm~\ref{alg:smd} at iteration $t$, then we have 
				\begin{equation}\label{eq:desc_lem}
					\begin{aligned}
						&\mathbb{E}\left[ \mathcal{M}(\bm{A}^{t+1}) \right]\\
						&\leq \mathcal{M}(\bm{A}^t)-c_1\eta_t D_\phi(\bm{\hat{A}}^t,\bm{A}^t)+c_2\eta_t^2\mathbb{E}\left[\|\hat{\bm{G}_n}^t\|^2 \right],
					\end{aligned}
				\end{equation}
				where $c_1=\frac{(1-2\lambda L)}{4\lambda^2N}$ and $c_2=\frac{1}{4\lambda\sigma}$. The expectation in \eqref{eq:desc_lem} is taken over the random variable responsible for fiber sampling in iteration $t$.
			\end{lem}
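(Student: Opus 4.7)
The plan is to combine the variational definition of the Bregman Moreau envelope with the three-point identity for Bregman divergences, so that the single-block SMD update can be disentangled from the multi-block geometry. I would begin by using the minimality of $\hat{\bm{A}}^t = \mathcal{T}(\bm{A}^t)$, which gives $\mathcal{M}(\bm{A}^{t+1}) \leq \mathcal{L}(\hat{\bm{A}}^t; \bm{A}^{t+1})$ and $\mathcal{M}(\bm{A}^t) = \mathcal{L}(\hat{\bm{A}}^t; \bm{A}^t)$; subtracting cancels the $F(\hat{\bm{A}}^t)+h(\hat{\bm{A}}^t)$ terms and leaves $\mathcal{M}(\bm{A}^{t+1}) - \mathcal{M}(\bm{A}^t) \leq \frac{1}{2\lambda}\bigl[D_\phi(\hat{\bm{A}}^t, \bm{A}^{t+1}) - D_\phi(\hat{\bm{A}}^t, \bm{A}^t)\bigr]$. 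Since only the $n$-th block is updated in Algorithm~\ref{alg:smd}, this difference collapses to the block-$n$ piece, which the three-point identity rewrites as $-D_\phi(\bm{A}_n^{t+1}, \bm{A}_n^t) + \langle \nabla\phi(\bm{A}_n^t) - \nabla\phi(\bm{A}_n^{t+1}), \hat{\bm{A}}_n^t - \bm{A}_n^{t+1}\rangle$.

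Next I would invoke the variational-inequality optimality of the SMD subproblem at line 5, tested against the feasible point $\hat{\bm{A}}_n^t \in \mathcal{A}_n$; this produces $\langle \nabla\phi(\bm{A}_n^t) - \nabla\phi(\bm{A}_n^{t+1}), \hat{\bm{A}}_n^t - \bm{A}_n^{t+1}\rangle \leq \eta_t\langle \hat{\bm{G}}_n^t, \hat{\bm{A}}_n^t - \bm{A}_n^{t+1}\rangle$. Writing $\hat{\bm{A}}_n^t - \bm{A}_n^{t+1} = (\hat{\bm{A}}_n^t - \bm{A}_n^t) + (\bm{A}_n^t - \bm{A}_n^{t+1})$ and bounding the second piece by Young's inequality, $\eta_t\langle \hat{\bm{G}}_n^t, \bm{A}_n^t - \bm{A}_n^{t+1}\rangle \leq \tfrac{\eta_t^2}{2\sigma}\|\hat{\bm{G}}_n^t\|^2 + \tfrac{\sigma}{2}\|\bm{A}_n^{t+1}-\bm{A}_n^t\|^2$, the resulting squared-norm is absorbed by the $\sigma$-strong convexity lower bound $D_\phi(\bm{A}_n^{t+1}, \bm{A}_n^t) \geq \tfrac{\sigma}{2}\|\bm{A}_n^{t+1}-\bm{A}_n^t\|^2$. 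After this cancellation only $\tfrac{\eta_t}{2\lambda}\langle \hat{\bm{G}}_n^t, \hat{\bm{A}}_n^t - \bm{A}_n^t\rangle$ and $\tfrac{\eta_t^2}{4\lambda\sigma}\|\hat{\bm{G}}_n^t\|^2$ survive, already identifying the constant $c_2 = \tfrac{1}{4\lambda\sigma}$.

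To finish, I would take conditional expectation at iteration $t$: the unbiasedness of fiber sampling together with uniform block selection converts $\mathbb{E}[\langle \hat{\bm{G}}_n^t, \hat{\bm{A}}_n^t - \bm{A}_n^t\rangle]$ into $\tfrac{1}{N}\langle \nabla F(\bm{A}^t), \hat{\bm{A}}^t - \bm{A}^t\rangle$, which is where the $1/N$ factor enters. To turn this inner product into a negative multiple of $D_\phi(\hat{\bm{A}}^t, \bm{A}^t)$, I would chain two ingredients: the relative-smoothness lower bound from Lemma~\ref{lem:L}, $F(\hat{\bm{A}}^t) \geq F(\bm{A}^t) + \langle \nabla F(\bm{A}^t), \hat{\bm{A}}^t - \bm{A}^t\rangle - L D_\phi(\hat{\bm{A}}^t, \bm{A}^t)$, and the trivial envelope inequality $\mathcal{M}(\bm{A}^t) \leq \mathcal{L}(\bm{A}^t;\bm{A}^t) = F(\bm{A}^t)$, which (using $h(\hat{\bm{A}}^t)=0$) gives $F(\hat{\bm{A}}^t) - F(\bm{A}^t) \leq -\tfrac{1}{2\lambda}D_\phi(\hat{\bm{A}}^t, \bm{A}^t)$. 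Combining the two bounds yields $\langle \nabla F(\bm{A}^t), \hat{\bm{A}}^t - \bm{A}^t\rangle \leq -\tfrac{1-2\lambda L}{2\lambda}D_\phi(\hat{\bm{A}}^t, \bm{A}^t)$, where the assumption $\lambda < 1/(2L)$ is exactly what is needed for positivity, and scaling by $\tfrac{\eta_t}{2\lambda N}$ delivers the coefficient $c_1 = \tfrac{1-2\lambda L}{4\lambda^2 N}$.

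The main obstacle is the middle paragraph: choosing the right test point in the constrained SMD optimality, the right additive split of $\hat{\bm{A}}_n^t - \bm{A}_n^{t+1}$, and the right Young's-inequality weight so that the leftover quadratic term is \emph{exactly} what the strong convexity of $\phi$ can cancel, with no spurious residue involving subgradients of the indicator $h_n$. Once that cancellation is set up cleanly, the remaining work is careful bookkeeping of the two layers of randomness (block $n$ and fibers $\mathcal{F}_n$) plus the relative-smoothness step that converts the gradient inner product into the $(1-2\lambda L)\,D_\phi(\hat{\bm{A}}^t, \bm{A}^t)$ descent term.
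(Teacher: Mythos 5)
Your proposal matches the paper's proof essentially step for step: bound $\mathcal{M}(\bm{A}^{t+1})$ by evaluating $\mathcal{L}(\cdot;\bm{A}^{t+1})$ at $\hat{\bm{A}}^t$, apply the three-point identity, invoke the subproblem's variational optimality (with the indicator's subgradient term killed by feasibility of $\hat{\bm{A}}_n^t$), split and apply Young's inequality so strong convexity of $\phi$ absorbs the quadratic, then use unbiased fiber/block sampling and the combination of relative smoothness with the optimality of $\hat{\bm{A}}^t$ to produce the $-(\tfrac{1}{2\lambda}-L)D_\phi(\hat{\bm{A}}^t,\bm{A}^t)$ descent term, yielding the same constants $c_1$ and $c_2$. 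This is correct and is the same argument as in Appendix~\ref{app:desc} (your handling of the Bregman asymmetry when cancelling $-D_\phi(\bm{A}^{t+1},\bm{A}^t)$ against the Young's-inequality residue is in fact slightly more careful than the paper's).
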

			\begin{proof}
				See Appendix~\ref{app:desc} in the supplementary material.
			\end{proof}
			
			The lemma implies that, in expectation, if the step size $\eta_t$ is properly chosen, then $\mathcal{M}(\bm{x}^{t})$ decreases after every iteration. 
			Using Lemma~\ref{lem:descent} as a stepping stone, we show our main result:
			
			
			\begin{thm}\label{thm}
				Suppose that the assumptions in Lemma~\ref{lem:descent} hold.
				
				\noindent
				$1)$ for diminishing step size $\eta_t$ satisfying $\sum_{t=1}^{\infty}\eta_t=\infty$ and $\sum_{t=1}^{\infty}\eta_t<\infty$, Algorithm~\ref{alg:smd} converge to a stationary point in expectation, $$\liminf_{t\rightarrow\infty}\mathbb{E}\left[D_\phi(\hat{\bm{A}}^t,\bm{A}^t) \right]=0;$$
				
				\noindent
				$2)$ for a constant step size $\eta_t=\frac{1}{\sqrt{T}}$, $\frac{1}{\sqrt{T}}$-stationary solution of Problem~\eqref{eq:optref} can be obtained by Algorithm~\ref{alg:smd} within $T$ iterations, $$\min_{1\leq t\leq T}\mathbb{E}\left[D_{\phi}(\hat{\bm{A}}^t,\bm{A}^t) \right]\leq C/\sqrt{T},$$ where $C>0$ is a constant. The expectations are taken over all random variables for fiber and block sampling in all iterations jointly.
			\end{thm}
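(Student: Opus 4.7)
The plan is to derive the theorem directly from the one-step descent estimate in Lemma~\ref{lem:descent} by a standard summation/telescoping argument, combined with Lemma~\ref{lem:station} which identifies $D_\phi(\hat{\bm A},\bm A)=0$ with stationarity. The descent inequality already isolates the progress term $\eta_t D_\phi(\hat{\bm A}^t,\bm A^t)$ that we want to control, so the task reduces to managing the noise term $\eta_t^2\mathbb{E}[\|\hat{\bm G}_n^t\|^2]$ and exploiting that $\mathcal{M}(\bm A^t)$ is bounded below.

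First, I would take the total expectation of \eqref{eq:desc_lem} over all sampling randomness up through iteration $t$ (the expectation in Lemma~\ref{lem:descent} is conditional on the history), yielding
\begin{equation*}
\mathbb{E}[\mathcal{M}(\bm A^{t+1})]\le \mathbb{E}[\mathcal{M}(\bm A^{t})]-c_1\eta_t\mathbb{E}[D_\phi(\hat{\bm A}^t,\bm A^t)]+c_2\eta_t^2\mathbb{E}[\|\hat{\bm G}_n^t\|^2].
\end{equation*}
Since $F$ is continuous and $\{\bm A^t\}$ stays in a compact set (the hypothesis inherited from Lemma~\ref{lem:descent}), $\mathcal{M}$ is bounded below by some $\mathcal{M}^\star$, and the sampled gradient is uniformly bounded, $\mathbb{E}[\|\hat{\bm G}_n^t\|^2]\le G^2$ for a constant $G$ depending on the compact set, on $\ell$ and on the sampling scheme. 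Telescoping from $t=1$ to $T$ gives
\begin{equation*}
c_1\sum_{t=1}^{T}\eta_t\,\mathbb{E}[D_\phi(\hat{\bm A}^t,\bm A^t)]\le \mathcal{M}(\bm A^1)-\mathcal{M}^\star+c_2 G^2\sum_{t=1}^{T}\eta_t^2 .
\end{equation*}

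For part~1, under $\sum_t\eta_t=\infty$ and $\sum_t\eta_t^2<\infty$, the right-hand side remains bounded as $T\to\infty$, so $\sum_{t=1}^{\infty}\eta_t\,\mathbb{E}[D_\phi(\hat{\bm A}^t,\bm A^t)]<\infty$. Because $\sum_t\eta_t=\infty$ and the summands are nonnegative, this forces $\liminf_{t\to\infty}\mathbb{E}[D_\phi(\hat{\bm A}^t,\bm A^t)]=0$, which is the claimed stationarity statement by Lemma~\ref{lem:station}. For part~2, plugging the constant step size $\eta_t=1/\sqrt{T}$ into the telescoped bound,
\begin{equation*}
\min_{1\le t\le T}\mathbb{E}[D_\phi(\hat{\bm A}^t,\bm A^t)]\le\frac{1}{T}\sum_{t=1}^{T}\mathbb{E}[D_\phi(\hat{\bm A}^t,\bm A^t)]\le\frac{\mathcal{M}(\bm A^1)-\mathcal{M}^\star+c_2G^2}{c_1\sqrt{T}},
\end{equation*}
which is the desired $\mathcal{O}(1/\sqrt{T})$ rate with $C=(\mathcal{M}(\bm A^1)-\mathcal{M}^\star+c_2 G^2)/c_1$.

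The hard part is less the telescoping itself than justifying the two ingredients that make it work: (i) a uniform second-moment bound on the sampled block gradient $\hat{\bm G}_n^t$, and (ii) a uniform lower bound on $\mathcal{M}$. Both hinge on the compact-iterate hypothesis, which is built into Lemma~\ref{lem:descent} and which holds automatically when $\mathcal{A}_n$ is bounded (and otherwise was argued empirically in the paper). I would also note that although the randomness in iteration $t$ spans both the uniform block index $n$ and the fiber set $\mathcal{F}_n$, this is already absorbed into the conditional expectation of Lemma~\ref{lem:descent}; the tower property then produces the joint expectation asserted in the theorem statement, so no additional probabilistic machinery is required.
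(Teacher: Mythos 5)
Your proposal is correct and follows essentially the same route as the paper: take total expectations, telescope the descent inequality of Lemma~\ref{lem:descent}, lower-bound $\mathcal{M}$ (the paper uses $F(\bm{A}^*)\le\mathbb{E}[\mathcal{M}(\bm{A}^T)]$, you use a generic $\mathcal{M}^\star$, which is the same idea), invoke a uniform second-moment bound $G$ on the sampled gradient, and then read off part~1 from $\sum_t\eta_t=\infty$, $\sum_t\eta_t^2<\infty$ and part~2 from the min-versus-average bound with $\eta_t=1/\sqrt{T}$. The only difference is cosmetic (your explicit constant $C$ matches the paper's up to the identification of $\mathcal{M}^\star$ with $F(\bm{A}^*)$), and you correctly read the theorem's step-size condition as $\sum_t\eta_t^2<\infty$ despite the typo in the statement.
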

	}}
	
	\begin{proof}
		See Appendix~\ref{app:thm} in the supplementary material.
	\end{proof}
	
	\section{Simulations}
	\label{sec:simul}
	We use synthetic and real data experiments to showcase the effectiveness of \texttt{SmartCPD}.
	\subsection{Synthetic Data}
	We evaluate the numerical performance of the proposed SMD algorithm on different types of synthetic data, i.e., continuous, count, and binary data, under various non-Euclidean losses in Table~\ref{tab:ell} ($\epsilon$ in Table~\ref{tab:ell} is set to be $10^{-9}$).

	\subsubsection{Baselines and Performance Metric}
	We use two recent competitive algorithms as our main baselines. The first one is an entry-sampling based stochastic non-Euclidean CPD optimization algorithm, namely, \texttt{GCP-OPT}, proposed in~\cite{kolda2020stochastic}. The second baseline is the generalized Gauss-Newton (\texttt{GGN}) method for non-Euclidean CPD  ~\cite{vandecappelle2020second}. The \texttt{GCP-OPT} method is implemented in Tensor Toolbox~\cite{bader2017matlab} and `Adam' is selected as the optimization solver. 
	The \texttt{GGN} method~\cite{vandecappelle2020second} is implemented by \texttt{nlsb$\_$gndl} and the `preconditioner' is set as `block-Jacobi'; see more details in the Tensorlab toolbox~\cite{vervliet2016tensorlab}. For the proposed \texttt{SmartCPD}, different choices of step size $\bm{\Gamma}_n^t$ and function $\phi(\cdot)$ will be specified according to data type.
	
	
	We generate third-order tensors with different sizes and ranks, each dimension of the tensor keeps the same as $I=I_1=I_2=I_3$. For the two stochastic algorithms, $2\times R$ fibers ($2I_nR$ entry samples) are used per iteration.
	The MSE of the latent matrices is used as performance metric \cite{fu2020block}.

	\subsubsection{Count Data}\label{subsec:simul_count}
	We first evaluate the performance on count data tensors. For the proposed \texttt{SmartCPD}, $\bm{\Gamma}_n^t$ is scheduled following \eqref{eq:szL}. We use $\phi(a)=a\log a$ in the SMD. The constant $b$ in \eqref{eq:szL} is $10^{-5}$ for all simulation trials.

	We use the loss function $\ell(x,m)=m-x\log(m+\epsilon)$ as in \cite{chi2012tensors} and draw the latent matrices $\bm{A}_1,\bm{A}_2$, and $\bm{A}_3$ from i.i.d. uniform distribution between $0$ and $A_{\textrm{max}}$, where $A_{\textrm{max}}=0.5$ is a positive constant. For each column of the latent matrices, $5\%$ elements are randomly selected and replaced by i.i.d. samples from uniform distribution between $0$ and $10A_{\textrm{max}}$. This way, the elements have more diverse scales. The observed count data tensor $\underline{\bm{X}}$ is generated follow the Poisson distribution, i.e., $\underline{\bm{X}}_{\bm{i}}\sim \textrm{Poisson}(\underline{\bm{M}}_{\bm{i}})$. We set $\mathcal{\bm{A}}_n$ for all $n$ as the nonnegative orthant.

	Fig.~\ref{fig:count_time}  shows the performance of the algorithms under $I=100$ and $R=20$, where the solid lines correspond to the average MSEs and dashed lines are for individual trials. One can see that \texttt{SmartCPD} improves the MSE quickly in all trials. On average, it brings the MSE below $10^{-2}$ using less than 10 seconds, while the best baseline, i.e., \texttt{GCP-OPT} takes at least around 40 seconds to reach the same MSE level. 
	Fig.~\ref{fig:count_samp} shows the MSEs of the algorithms against the number of sampled data. Clearly, \texttt{SmartCPD} enjoys the lowest sample complexity in the case under test. In many trials, it uses at least one order of magnitude fewer data entries to reach MSE$=10^{-2}$.
	Fig.~\ref{fig:count_time_100} shows a closer look at the runtime breakouts of the two stochastic algorithms. It can be found that both use similar time for sampling but \texttt{GCP-OPT} requires slightly more time for computing the gradient as well as updating all latent matrices.

	\begin{figure}[ht]
		\centering
		\subfigure[MSE over time.]{
			\includegraphics[width=0.48\linewidth]{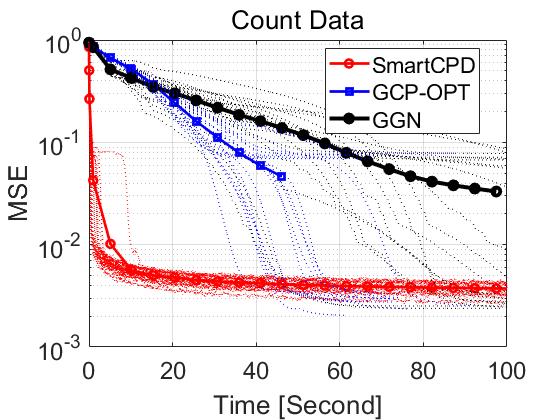}\label{fig:count_time}}
		\subfigure[MSE over no. of samples.]{
			\includegraphics[width=0.48\linewidth]{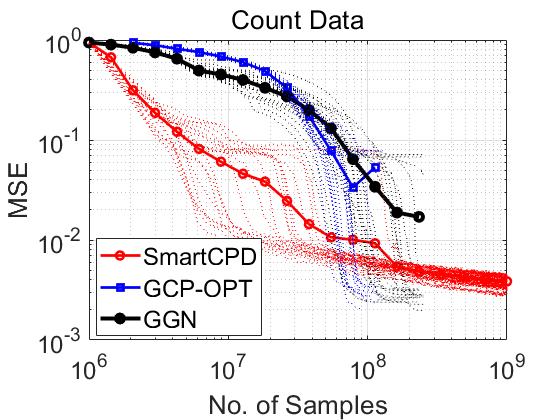}\label{fig:count_samp}}
		\caption{MSE of count tensor ($I=100$ and $R=20$)}
	\end{figure}


	In Table.~\ref{tab:simul_contMSE}, the achieved MSEs after $100$ seconds when $I=100$ and  $R=10,20,50$ are compared. \texttt{SmartCPD} outputs MSEs that are smaller than $10^{-2}$ for all cases, while \texttt{GGN} and \texttt{GCP-OPT} could not reach MSE$\leq 0.1$ in 100 seconds when $R=50$, in terms of both mean and median.

	\begin{figure}[ht]
		\centering
		\includegraphics[width=0.9\linewidth]{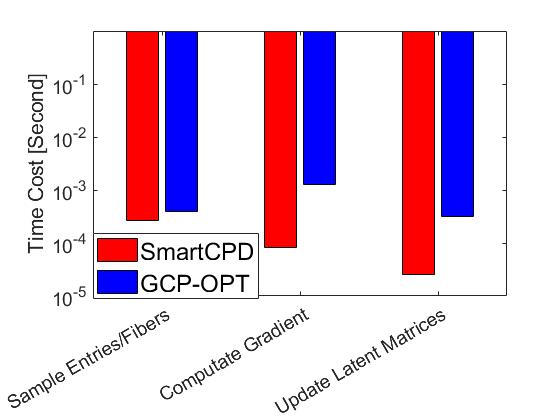}
		\caption{Detailed time cost per iteration of \texttt{SmartCPD} and \texttt{GCP-OPT} algorithms ($I=100$ and $R=20$).}
		\label{fig:count_time_100}
	\end{figure}

	
	\begin{table}[ht]
		\centering
		\small
		\caption{MSE after $100$ seconds ($I=100$ and diff. $R$).}
		\begin{tabular}{c c c c c c}
			\toprule
			Rank $R$                          &   &  $10$&  $20$&  $50$\\ \toprule
			\multirow{2}{*}{\textbf{SmartCPD}}         &  mean  &  2.7E-3&  3.7E-3&  6.6E-3 \\ 
			&  median&  2.7E-3&  3.7E-3&  6.5E-3 \\ 
			\hline
			\multirow{2}{*}{\textbf{GCP-OPT}}           &  mean  &  9.6E-3&  0.028&  0.408\\ 
			&  median&  2.3E-3&  3.3E-3&  0.406  \\ 
			\hline
			\multirow{2}{*}{\textbf{GGN}}    &  mean  &  0.023&    0.031&   0.288\\ 
			&  median&  1.9E-3&    6.5E-3&   0.281\\ 
			\bottomrule
		\end{tabular}
		\label{tab:simul_contMSE}
	\end{table}

	\subsubsection{Binary Data}\label{subsec:simul_binary}
	Next, we evaluate the performance on binary tensors. 
	{We use the loss function that is related to the MLE of Bernoulli tensors, i.e., $\ell(x,m)=\log(m+1)-x\log(m+\epsilon)$ (cf. Table~\ref{tab:ell}).
		Note that \texttt{GGN} is developed for $\beta$-divergence, and thus cannot be used for this loss function.  Hence, we only use \texttt{GCP-OPT} to benchmark our algorithm in the binary case. We set $I=100$ and $R=20$ as before. Each entry of the binary tensor is generated from the Bernoulli distribution, i.e., $\underline{\bm{X}}_{\bm{i}}=1$ with probability ${\underline{\bm{M}}_{\bm{i}}}/{(1+\underline{\bm{M}}_{\bm{i}})}$. The latent matrices are generated as in Sec.~\ref{subsec:simul_count}. We set the constant $A_{\textrm{max}}\in\{0.2,0.3,0.4,0.5\}$ to generate four different binary data tensors, with about $5\%,15\%,28\%,40\%$ nonzero entries, respectively. The sets $\mathcal{\bm{A}}_n$ for all $n$ are set to be the nonnegative orthant.}

	Fig.~\ref{fig:binaryall} shows the MSEs against time of $20$ independent trials when the tensor has $15\%$ nonzero entries.
	Similar as before, \texttt{SmartCPD} requires much shorter time to achieve MSE$\leq 10^{-2}$. In addition, the histograms of MSEs after $60$ seconds are presented in Fig.~\ref{fig:binary_hist} in the supplementary material. One can see from there that that \texttt{SmartCPD} consistently outperforms the baseline, and the advantage is more articulated when the data becomes denser.

	\begin{figure}[ht]
		\centering
		{
			\includegraphics[width=0.9\linewidth]{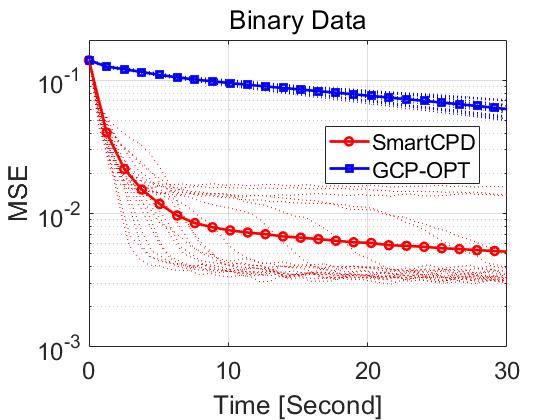}}
		\caption{MSE of binary tensor ($I=100$ and $R=20$).}
		\label{fig:binaryall}
	\end{figure}

	\subsubsection{Continuous Data}\label{subsec:simul_continuous}
	We also evaluate the performance of \texttt{SmartCPD} on continuous tensors under $\beta$-divergence. We consider the multiplicative Gamma noise, i.e., 
	$$\texttt{Gamma:}\ \underline{\bm{X}}_{\bm{i}}=\underline{\bm{M}}_{\bm{i}}\cdot\underline{\bm{N}}_{\bm{i}},$$ 
	in which
	$\underline{\bm{N}}_{\bm{i}}$ is i.i.d. Gamma noise. 
	The SNR for Gamma noise is defined the same as in~\cite{vandecappelle2020second}.
	The latent matrices $\bm{A}_1,\bm{A}_2$, and $\bm{A}_3$ are drawn from the i.i.d. uniform distribution between $0$ and $1$ and $\mathcal{\bm{A}}_n=\mathbb{R}^{I\times R}_+$. Since the $\beta$-divergence loss functions satisfy the convex-concave property, function $\phi$ and $\bm{\Gamma}_n^t$ is used based on the Jensen's inequality for \texttt{SmartCPD}; see details in Table~\ref{tab:jensen} of the supplementary material. 
	We find such setup particularly efficient for dealing with dense and continuous tensors under $\beta$-divergence. 
	
	
	Fig.~\ref{fig:beta_gamma} shows simulations where
	the tensor has a size of $300\times300\times300$ and $R=20$. The average MSEs over $20$ independent trials for $\beta=0$ with SNR$=20$dB. Clearly, the two stochastic algorithms, i.e., \texttt{SmartCPD} and \texttt{GCP-OPT} have faster convergence than \texttt{GGN}. Further, \texttt{SmartCPD} is even faster than \texttt{GCP-OPT}, especially in the beginning. We also observe that the original \texttt{SmartCPD}'s MSE saturates at a certain level after few iterations. One way to improve the MSE is changing the step size scheme from Jensen's inequality scheme to the adaptive step size scheme in \eqref{eq:szL} after some iterations. In Fig.~\ref{fig:beta_gamma}, \texttt{SmartCPD-Mixed} refers to such a step-size scheme, where the step size scheme is changed to that in \eqref{eq:szL} with $\phi(a)=\frac{1}{2}a^2$ when the difference of the loss function between two consecutive epochs\footnote{\scriptsize An epoch means a sweep of $|\mathcal{I}|$ tensor entries.} is less than $10^{-4}$. It can be observed that \texttt{SmartCPD-Mixed} offers the best runtime and MSE performance.

	\begin{figure}[ht]
		\centering
		\includegraphics[width=0.95\linewidth]{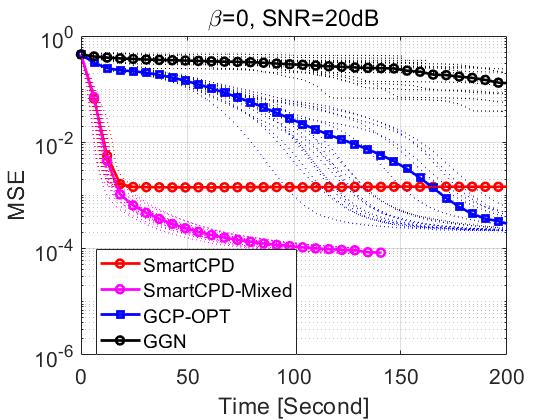}
		\caption{MSE of continuous data tensor with multiplicative Gamma noise ($I_n=300$, $R=20$, and $\beta=0$).}
		\label{fig:beta_gamma}
	\end{figure}

	\subsubsection{Column Constraints}
	Finally, we evaluate the performance for problems with simplex constraints, i.e., $\bm{A}_n^T\bm{1}=\bm{1},\bm{A}_n(i,j)\geq 0,\forall i,j$.  Tensors that represent joint probability distributions with $I=100$ and $R=10$ is considered; see \cite{ibrahim2020recovering,yeredor2019estimation,yeredor2019maximum,kargas2019learning,kargas2018tensors} for details. The latent matrices $\bm{A}_1,\bm{A}_2$, and $\bm{A}_3$ are firstly drawn from i.i.d. uniform distribution between $0$ and $1$, then each column is normalized so that it represents a probability mass function. The generalized KL divergence is used as loss function. For \texttt{SmartCPD}, the step size scheme in \eqref{eq:szL} with $\phi(a)=a\log a$ is used. 
	Note that both the baselines are not able to handle such simplex constraint, while this problem frequently arise in probabilistic tensor decomposition; see \cite{yeredor2019estimation,yeredor2019maximum,ibrahim2019crowdsourcing,kargas2019learning}.
	To benchmark our algorithm, we use the deterministic block MD algorithm in~\cite{kargas2019learning}.
	In Fig.~\ref{fig:simplex}, one can see that \texttt{SmartCPD} has much fast convergence behavior than the MD algorithm under the probabilistic simplex constraint.
	
	\begin{figure}[ht]
		\centering
		\includegraphics[width=0.95\linewidth]{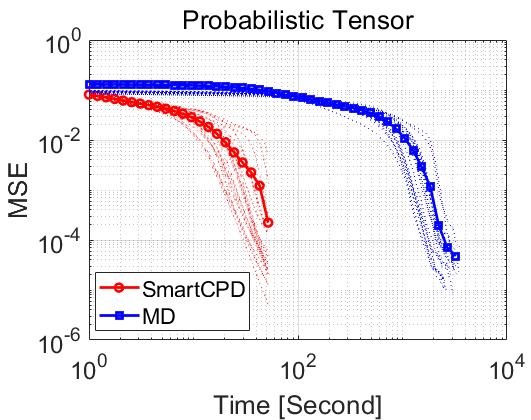}
		\caption{MSE of continuous data tensor with simplex constraint ($I=100$, $R=10$, and $\beta=1$).}
		\label{fig:simplex}
	\end{figure}


	\subsection{Real Data}
	\subsubsection{Chicago Crime Data}
	{We apply the algorithms onto the} Chicago crime dataset. The dataset records crime reports in the city of Chicago, Illinois, United States, between January 1, 2001 to December 11, 2017. The original dataset is published in the official website of the city of Chicago (\url{www.cityofchicago.org}). Here, we use the version in \cite{frosttdataset}. The data is in the form of a fourth order tensor (day$\times$hour$\times$hour$\times$community) with integer entries representing the number of crimes reported. The size of the tensor is $6186 \times 24 \times 77 \times 32$. It has 5,330,678 ($\approx 1.5\%$) nonzero entries. 
	
	We choose the loss function corresponding to the Poisson distribution, i.e., $\ell(x,m)=m-x\log(m+\epsilon)$ (see Table~\ref{tab:ell}). 
	In every iteration, $40$ fibers are used by \texttt{SmartCPD}. For \texttt{GCP-OPT}, an equal amount of data, i.e., $40 \times 6186$ tensor entries, are sampled in every iteration. 
	All algorithms under test are stopped when the relative change in the $\beta$-divergence (in this case, $\beta=1$) is less than $10^{-3}$.  

	Fig.~\ref{fig:conv_chicago} shows the cost change against time of the algorithms under $R=5$ and $R=10$, respectively. Each algorithm is run for 20 trials and in each trial, the factor matrices are initialized by randomly sampling its entries from uniform distribution between 0 and 1. One can see that the proposed algorithm \texttt{SmartCPD} exhibits a fast runtime performance in this case. For each of the cases, the \texttt{SmartCPD} takes only about 5 seconds to reach a low cost value, whereas the baselines take much more time but still could not attain the same cost value. In some trials, especially when $R=5$, there exist some cases where \texttt{GGN} did not converge. Some more details on the algorithm-output latent factors can be found in the supplementary material.
	
	
	\begin{figure}[t]
		\centering
		\subfigure[$R=5$]{
			\includegraphics[width=0.48\linewidth]{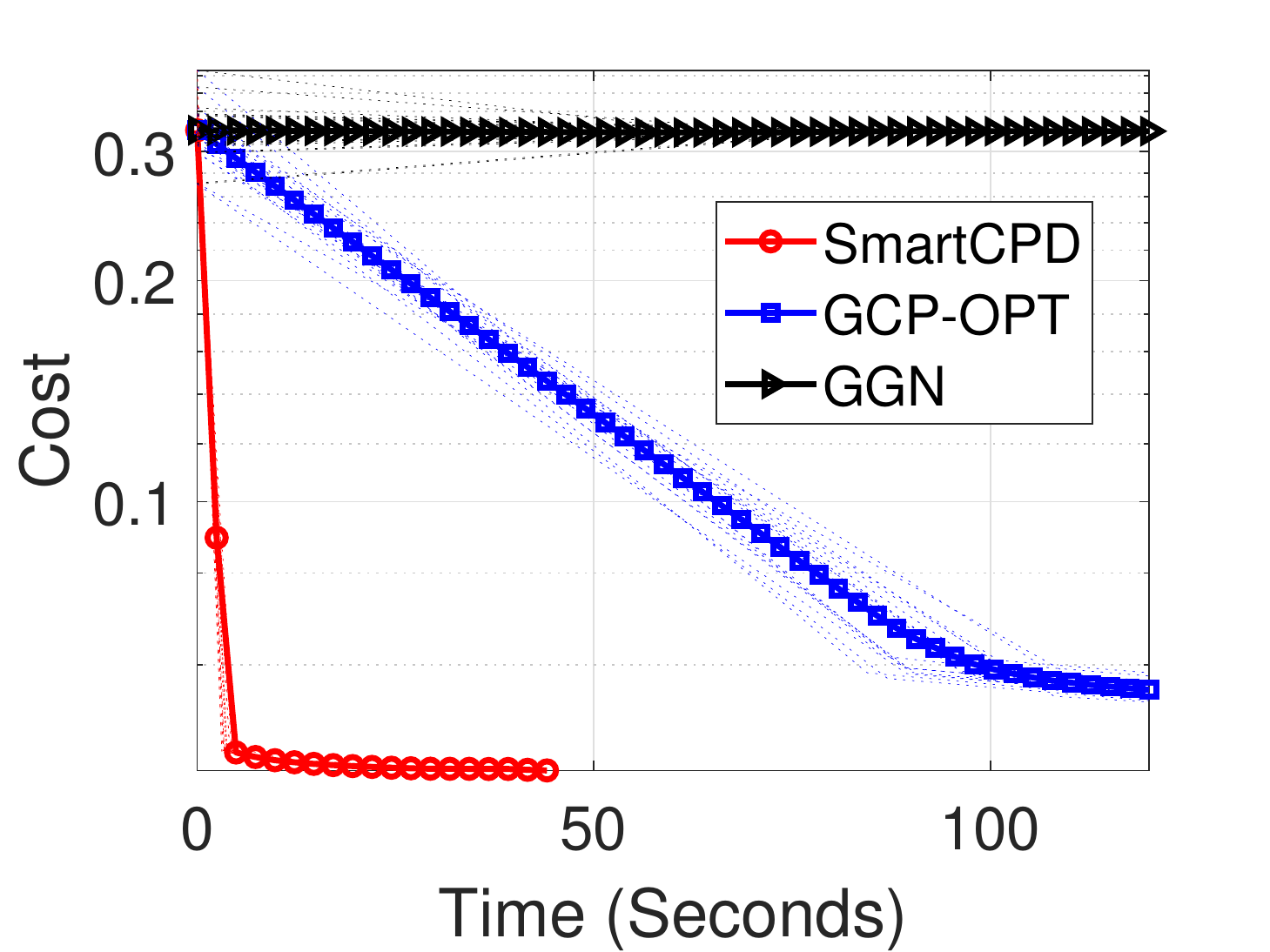}}
		\subfigure[$R=10$]{
			\includegraphics[width=0.48\linewidth]{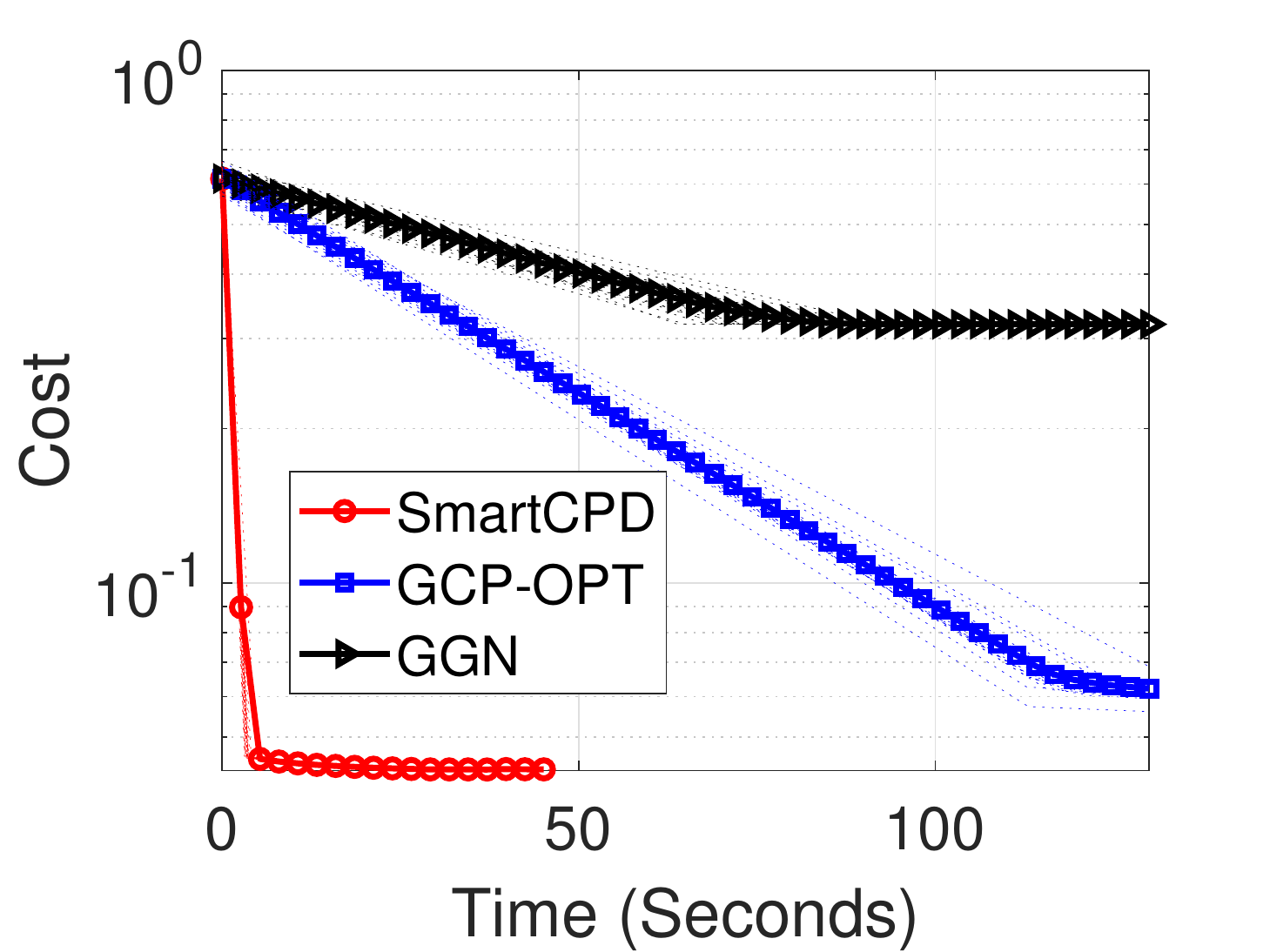}}
		\caption{Convergence of the algorithms (integer data, Chicago crime, KL div., size $=6186 \times 24 \times 77 \times 32$).}
		\label{fig:conv_chicago}
	\end{figure}



	\subsubsection{Plant-Pollinator Network Data}
	We also consider the plant-pollinator network dataset published by \cite{jones2020plant}. The dataset consists of plant-pollinator interactions collected over 12 meadows in the Oregon Cascade Mountains, USA; {also see \cite{Fu2019Link} for detailed data descriptions}. We extract the number of interactions between 562 pollinator species and 124 plant species over 123 days during 2011 and 2015. Hence, we form a count-type 
	third-order tensor {with a} size of $562 \times 124 \times 123 $, where each entry represents the number of interactions between a particular plant and a pollinator on a specific day. The data is {fairly} sparse with only 8,370 ($\approx 0.1\%$) nonzero entries. We choose the same algorithm settings as used in the Chicago Crime data.
	
	\begin{figure}[t]
		\centering
		\subfigure[$R=10$]{
			\includegraphics[width=0.48\linewidth]{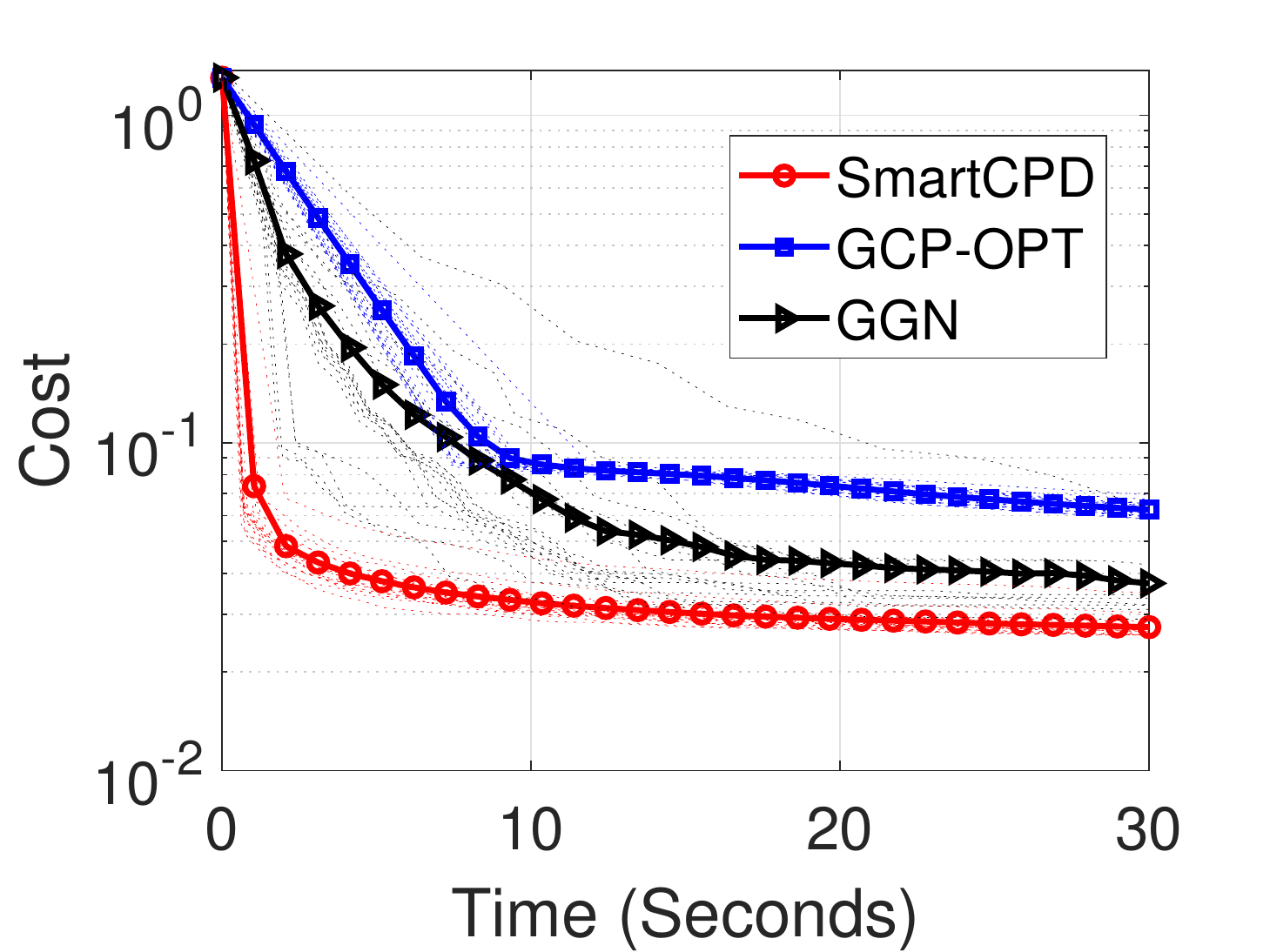}}
		\subfigure[$R=20$]{
			\includegraphics[width=0.48\linewidth]{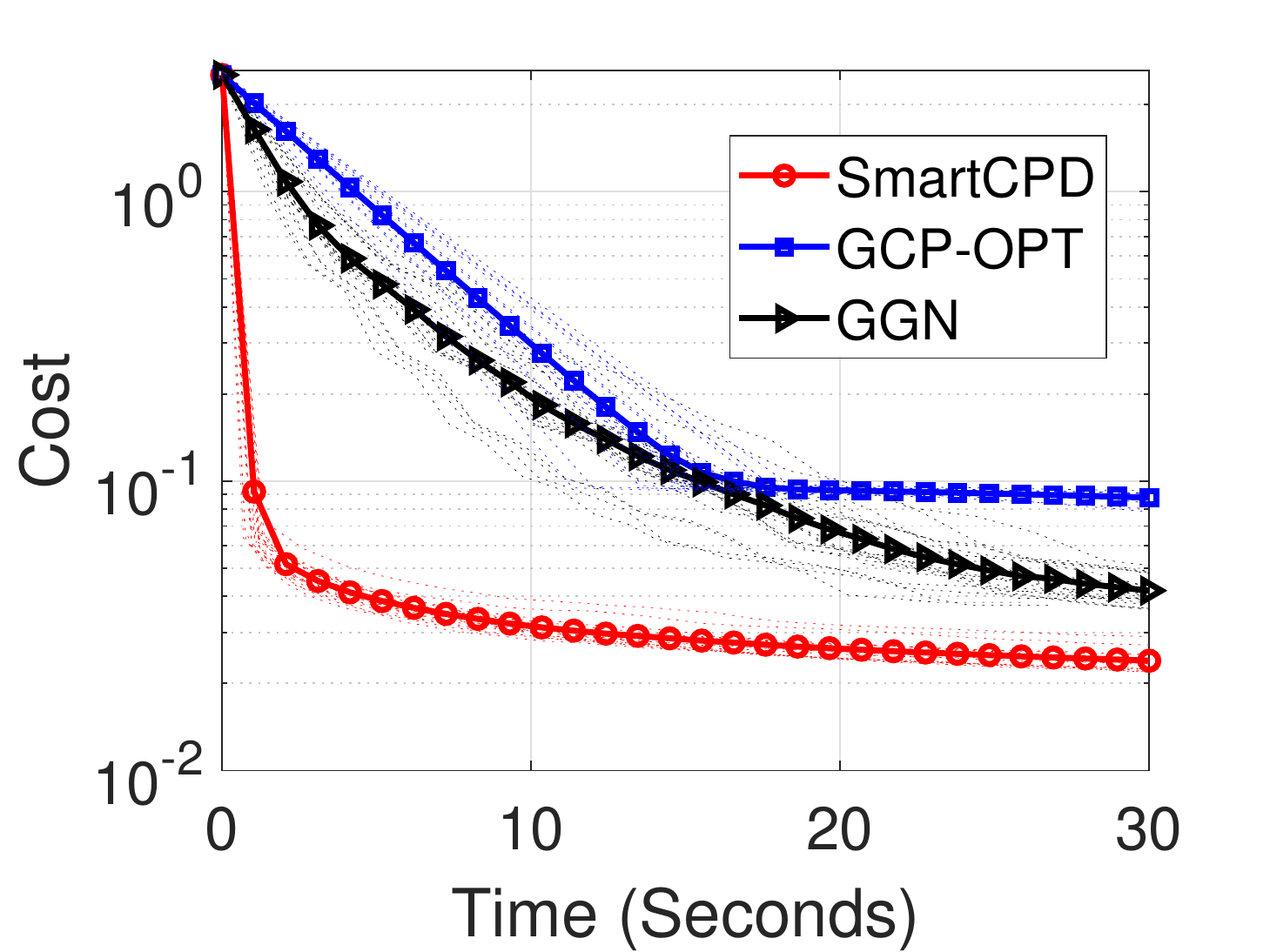}}
		\caption{Convergence of the algorithms (integer data, Plant-Pollinator network, gen. KL div., size $=562 \times 124 \times 123$).}
		\label{fig:Plant}
	\end{figure}
	
	Fig.~\ref{fig:Plant} plots the cost values of the algorithms against time, for different values of $R$. Both the baselines \texttt{GCP-OPT} and \texttt{GGN} work reasonably well in this dataset. However, the proposed \texttt{SmartCPD} has better performance---it converges to lower cost values compared to the other baselines. Also, one can observe that the \texttt{SmartCPD} is about 15 times faster in reaching low cost values compared to \texttt{GCP-OPT} and \texttt{GGN}. 
	
	
	
	\subsubsection{UCI Chat Network Data}
	We test the algorithms on real-world binary data using a social network which contains the online interactions of the students from the University of California, Irvine, USA. The original dataset was published by \cite{opsahl2009clustering}, which includes 59,835 online messages sent between 1,899 students over 196 days from March 2004 to October 2004. We select 400 most prolific senders and form a third-order binary tensor of size $400 \times 400 \times 196$ having 18862 ($\approx 0.06\%$) nonzero entries. Each entry of the binary tensor indicates if sender $i$ has sent a message to receiver $j$ on the $k$th day. We choose the loss function corresponding to the Bernoulli distribution, i.e., $\ell(m,x)=\log(m+1)-x\log(m+\epsilon)$ (see Table~\ref{tab:ell}). Other settings and parameters are as before. Since \texttt{GGN} method is not designed for the loss function considered in this case, it is not included.
	
	\begin{figure}[t]
		\centering
		\subfigure[$R=10$]{
			\includegraphics[width=0.48\linewidth]{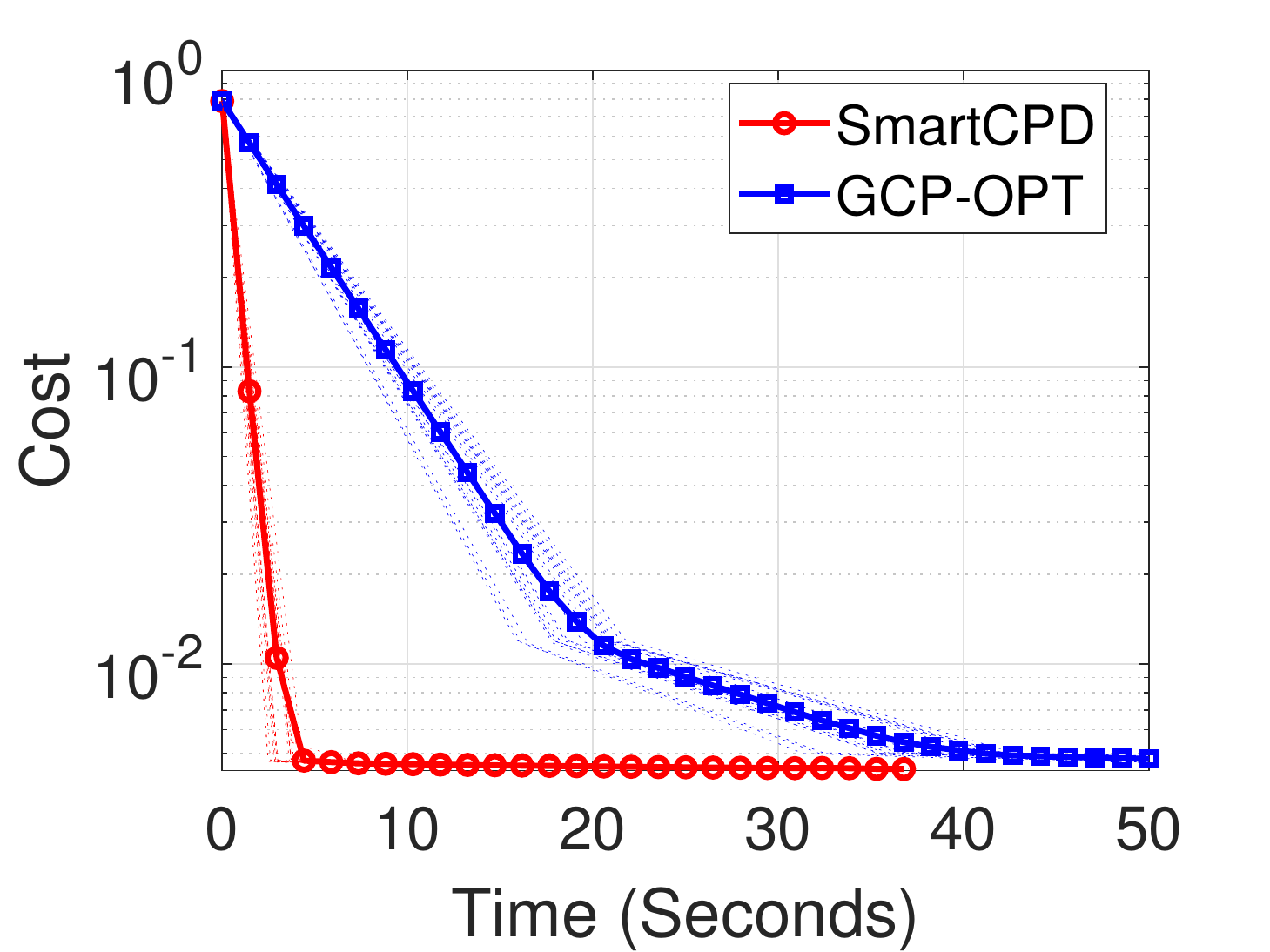}}
		\subfigure[$R=20$]{
			\includegraphics[width=0.48\linewidth]{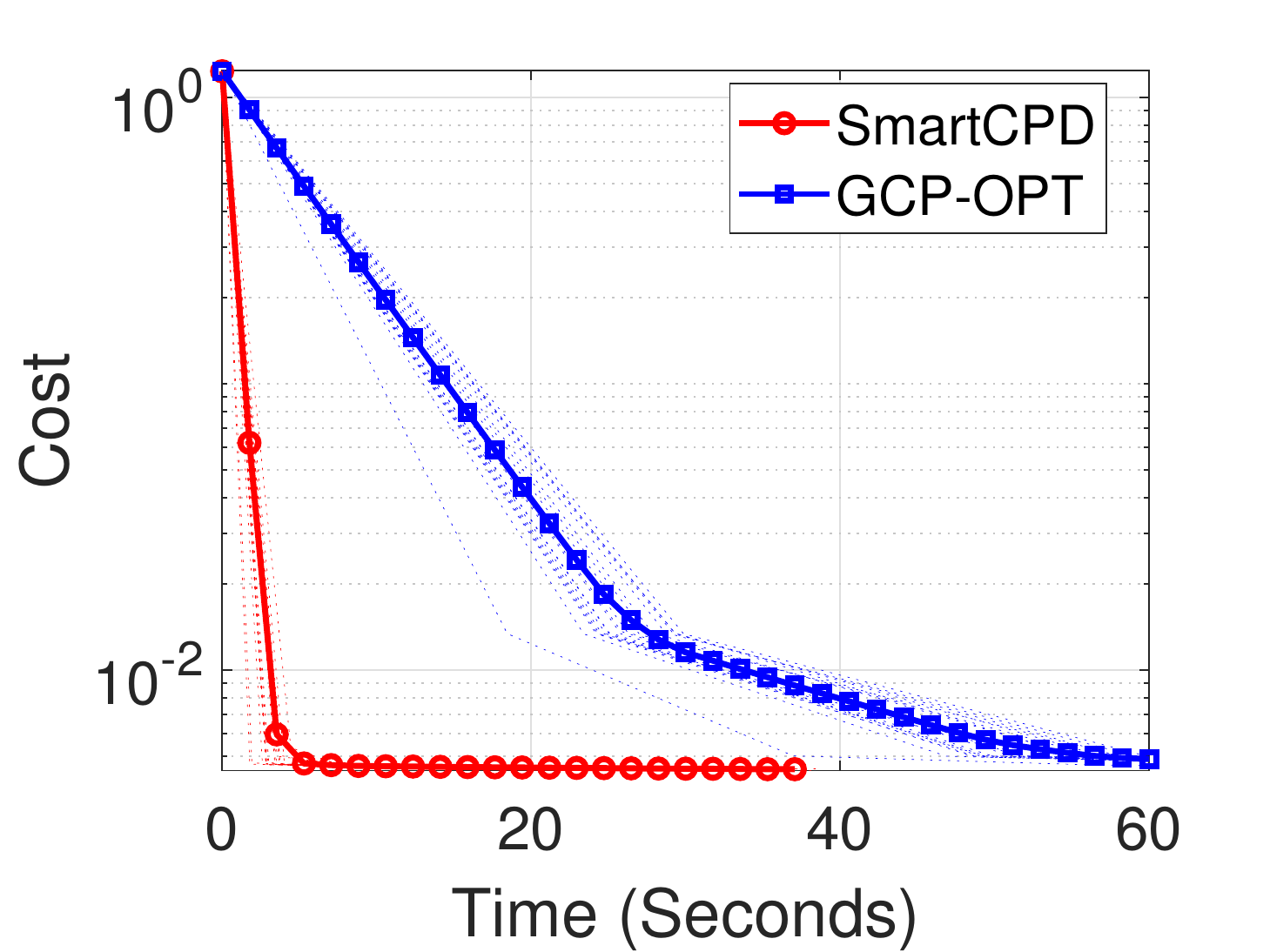}}
		\caption{Convergence of the algorithms (binary data, UCI chat network, log loss, size $=400 \times 400 \times 196$).}
		\label{fig:UC}
	\end{figure}
	
	Fig.~\ref{fig:UC} shows the cost value change against time in seconds for different values of $R$. Similar to the previous datasets, the proposed \texttt{SmartCPD} shows considerable runtime advantages over \texttt{GCP-OPT}. In all the trials, \texttt{SmartCPD} is at least 40 times faster for converging to a cost value that is later attained by \texttt{GCP-OPT}.

	
	
	
	\section{Conclusion}
	\label{sec:conl}
	In this work, we proposed a unified SMD algorithmic framework for low-rank CPD under non-Euclidean losses. By integrating a fiber-sampling strategy within the SMD optimization technique, the proposed framework is flexible in dealing with a variety of loss functions and constraints that are of interest in real-world data analytics. By its stochastic nature, the proposed algorithm enjoys low computational and memory costs. 
	In addition, under different data types and loss functions, we discussed a number of ``best practices'', e.g., step size scheduling and local surrogate function construction, which were shown critical for effective implementation.
	We also provided rigorous convergence analysis that is tailored for the non-Euclidean CPD, since generic SMD proofs do not cover the proposed algorithm.
	We tested the algorithm over various types of simulated and real data.
	Substantial computational savings relative to state-of-the-art methods were observed. 
	These results show encouraging and promising performance of using geometry-aware algorithm design for large-scale tensor decomposition.
	


	\footnotesize
	\bibliographystyle{IEEEbib}
	\bibliography{ref}
	
	\vfill\pagebreak
	\newpage
	\clearpage
	\begin{center} 
		{\large\textbf{Supplementary Materials}}
	\end{center}
	\normalsize
	\appendices
	\section{Proof of Lemma~\ref{lem:phi}}\label{app:phi}
	Decompose $\ell(x,\bm{h}^T\bm{a})$ into a convex part $\check{\ell}(x,\bm{h}^T\bm{a})$ plus a concave part $\hat{\ell}(x,\bm{h}^T\bm{a})$ (if exists), we have 
	\begin{equation*}
		\begin{aligned}
			\ell(x,\bm{h}^T\bm{a}) = \check{\ell}(x,\bm{h}^T\bm{a})+\hat{\ell}(x,\bm{h}^T\bm{a}).
		\end{aligned}
	\end{equation*}
	For convex part $\check{\ell}(x,\bm{h}^T\bm{a})$ we have
	\begin{equation}\label{eq:js}
		\begin{aligned}
			\check{\ell}(x,\bm{h}^T\bm{a})&=\check{\ell}(x,\sum_{r}\lambda_r \frac{h_r}{\lambda_r}a_r)\\
			&\mathop{\leq}\limits^{(i)}\sum_{r}\lambda_r \check{\ell}(x,\frac{h_r}{\lambda_r}a_r),
		\end{aligned}
	\end{equation}
	where $(i)$ is due the Jensen's inequality. Since $L\phi(a_r)-\lambda_r \check{\ell}(x,\frac{h_r}{\lambda_r}a_r)$ is convex, then by the convexity we have
	\begin{equation}\label{eq:lcvx_up}
		\begin{aligned}
			&L\phi(\bar{a}_r)-\lambda_r \check{\ell}(x,\frac{h_r}{\lambda_r}\bar{a}_r)\\
			&\quad + \langle L\nabla \phi(\bar{a}_r)-\lambda_r \nabla \check{\ell}(x,\frac{h_r}{\lambda_r}\bar{a}_r), a_r - \bar{a}_r \rangle \\
			&\leq L\phi(a_r)-\lambda_r \check{\ell}(x,\frac{h_r}{\lambda_r}a_r)
		\end{aligned}
	\end{equation}
	where $\nabla \check{\ell}(x,\frac{h_r}{\lambda_r}\bar{a}_r)$ represents the derivative of $\check{\ell}(x,\frac{h_r}{\lambda_r}\bar{a}_r)$ with respect to $a_r$.
	Substitute~\eqref{eq:lcvx_up} into \eqref{eq:js}, we have
	\begin{equation}\label{eq:lcvx}
		\begin{aligned}
			&\check{\ell}(x,\bm{h}^Ta)\\
			&\leq \sum_{r} \lambda_r \check{\ell}(x,\frac{h_r}{\lambda_r}\bar{a}_r)+\langle \lambda_r \nabla \check{\ell}(x,\frac{h_r}{\lambda_r}\bar{a}_r), a_r - \bar{a}_r \rangle \\
			&\quad + L\phi(a_r)-L\phi(\bar{a}_r)-\langle L\nabla \phi(\bar{a}_r), a_r - \bar{a}_r \rangle\\
			&= \sum_{r} \lambda_r \check{\ell}(x,\frac{h_r}{\lambda_r}\bar{a}_r)+\langle \lambda_r \nabla \check{\ell}(x,\frac{h_r}{\lambda_r}\bar{a}_r), a_r - \bar{a}_r \rangle \\
			&\quad+ L D_\phi(a_r,\bar{a}_r)\\
			&\mathop{=}\limits^{(i)} \check{\ell}(x,\bm{h}^T\bar{\bm{a}})+\sum_{r}\langle \nabla_r \check{\ell}(x,\bm{h}^T\bar{a}), a_r - \bar{a}_r \rangle +  L D_\phi(a_r, \bar{a}_r),\\
			&=\check{\ell}(x,\bm{h}^T\bar{\bm{a}})+\langle \nabla \check{\ell}(x,\bm{h}^T\bar{\bm{a}}), \bm{a} - \bar{\bm{a}} \rangle+LD_{\phi}(\bm{a},\bar{\bm{a}}),
		\end{aligned}
	\end{equation}
	where $\nabla_r \check{\ell}(x,\bm{h}^T\bar{\bm{a}})$ denotes the $r$th compoent of the gradient of $\check{\ell}(x,\bm{h}^T\bar{\bm{a}})$ with respect to $a$, inequality $(i)$ is by the definition of $\lambda_r$ and $\lambda_r \nabla \check{\ell}(x,\frac{h_r}{\lambda_r}\bar{a}_r)=\nabla_r \check{\ell}(x,h^T\bar{\bm{a}})$.
	As for the concave part $\hat{\ell}(x,\bm{h}^Ta)$, by the concavity we have
	\begin{equation}\label{eq:lccv}
		\begin{aligned}
			\hat{\ell}(x,\bm{h}^T\bm{a})\leq \hat{\ell}(x,\bm{h}^T\bar{\bm{a}})+\langle \nabla \hat{\ell}(x,\bm{h}^T\bar{\bm{a}}), \bm{a} - \bar{\bm{a}} \rangle.
		\end{aligned}
	\end{equation}
	Summing up inequalities \eqref{eq:lcvx} and \eqref{eq:lccv} completes the proof.
	
	\section{Proof of Lemma~\ref{lem:L}}\label{app:L}
	For notation simplicity, denote $x_n=\textrm{vec}(\bm{A}_n)$ and the constraint set $\mathcal{\bm{A}}_n$ is re-introduced correspondingly as $\mathcal{X}_n$, i.e., $x_n\in\mathcal{X}_n$. Consequentially the objective function in \eqref{eq:optori} is denoted as $F(\bm{x})$ with $\bm{x}=(x_1,x_2,\ldots,x_N)$. 
	Further, define the entry-wise correspondence between $x_n$ and $\bm{A}_n$ as $x_n^{i,r}=\bm{A}_n(i,r)$ and use $x_n^{i,r}$ to  represent the $[(i-1)I_n+r]$th component of vector $x_n$. Then, $F(\bm{x})$ in~\eqref{eq:optref} can be re-expressed as
	\begin{equation*}
		\begin{aligned}
			F(\bm{x})=\sum_{\bm{i}}\ell (\mathcal{T}_{\bm{i}},\sum_{r=1}^R\prod_{n=1}^N x_n^{i_n,r})=\sum_{\bm{i}}\ell (\mathcal{T}_{\bm{i}},g_{\bm{i}}(\bm{x})),
		\end{aligned}
	\end{equation*}
	where $\bm{i}=(i_1,i_2,\ldots,i_N)\in\mathcal{I}$ is the tensor entry index as introduced in Section~\ref{sec:intro} and $g_{\bm{i}}(\bm{x})$ is a specific $R$th-order polynomial which only depends on few components of $\bm{x}$. Since $F(\bm{x})$ is in a finite-sum form, we first consider its the Hessian matrix for a fixed $\bm{i}$. Denote $\nabla^2 \ell(\mathcal{T}_{\bm{i}},g_{\bm{i}}(\bm{x}))$ as the Hessian matrix with respect to $\bm{x}$, then by the chain rule we have
	\begin{equation}\label{eq:hessian_l}
		\begin{aligned}
			\nabla^2 \ell(\mathcal{T}_{\bm{i}},g_{\bm{i}}(\bm{x}))&=\ell^{\prime \prime}(\mathcal{T}_{\bm{i}},g_{\bm{i}}(\bm{x}))\left(\nabla  g_{\bm{i}}(\bm{x})\nabla g_{\bm{i}}(\bm{x})^T\right)\\
			&\quad+\ell^{\prime}(\mathcal{T}_{\bm{i}},g_{\bm{i}}(\bm{x}))\nabla^2 g_{\bm{i}}(\bm{x})),
		\end{aligned}
	\end{equation}
	where $\ell^{\prime}(\mathcal{T}_{\bm{i}},y)$ and $\ell^{\prime \prime}(\mathcal{T}_{\bm{i}},y)$ denote the first- and second-order derivatives of $\ell(\mathcal{T}_{\bm{i}},y)$ with respect to $y$ respectively, $\mathcal{T}_{\bm{i}}=\underline{\bm{X}}_{\bm{i}}$, $\nabla  g_{\bm{i}}(\bm{x})$ is the gradient of $g_{\bm{i}}(
	\cdot)$ and $\nabla^2 g_{\bm{i}}(\bm{x})$ is its Hessian matrix. By the compactness of the constraint set $\mathcal{X}$, we have the fact (see Fact~\ref{fact:JH}) that there are constants $J_{\bm{i}}<\infty$ and $H_{\bm{i}}<\infty$ such that 
	$$J_{\bm{i}} I \pm g_{\bm{i}}(\bm{x})\nabla g_{\bm{i}}(\bm{x})^T \succeq \bm{0},\ H_{\bm{i}} I \pm \nabla^2 g_{\bm{i}}(\bm{x})\succeq \bm{0}$$ 
	Also, by Fact~\ref{fact:l}, we have  $$|\ell^{\prime}(\mathcal{T}_{\bm{i}},g_{\bm{i}}(\bm{x}))|\leq U^\prime_{\bm{i}},\ |\ell^{\prime\prime}(\mathcal{T}_{\bm{i}},g_{\bm{i}}(\bm{x}))|\leq  U^{\prime\prime}_{\bm{i}},\forall \bm{x}\in\mathcal{X},$$
	where $U^\prime_{\bm{i}}<\infty$ and $U^{\prime\prime}_{\bm{i}}<\infty$ are some constants. Hence, combine~\eqref{eq:hessian_l} with $\sigma$-strong convexity of function $\phi(\bm{x})$ over the compact set $\mathcal{X}$, we have 
	\begin{equation}
		\frac{U^\prime_{\bm{i}} J_{\bm{i}}+U^{\prime\prime}_{\bm{i}} H_{\bm{i}}}{\sigma} \nabla^2 \phi(\bm{x},\bm{x})\pm \nabla^2 \ell(\mathcal{T}_{\bm{i}},g_{\bm{i}}(\bm{x}))  \succeq \bm{0}.
	\end{equation}
	Set $L=\sum_{\bm{i}}\frac{U^\prime_{\bm{i}} J_{\bm{i}}+U^{\prime\prime}_{\bm{i}} H_{\bm{i}}}{\sigma}<\infty$, we obtain 
	$L\phi(\bm{x})\pm \nabla^2 F(\bm{x})\succeq \bm{0}$, which implies $\phi(\bm{x})\pm F(\bm{x})$ is convex. By the convexity of functions $L\phi(\bm{x})\pm F(\bm{x})$, we complete the proof.
	
	\begin{fact}\label{fact:JH}
		For the compact set $\mathcal{X}$, there exist constants $J_{\bm{i}}$ and $H_{\bm{i}}$ such that 
		\begin{align*}
			&\max\{ |\lambda_{\textrm{min}}(\nabla g_{\bm{i}}(\bm{x})\nabla g_{\bm{i}}(\bm{x})^T)|,|\lambda_{\textrm{max}}(g_{\bm{i}}(\bm{x})\nabla g_{\bm{i}}(\bm{x})^T)|\}\leq J_{\bm{i}}, \\
			&\max\{|\lambda_{\textrm{min}}(\nabla^2 g_{\bm{i}}(\bm{x}))|,|\lambda_{\textrm{max}}(\nabla^2 g_{\bm{i}}(\bm{x}))|\} \leq H_{\bm{i}}.
		\end{align*}
		Further, for symmetric matrices $g_{\bm{i}}(\bm{x})\nabla g_{\bm{i}}(\bm{x})^T$ and $\nabla^2 g_{\bm{i}}(\bm{x})$, we immediately have  
		$$J_{\bm{i}}I\pm \nabla g_{\bm{i}}(\bm{x})\nabla g_{\bm{i}}(\bm{x})^T\succeq \bm{0},\ H_{\bm{i}} I \pm \nabla^2 g_{\bm{i}}(\bm{x})\succeq \bm{0}.$$
	\end{fact}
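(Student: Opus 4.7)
The plan is to exploit the polynomial structure of $g_{\bm{i}}(\bm{x})$ together with standard continuity/compactness arguments. Recall that $g_{\bm{i}}(\bm{x}) = \sum_{r=1}^{R} \prod_{n=1}^{N} x_n^{i_n,r}$, which is a multivariate polynomial of total degree $N$ in the entries of $\bm{x}$. Consequently, every entry of the gradient $\nabla g_{\bm{i}}(\bm{x})$ and of the Hessian $\nabla^2 g_{\bm{i}}(\bm{x})$ is itself a polynomial, of degree at most $N-1$ and $N-2$ respectively, and therefore continuous on all of $\mathbb{R}^{\dim(\bm{x})}$.

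First I would observe that each entry of the rank-one matrix $\nabla g_{\bm{i}}(\bm{x})\nabla g_{\bm{i}}(\bm{x})^T$ is a product of two polynomials, hence continuous in $\bm{x}$; the same is obvious for $\nabla^2 g_{\bm{i}}(\bm{x})$. Next I would invoke the standard fact that the eigenvalues of a real symmetric matrix depend continuously on its entries (via, e.g., the Courant--Fischer min-max characterization). Composing these two continuity statements yields that $\lambda_{\min}(\cdot)$ and $\lambda_{\max}(\cdot)$ applied to either $\nabla g_{\bm{i}}(\bm{x})\nabla g_{\bm{i}}(\bm{x})^T$ or $\nabla^2 g_{\bm{i}}(\bm{x})$ are continuous real-valued functions of $\bm{x}$ on $\mathcal{X}$, and so are their absolute values.

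The Weierstrass extreme value theorem then finishes the bound: any continuous real-valued function on the compact set $\mathcal{X} = \mathcal{X}_1 \times \cdots \times \mathcal{X}_N$ attains its supremum. Defining
\begin{equation*}
J_{\bm{i}} := \sup_{\bm{x}\in\mathcal{X}} \|\nabla g_{\bm{i}}(\bm{x})\|^2, \qquad H_{\bm{i}} := \sup_{\bm{x}\in\mathcal{X}} \|\nabla^2 g_{\bm{i}}(\bm{x})\|_2,
\end{equation*}
both suprema are finite, and they dominate the four eigenvalue quantities in the statement (noting that $\nabla g_{\bm{i}}\nabla g_{\bm{i}}^T$ is rank-one PSD, so its smallest eigenvalue is $0$ and the only nontrivial bound is $\lambda_{\max} = \|\nabla g_{\bm{i}}\|^2$). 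The final PSD claim is then just the elementary fact that a symmetric matrix $M$ with $\max\{|\lambda_{\min}(M)|,|\lambda_{\max}(M)|\}\leq c$ satisfies $cI \pm M \succeq \bm{0}$.

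I do not foresee any real obstacle, since the argument is a routine polynomial-continuity-plus-compactness chain. The only point that merits a sentence of care is the transition from continuity of matrix entries to continuity of eigenvalues, which I would justify by citing (or briefly re-deriving) the min-max formula; this is where a reader could otherwise object that one is taking for granted a nontrivial perturbation-theoretic fact.
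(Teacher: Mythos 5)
Your proof is correct. It reaches the same conclusion as the paper but by a genuinely more abstract route: you chain polynomial continuity of the entries of $\nabla g_{\bm{i}}$ and $\nabla^2 g_{\bm{i}}$, continuity of eigenvalues of symmetric matrices, and the Weierstrass extreme value theorem on the compact set $\mathcal{X}$, which establishes existence of $J_{\bm{i}}$ and $H_{\bm{i}}$ without ever writing the matrices down. The paper instead computes the block structure of $\nabla g_{\bm{i}}(\bm{x})$ and $\nabla^2 g_{\bm{i}}(\bm{x})$ explicitly (the nonzero blocks being $\nabla_n=[\prod_{i\neq n}x_i^{1},\ldots,\prod_{i\neq n}x_i^{R}]^T$ and the diagonal off-diagonal blocks $\nabla^2_{nn'}$), uses the entrywise bound $|x_n^r|\leq C$ implied by compactness to bound the Frobenius norms, and then bounds eigenvalue magnitudes by the Frobenius norm; this yields the explicit constants $J=RNC^{N-1}$ and $H=N\sqrt{RC^{N-2}}$. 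What your approach buys is brevity and generality (it would work for any $C^2$ function, not just this multilinear polynomial), and your observation that $\nabla g_{\bm{i}}\nabla g_{\bm{i}}^T$ is rank-one PSD with $\lambda_{\min}=0$ and $\lambda_{\max}=\|\nabla g_{\bm{i}}\|^2$ is a clean simplification the paper does not make. What the paper's computation buys is explicit, interpretable constants in terms of $R$, $N$, and $C$ (although downstream, in the proof of Lemma~2, only finiteness of $L$ is actually used, so nothing is lost by your route). Your closing step, that a symmetric $M$ with $\max\{|\lambda_{\min}(M)|,|\lambda_{\max}(M)|\}\leq c$ satisfies $cI\pm M\succeq\bm{0}$, matches the paper's final observation exactly.
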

	
	\begin{proof}
		Since $g_{\bm{i}}(\bm{x})$ only depends on few components of $\bm{x}$, both $\nabla  g_{\bm{i}}(\bm{x})$ and $\nabla^2  g_{\bm{i}}(\bm{x})$ contain many zero entries. To clearly understand the inside structure of non-zero entries, with abuse of notation, we ignore index $\bm{i}$ and define $x_n^{i_n,r}=x_n^{r}$ and use $g(\bm{x}_1,\bm{x}_2,\ldots,\bm{x}_N)$ to denote $g_{\bm{i}}(\bm{x})$, where $\bm{x}_n=(x_n^{1},x_n^{2},\ldots,x_n^{R})$. With such notations, we know that $\nabla  g(\bm{x}_1,\bm{x}_2,\ldots,\bm{x}_N)$ and $\nabla^2 g(\bm{x}_1,\bm{x}_2,\ldots,\bm{x}_N)$ correspond to non-zero block parts of $\nabla  g_{\bm{i}}(\bm{x})$ and $\nabla^2 g_{\bm{i}}(\bm{x})$ respectively. The gradient of $g(\bm{x}_1,\bm{x}_2,\ldots,\bm{x}_N)$ can be expressed as 
		\begin{equation*}
			\nabla  g(\bm{x}_1,\bm{x}_2,\ldots,\bm{x}_N)^T = [\nabla_1^T,\nabla_n^T,\ldots,\nabla_N^T]\triangleq \nabla g^T,
		\end{equation*}
		where $\nabla_n=[\prod_{i\neq n}x_i^{1},\prod_{i\neq n}x_i^{2},\ldots,\prod_{i\neq n}x_i^{R}]^T$. Similarly, $\nabla^2 g_{\bm{i}}(\bm{x})$ can be expressed as 
		\begin{equation*}
			\nabla^2 g(\bm{x}_1,\bm{x}_2,\ldots,\bm{x}_N) = \begin{bmatrix} 
				\nabla^2_{11} & \nabla^2_{12} & \ldots & \nabla^2_{1N}\\
				\nabla^2_{21} & \nabla^2_{22} & \ldots & \nabla^2_{2N}\\
				\vdots & \vdots &  & \vdots\\
				\nabla^2_{N1} & \nabla^2_{12} & \ldots & \nabla^2_{NN}
			\end{bmatrix}\triangleq \nabla^2 g,
		\end{equation*}
		where 
		$$\nabla^2_{nn^\prime}=\textrm{diag}\left(\left[\prod_{i\neq n,n^\prime}x_i^{1},\prod_{i\neq n,n^\prime}x_i^{2},\ldots,\prod_{i\neq n,n^\prime}x_i^{R}\right]\right),\forall n\neq n^\prime$$
		and $\nabla^2_{nn}=\bm{0},\forall n$. 
		The Frobenius norm of $\nabla  g_{\bm{i}}(\bm{x})\nabla g_{\bm{i}}(\bm{x})^T$ and $\nabla^2 g_{\bm{i}}(\bm{x})$ are equal to that of $\nabla  g\nabla  g^T$ and $\nabla^2 g$ respectively. Since $\bm{x}$ belongs to a compact set $\mathcal{X}$, we have an upper bound for each $|x_n^r|$, i.e., $| x_n^r |\leq C,\forall n,r$. This further implies
		$$
		\| \nabla  g \nabla  g^T \|_F^2\leq R^2N^2C^{2N-2},\ \|\nabla^2  g\|_F^2\leq N^2RC^{N-2}.
		$$
		As $\nabla  g\nabla  g^T$ and $\nabla^2  g$ are symmetric matrices which have real eigenvalues that upper bounded their corresponding Frobenius norm. Define constants $J=RNC^{N-1}$ and $H=N\sqrt{RC^{N-2}}$, we have 
		\begin{align*}
			&\max\{ |\lambda_{\textrm{min}}(\nabla g\nabla g^T)|,|\lambda_{\textrm{max}}(\nabla g\nabla g^T)|\} \leq J,\\
			&\max\{|\lambda_{\textrm{min}}(\nabla^2 g)|,|\lambda_{\textrm{max}}(\nabla^2 g)|\} \leq H.
		\end{align*}
	\end{proof}
	\begin{fact}\label{fact:l}
		For function $\ell(\mathcal{T}_{\bm{i}},g_{\bm{i}}(\bm{x}))$ considered in Table~\ref{tab:ell}, there exist $U_{\bm{i}}^\prime <\infty$ and $U_{\bm{i}}^{\prime\prime} <\infty$ such that 
		$$|\ell^{\prime}(\mathcal{T}_{\bm{i}},g_{\bm{i}}(\bm{x}))|\leq U^\prime_{\bm{i}},\ |\ell^{\prime\prime}(\mathcal{T}_{\bm{i}},g_{\bm{i}}(\bm{x}))|\leq  U^{\prime\prime}_{\bm{i}},\forall \bm{x}\in\mathcal{X}$$
	\end{fact}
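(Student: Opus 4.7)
The plan is to prove this via a standard continuity-plus-compactness argument. Since $\bm{x}$ lies in the compact constraint set $\mathcal{X}$ and $g_{\bm{i}}(\cdot)$ is a multivariate polynomial, the image $g_{\bm{i}}(\mathcal{X})$ is itself a compact subset of $\mathbb{R}$, contained in some bounded interval $[g_{\min}, g_{\max}]$. If I can show that for each loss function in Table~\ref{tab:ell}, the derivatives $\ell'(\mathcal{T}_{\bm{i}}, m)$ and $\ell''(\mathcal{T}_{\bm{i}}, m)$ (with respect to $m$) are continuous functions of $m$ on an open interval containing $[g_{\min}, g_{\max}]$, then the extreme value theorem yields the finite bounds $U'_{\bm{i}}$ and $U''_{\bm{i}}$ immediately.

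First I would compute, row-by-row in Table~\ref{tab:ell}, the closed-form expressions for $\ell'(\mathcal{T}_{\bm{i}}, m)$ and $\ell''(\mathcal{T}_{\bm{i}}, m)$. Each is a polynomial, rational, or composition of $e^m$ or $e^m/(1+e^m)$ with $m$---for example, for the generalized KL divergence, $\ell'(x,m) = 1 - x/(m+\epsilon)$ and $\ell''(x,m) = x/(m+\epsilon)^2$; for the logistic Bernoulli loss, $\ell'(x,m) = e^m/(1+e^m) - x$ and $\ell''(x,m) = e^m/(1+e^m)^2$. Second, I would argue that each such expression is continuous on the relevant domain: the $e^m$-based cases are continuous on all of $\mathbb{R}$, while the singular cases containing $(m+\epsilon)^{-k}$ require $m + \epsilon > 0$ throughout $\mathcal{X}$. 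Finally, continuity on the compact interval $[g_{\min}, g_{\max}]$ followed by the extreme value theorem delivers finite bounds.

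The main obstacle is showing that $m + \epsilon$ is uniformly bounded away from zero for the singular losses. Table~\ref{tab:ell} specifies the parameter type $m \geq 0$ for each such loss, which corresponds to nonnegativity constraints on the latent factors so that $g_{\bm{i}}(\bm{x}) = \sum_r \prod_n x_n^{i_n,r} \geq 0$ throughout $\mathcal{X}$; combined with the $\epsilon$-shift explicitly introduced in the loss definitions, this gives $m + \epsilon \geq \epsilon > 0$ uniformly. Hence the rational terms $(m+\epsilon)^{-k}$ remain bounded on $\mathcal{X}$, and the upper bounds $|\ell'|, |\ell''| \leq U'_{\bm{i}}, U''_{\bm{i}}$ follow by aggregating the worst case across rows of Table~\ref{tab:ell}. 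A minor point to handle cleanly is that the bound may depend on $\bm{i}$ through $\mathcal{T}_{\bm{i}} = \underline{\bm{X}}_{\bm{i}}$, but since $\underline{\bm{X}}$ is fixed data, each $|\mathcal{T}_{\bm{i}}|$ is a fixed finite number, so the $\bm{i}$-dependence in the bounds is benign and summability concerns reduce to those handled in the proof of Lemma~\ref{lem:L}.
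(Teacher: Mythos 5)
Your proposal is correct, and it shares the paper's skeleton---compactness of $\mathcal{X}$ plus continuity of the polynomial $g_{\bm{i}}$ confines $m=g_{\bm{i}}(\bm{x})$ to a bounded interval, and then one bounds $\ell'$ and $\ell''$ on that interval---but the finishing step differs. The paper does not invoke the extreme value theorem; instead it observes that each of $\ell'(\mathcal{T}_{\bm{i}},\cdot)$ and $\ell''(\mathcal{T}_{\bm{i}},\cdot)$ decomposes as a sum of a monotone increasing and a monotone decreasing function of $m$, so the supremum of $|\ell'|$ over $[L_{g,\bm{i}},U_{g,\bm{i}}]$ is controlled by mixing the two monotone pieces evaluated at the two endpoints, yielding an explicit (semi-constructive) bound $U'_{\bm{i}}$. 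Your EVT route is shorter and arguably cleaner, at the cost of being non-constructive. One point where your write-up is actually more careful than the paper's: for the losses containing $(m+\epsilon)^{-k}$ or $\log(m+\epsilon)$, the paper's interval $[L_{g,\bm{i}},U_{g,\bm{i}}]$ with $L_{g,\bm{i}}=-RC^N<0$ straddles the singularity at $m=-\epsilon$, and the monotone-endpoint argument silently presumes the derivatives are finite there; you explicitly restore this by noting that the nonnegativity constraints on the factors (the ``$m\geq 0$'' parameter type in Table~\ref{tab:ell}) keep $g_{\bm{i}}(\bm{x})+\epsilon\geq\epsilon>0$ uniformly. Your closing remark that the $\bm{i}$-dependence of the bounds is benign because $\underline{\bm{X}}$ is fixed finite data matches how the paper later sums the per-entry constants into $L$ in Lemma~\ref{lem:L}.
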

	
	\begin{proof}
		The compactness of $\mathcal{X}$ implies the range space of polynomial function $g_{\bm{i}}(\bm{x})$ is lower and upper bounded, i.e., $-\infty<L_{g,\bm{i}}\leq |g_{\bm{i}}(\bm{x})|\leq U_{g,\bm{i}}<\infty$, where $L_{g,\bm{i}}=-RC^N$ and $U_{g,\bm{i}}=RC^N$ and $C$ is the upper bound of each entry of $\bm{x}$ that $|x_n^{i,r}|\leq C$. On the other hand, both $\ell^{\prime}(\mathcal{T}_{\bm{i}},y)$ and $\ell^{\prime \prime}(\mathcal{T}_{\bm{i}},y)$ can be represented as the sum of two monotonic functions, take $\ell^{\prime}(\mathcal{T}_{\bm{i}},y)$ as example, we have 
		\begin{equation*}
			\begin{aligned}
				\ell^{\prime}(\mathcal{T}_{\bm{i}},y)&=\acute{\ell}^{\prime}(\mathcal{T}_{\bm{i}},y)+\grave{\ell}^{\prime}(\mathcal{T}_{\bm{i}},y)
			\end{aligned}
		\end{equation*}
		where $\acute{\ell}^{\prime}(\mathcal{T}_{\bm{i}},y)$ and $\grave{\ell}^{\prime}(\mathcal{T}_{\bm{i}},y)$ are the monotonic increasing and decreasing parts of $\ell^{\prime}$ respectively.
		This implies  $\ell^{\prime}(\mathcal{T}_{\bm{i}},g_{\bm{i}}(\bm{x}))$ is lower and upper bounded over compact set $\mathcal{X}$. Define 
		\begin{align*}
			U^\prime_{\bm{i}}=\max\{& |\acute{\ell}^{\prime}(\mathcal{T}_{\bm{i}},U_{g,\bm{i}})+\grave{\ell}^{\prime}(\mathcal{T}_{\bm{i}},L_{g,\bm{i}})|,\\
			&|\acute{\ell}^{\prime}(\mathcal{T}_{\bm{i}},L_{g,\bm{i}})+\grave{\ell}^{\prime}(\mathcal{T}_{\bm{i}},U_{g,\bm{i}})|\}, 
		\end{align*}
		we immediately have $|\ell^{\prime}(\mathcal{T}_{\bm{i}},g_{\bm{i}}(\bm{x}))|\leq U^\prime_{\bm{i}},\ \forall \bm{x}\in\mathcal{X}$. Similarly, we can find $U^{\prime\prime}_{\bm{i}}$ such that $|\ell^{\prime\prime}(\mathcal{T}_{\bm{i}},g_{\bm{i}}(\bm{x}))|\leq  U^{\prime\prime}_{\bm{i}},\ \forall \bm{x}\in\mathcal{X}$. This completes the proof.
	\end{proof}
	
	\section{Proof of Lemma~\ref{lem:station}}\label{app:sta}
	We use the notation in Appendix~\ref{app:L} for simplicity. We first prove `if' part. By the first-order optimality condition of $\bm{\hat{x}}$, that is, for some $\bm{\hat{v}}\in\partial h(\bm{\hat{x}})$, we have 
	\begin{equation}\label{eq:optcond_sta}
		\nabla F(\bm{\hat{x}}) + \bm{\hat{v}} + \frac{1}{2\lambda}(\nabla \phi(\bm{\hat{x}})- \nabla \phi(\bm{x}))=\bm{0}.
	\end{equation}
	Since $0=D_\phi(\bm{\hat{x}},\bm{x})\geq \frac{\sigma}{2}\|\bm{\hat{x}}-\bm{x} \|^2$, we have $\bm{\hat{x}}=\bm{x}$. This together with \eqref{eq:optcond_sta} implies $F(\bm{x}) + \bm{v}=\bm{0}$ for some $\bm{v}\in\partial h(\bm{x})$.
	
	Next we prove `only if' part. As $\bm{x}$ is a stationary point, we have $\nabla F(\bm{x}) + \bm{v}=\bm{0}$ for some $\bm{v}\in\partial h(\bm{x})$. By the convexity of $h(\cdot)$, we have 
	\begin{equation}\label{eq:sta_h}
		h(\bm{\hat{x}})\geq h(\bm{x})+\langle \bm{v}, \bm{\hat{x}} - \bm{x}\rangle =h(\bm{x})+\langle -\nabla F(\bm{x}), \bm{\hat{x}} - \bm{x}\rangle.
	\end{equation}
	On the other hand, by Proposition~\ref{lem:L}, we have 
	\begin{equation}\label{eq:sta_a}
		F(\bm{\hat{x}})-F(\bm{x})-\langle \nabla F(\bm{x}), \bm{\hat{x}}-\bm{x}\rangle \geq -L D_\phi(\bm{\hat{x}},\bm{x}).
	\end{equation}
	Combine~\eqref{eq:sta_h} and~\eqref{eq:sta_a}, we have 
	\begin{equation}\label{eq:sta_b}
		F(\bm{\hat{x}})+h(\bm{\hat{x}})\geq F(\bm{x})+h(\bm{x})-L D_\phi(\bm{\hat{x}},\bm{x}).
	\end{equation}
	Note that $F(\bm{x})+h(\bm{x})\geq F(\bm{\hat{x}})+h(\bm{\hat{x}})+\frac{1}{2\lambda}D_\phi(\bm{\hat{x}},\bm{x})$ by the optimality of $\bm{\hat{x}}$, this together with~\eqref{eq:sta_b} implies $0\geq (\frac{1}{2\lambda}-L)D_\phi(\bm{\hat{x}},\bm{x})$. Since $0<\lambda<\frac{1}{2L}$, we have $\frac{1}{2\lambda}-L>0$ and hence $D_\phi(\bm{\hat{x}},\bm{x})\leq 0$. Notice $D_\phi(\bm{\hat{x}},\bm{x})\geq 0$ by its definition and hence  $D_\phi(\bm{\hat{x}},\bm{x})=0$.
	
	\section{Proof of Lemma~\ref{lem:descent}}\label{app:desc}
	We use the notation in Appendix~\ref{app:L} for simplicity. Equivalently, the updating procedure of the proposed Algorithm \ref{alg:smd} in \eqref{eq:updateA} can be re-expressed as:
	\begin{equation}\label{eq:upref}
		\begin{aligned}
			\tilde{x}_n^{t+1}&=\arg\min\ \langle \tilde{g}_n^t,y - x_n^t\rangle + h_n(y) + \frac{1}{\eta_t}D_{\phi}(y,x_n^t),\\
			\tilde{x}_n^{t+1}&=\tilde{x}_n^{t+1},\ x_i^{t+1}=x_i^{t},\ \forall i\neq n.
		\end{aligned}
	\end{equation}
	where $\tilde{g}_n^t$ is the fiber-sampled gradient with $\mathbb{E}\left[\tilde{g}_n^t \mid n \right]=\nabla_n F(\bm{x}^t)$ and $\nabla_n F(\bm{x}^t)$ denotes the partial gradient of $F(\cdot)$ with respect to block $n$ at $\bm{x}^t$. Denote $\bm{g}^t=(g_1^t,g_2^t,\ldots,g_N^t)$ with $g_i^t=0$ if $i\neq n$ and $g_n^t=\tilde{g}_n^t$, then the following proposition shows that $\bm{g}^t$ is an unbiased estimation of $\nabla F(\bm{x}^t)$ except a constant scale $\frac{1}{N}$.
	\begin{prop}\label{prop:g}
		Suppose  at iteration $t$, the selection of block index $n$ is uniformly random and the selected fiber index set $\xi$ is uniformly random conditioned on $n$, then $\mathbb{E}\left[ \bm{g}^t\right]=\frac{1}{N}\nabla F(\bm{x}^t)$.
	\end{prop}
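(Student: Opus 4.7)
The plan is to apply the tower property of expectation, decomposing the randomness into the block choice $n$ and the fiber sampling conditioned on $n$. Since $\bm{g}^t$ is a block-sparse vector whose only nonzero block is determined by which $n$ is drawn, I would compute each block's expectation separately and then stack.

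First, I would fix an arbitrary block index $i \in \{1, 2, \ldots, N\}$ and examine the $i$th block $g_i^t$ of $\bm{g}^t$. By construction, $g_i^t = \tilde{g}_i^t$ on the event $\{n = i\}$ and $g_i^t = 0$ otherwise. Conditioning on $n$ and then taking expectation with respect to the fiber-sampling randomness gives
\begin{equation*}
\mathbb{E}[g_i^t \mid n = i] = \mathbb{E}[\tilde{g}_i^t \mid n = i] = \nabla_i F(\bm{x}^t),
\end{equation*}
where the last equality uses the hypothesis $\mathbb{E}[\tilde{g}_n^t \mid n] = \nabla_n F(\bm{x}^t)$. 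For $j \neq i$, we simply have $\mathbb{E}[g_i^t \mid n = j] = 0$.

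Second, I would apply the law of total expectation together with the fact that $n$ is drawn uniformly from $\{1,\ldots,N\}$, so that $P(n = i) = 1/N$. This yields
\begin{equation*}
\mathbb{E}[g_i^t] = \sum_{j=1}^{N} P(n = j)\,\mathbb{E}[g_i^t \mid n = j] = \frac{1}{N}\nabla_i F(\bm{x}^t).
\end{equation*}
Stacking this identity across all $N$ blocks produces $\mathbb{E}[\bm{g}^t] = \frac{1}{N}\nabla F(\bm{x}^t)$, which is the claim.

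There is no real obstacle here: the statement is essentially a bookkeeping exercise on the two-level sampling scheme (first block, then fiber). The only mild subtlety is being careful to invoke the fiber-sampling unbiasedness only \emph{after} conditioning on the chosen block, since the fiber-sampling distribution depends on $n$ through the mode-$n$ unfolding. Once that conditioning is made explicit via the tower property, the rest is a one-line calculation.
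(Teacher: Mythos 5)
Your proposal is correct and follows essentially the same route as the paper's proof: both apply the tower property, use the unbiasedness of the fiber-sampled gradient conditioned on the chosen block, and then average over the uniform block selection to pick up the $1/N$ factor. The only cosmetic difference is that you argue block-by-block and then stack, whereas the paper carries out the same computation on the full block-sparse vector at once.
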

	\begin{proof}
		By the definition of $\bm{g}^t$ and the tower property of total expectation, we have 
		\begin{equation*}
			\begin{aligned}
				&\mathbb{E}\left[ \bm{g}^t\right]\\
				&=\mathbb{E}_n\left[\mathbb{E}_{\xi} \left[ \bm{g}^t\right]\mid n\right]\\
				&=\mathbb{E}_n\left[\mathbb{E}_{\xi} \left[ (0,\ldots,\tilde{g}_n^t,\ldots,0) \mid n\right]\right]\\
				&=\mathbb{E}_n\left[(0,\ldots,\nabla_n F(\bm{x}^t),\ldots,0)\mid n\right]=\frac{1}{N}\nabla F(\bm{x}^t).
			\end{aligned}
		\end{equation*}
	\end{proof}
	Since the update in~\eqref{eq:upref} contains two types of randomness, i.e., block randomization and fiber-sampling. To clearly present the iteration analysis between iterations $t$ and $t+1$ in expectation sense, we use $\mathbb{E}_{n}[\cdot]$ to denote the expectation operation with respect to block randomization and use $\mathbb{E}_{\xi}[\cdot\mid n]$ to represent the expectation operation with respect to the fiber sampling, where $\xi$ is the randomly selected fiber indexes with block $n$ been given. 
	
	Let $\bm{x}^{t+1}$ be the iterate generated by Algorithm~\ref{alg:smd} at $\bm{x}^t$ and denote $\bm{\hat{x}}^{t+1}=\arg\min_{\bm{y}}\mathcal{L}(\bm{y};\bm{x}^{t+1})$. Then, by the definition of $\mathcal{M}(\bm{x}^{t+1})$, we have 
	\begin{equation}\label{eq:desc_M}
		\begin{aligned}
			&\mathbb{E} \left[   \mathcal{M}(\bm{x}^{t+1})   \right]=\mathbb{E} \left[   \min_{\bm{y}}\ \mathcal{L}(\bm{y};\bm{x}^{t+1})   \right]\\
			&=\mathbb{E} \left[ F(\bm{\hat{x}}^{t+1})+h(\bm{\hat{x}}^{t+1})+\frac{1}{2\lambda}D_\phi(\bm{\hat{x}}^{t+1},\bm{x}^{t+1}) \right]\\
			&\leq \mathbb{E} \left[ F(\bm{\hat{x}}^{t})+h(\bm{\hat{x}}^{t})+\frac{1}{2\lambda}D_\phi(\bm{\hat{x}}^{t},\bm{x}^{t+1}) \right],
		\end{aligned}
	\end{equation}
	where the inequality is due the fact that $\bm{\hat{x}}^{t+1}$ is the minimizer of $\mathcal{L}(\bm{y};\bm{x}^{t+1})$. By Lemma~\ref{lem:three}, $D_\phi(\bm{\hat{x}}^{t},\bm{x}^{t+1})$ can be re-epxressed as 
	\begin{equation}\label{eq:desc_D}
		\begin{aligned}
			D_\phi(\bm{\hat{x}}^{t},\bm{x}^{t+1})=&D_\phi(\bm{\hat{x}}^t,\bm{x}^t)-D_\phi(\bm{x}^{t+1},\bm{x}^t)+\\
			&\quad\langle \nabla \phi(\bm{x}^t) -\nabla \phi(\bm{x}^{t+1})  , \bm{\hat{x}}^t - \bm{x}^{t+1} \rangle.
		\end{aligned}
	\end{equation}
	Substitute~\eqref{eq:desc_D} into \eqref{eq:desc_M} and use the fact $\bm{x}^t$ and $\bm{\hat{x}}^t$ do not depend on the random procedure between iterations $t$ and $t+1$, we have 
	\begin{equation}\label{eq:desc_M2}
		\begin{aligned}
			&\mathbb{E} \left[   \mathcal{M}(\bm{x}^{t+1})   \right]\\
			&\leq F(\bm{\hat{x}}^{t})+ h(\bm{\hat{x}}^{t})+ \frac{1}{2\lambda}D_\phi(\bm{\hat{x}}^t,\bm{x}^t)+\\
			&\quad\frac{1}{2\lambda}\mathbb{E} \left[ -D_\phi(\bm{x}^{t+1},\bm{x}^t)+\langle \nabla \phi(\bm{x}^t) -\nabla \phi(\bm{x}^{t+1})  , \bm{\hat{x}}^t - \bm{x}^{t+1} \rangle ] \right]\\
			&=\mathcal{M}(\bm{x}^{t})-\frac{1}{2\lambda}\mathbb{E} \left[ D_\phi(\bm{x}^{t+1},\bm{x}^t)\right] \\
			&\quad+ \frac{1}{2\lambda}\mathbb{E} \left[ \langle \nabla \phi(\bm{x}^t) -\nabla \phi(\bm{x}^{t+1})  , \bm{\hat{x}}^t - \bm{x}^{t+1} \rangle ] \right].
		\end{aligned}
	\end{equation}
	By the tower property of expectation, the total expectation $\mathbb{E}[\cdot]$ at iteration $t$ can be decomposed as $\mathbb{E}_n\left[ \mathbb{E}_\xi [\cdot] \mid n\right]$. Then we have
	\begin{equation}\label{eq:desc_inner}
		\begin{aligned}
			&\mathbb{E}\left[\langle \nabla \phi(\bm{x}^t) -\nabla \phi(\bm{x}^{t+1})  , \bm{\hat{x}}^t - \bm{x}^{t+1}\rangle \right]\\
			&=\mathbb{E}_n\left[ \mathbb{E}_\xi \left [\langle \nabla \phi(\bm{x}^t) -\nabla \phi(\bm{x}^{t+1})  , \bm{\hat{x}}^t - \bm{x}^{t+1}\rangle \right] \mid n\right]\\
			&=\mathbb{E}_n\left[ \mathbb{E}_\xi \left[\langle \nabla \phi(x_n^t) -\nabla \phi(\tilde{x}_n^{t+1})  , \hat{x}_n^t - \tilde{x}_n^{t+1}\rangle\right] \mid n\right],
		\end{aligned}
	\end{equation}
	where the second equality is due the block randomization procedure which makes $\bm{x}^t$ and $\bm{x}^{t+1}$ have the following relation,  $$\bm{x}^{t+1}=[x_1^t,\ldots,x_{n-1}^t,\tilde{x}_n^{t+1},x_{n+1}^t,\ldots,x_N^t].$$
	Next, by the first-order optimality condition (Lemma~\ref{lem:optcond}) for the subproblem in~\eqref{eq:upref}, we have 
	\begin{equation}\label{eq:desc_optcond}
		\begin{aligned}
			&\langle \nabla \phi(x_n^t) -\nabla \phi(\tilde{x}_n^{t+1})  , \hat{x}_n^t - \tilde{x}_n^{t+1}\rangle\\
			&\leq \langle \eta_t (\tilde{g}_n^t+\tilde{v}_n^{t+1})  , \hat{x}_n^t - \tilde{x}_n^{t+1} \rangle,
		\end{aligned}
	\end{equation}
	where $\tilde{v}_n^{t+1}\in\partial h_n(\tilde{x}_n^{t+1})$. Since $h_n(\cdot)$ is convex, we have 
	\begin{equation}\label{eq:desc_cvx}
		\begin{aligned}
			\langle \tilde{v}_n^{t+1}, \hat{x}_n^t - \tilde{x}_n^{t+1} \rangle\leq h_n(\hat{x}_n^{t})-h_n(\tilde{x}_n^{t+1})=0,
		\end{aligned}
	\end{equation}
	where the last equality is due the definition of $h_n(\cdot)$ and $\hat{x}_n^t,\tilde{x}_n^{t+1}\in\mathcal{X}_n$. Substitute~\eqref{eq:desc_optcond} and~\eqref{eq:desc_cvx} into~\eqref{eq:desc_inner}, we have 
	\begin{equation}\label{eq:desc_g}
		\begin{aligned}
			&\mathbb{E}\left[\langle \nabla \phi(\bm{x}^t) -\nabla \phi(\bm{x}^{t+1})  , \bm{\hat{x}}^t - \bm{x}^{t+1}\rangle \right]\\
			&\leq \eta_t \mathbb{E}_n\left[ \mathbb{E}_\xi \left[\langle  \tilde{g}_n^t  , \hat{x}_n^t - \tilde{x}_n^{t+1}\rangle \right] \mid n\right]\\
			&= \eta_t \mathbb{E}_n\left[ \mathbb{E}_\xi \left[\langle  \tilde{g}_n^t  , \hat{x}_n^t -x_n^t + x_n^t- \tilde{x}_n^{t+1}\rangle\right] \mid n\right]\\
			&= \eta_t \mathbb{E}_n\left[ \mathbb{E}_\xi \left[\langle  \tilde{g}_n^t  , \hat{x}_n^t -x_n^t\rangle \right] \mid n\right]+ \mathbb{E}_n\left[ \mathbb{E}_\xi \left[\langle  \eta_t\tilde{g}_n^t, x_n^t- \tilde{x}_n^{t+1}\rangle\right] \mid n\right]\\
			&\mathop{=}\limits^{(i)}\eta_t \mathbb{E}_n\left[\langle \nabla_n F(\bm{x}^t)  , \hat{x}_n^t -x_n^t \rangle \right]+\mathbb{E}_n\left[ \mathbb{E}_\xi \left[\langle  \eta_t\tilde{g}_n^t, x_n^t- \tilde{x}_n^{t+1}\rangle \right] \mid n\right]\\
			&=\frac{\eta_t}{N}\langle \nabla F(\bm{x}^t)  , \bm{\hat{x}}^t -\bm{x}^t \rangle +\mathbb{E}_n\left[ \mathbb{E}_\xi \left[\langle  \eta_t\tilde{g}_n^t, x_n^t- \tilde{x}_n^{t+1}\rangle \right] \mid n\right],
		\end{aligned}
	\end{equation}
	where equality $(i)$ is because fiber sampling gives unbiased gradient estimation, $\mathbb{E}_{\xi}\left[\tilde{g}_n^t\mid n\right]=\nabla_n F(\bm{x}^t)$. Next, we bound the term $\mathbb{E}_n\left[ \mathbb{E}_\xi \left[\langle  \eta_t\tilde{g}_n^t, x_n^t- \tilde{x}_n^{t+1}\rangle \right] \mid n\right]$.
	\begin{equation}\label{eq:desc_eta}
		\begin{aligned}
			&\mathbb{E}_n\left[ \mathbb{E}_\xi \left[\langle  \eta_t\tilde{g}_n^t, x_n^t- \tilde{x}_n^{t+1}\rangle \right] \mid n\right]\\
			&=\mathbb{E}_n\left[ \mathbb{E}_\xi \left[\langle  \frac{\eta_t}{\sqrt{\sigma}}\tilde{g}_n^t, \sqrt{\sigma}(x_n^t- \tilde{x}_n^{t+1})\rangle \right] \mid n\right]\\
			&\mathop{\leq}\limits^{(i)}\mathbb{E}_n\left[ \mathbb{E}_\xi \left[\frac{\eta_t^2}{2\sigma} \| \tilde{g}_n^t\|^2  + \frac{\sigma}{2} \| x_n^t - \tilde{x}_n^{t+1}\|^2\right] \mid n\right]\\
			&\mathop{=}\limits^{(ii)}\frac{\eta_t^2}{2\sigma}\mathbb{E}\left[\| \bm{g}^t \|^2\right]+\frac{\sigma}{2}\mathbb{E}\left[\| \bm{x}^t - \bm{x}^{t+1} \|^2\right]\\
			&\mathop{\leq}\limits^{(iii)}\frac{\eta_t^2}{2\sigma}\mathbb{E}\left[\| \bm{g}^t \|^2\right]+\mathbb{E}\left[D_{\phi}(\bm{x}^t, \bm{x}^{t+1})\right],
		\end{aligned}
	\end{equation}
	where $(i)$ is due fact $\| x\|^2+\| y\|^2\geq 2\langle x, y\rangle$, $(ii)$ is by the definitions of $\bm{g}^t$ and $\bm{x}^{t+1}$ that only block $n$ been changed, and $(iii)$ is because the $\sigma$-strongly convexity of $\phi(\cdot)$, i.e., $D_{\phi}(\bm{x}^t, \bm{x}^{t+1})\geq \frac{\sigma}{2}\| \bm{x}^t - \bm{x}^{t+1} \|^2$. Substitute~\eqref{eq:desc_eta} into~\eqref{eq:desc_g}, which together with~\eqref{eq:desc_M2} implies
	\begin{equation}\label{eq:desc_M3}
		\begin{aligned}
			&\mathbb{E} \left[   \mathcal{M}(\bm{x}^{t+1})   \right]\\
			&=\mathcal{M}(\bm{x}^{t}) + \frac{\eta_t}{2N\lambda}\langle \nabla F(\bm{x}^t)  , \bm{\hat{x}}^t -\bm{x}^t \rangle+ \frac{\eta_t^2}{4\sigma\lambda}\mathbb{E}\left[\| \bm{g}^t \|^2\right]\\
			&\mathop{\leq}\limits^{(i)}\mathcal{M}(\bm{x}^{t}) - \frac{\eta_t}{2N\lambda}(\frac{1}{2\lambda}-L)D_\phi(\bm{\hat{x}}^t,\bm{x}^t) + \frac{\eta_t^2}{4\sigma\lambda}\mathbb{E}\left[\| \bm{g}^t \|^2\right]\\
			&=\mathcal{M}(\bm{x}^{t}) - \eta_t\frac{1-2\lambda L}{4\lambda^2N}D_\phi(\bm{\hat{x}}^t,\bm{x}^t) + \frac{\eta_t^2}{4\sigma\lambda}\mathbb{E}\left[\| \bm{g}^t \|^2\right],
		\end{aligned}
	\end{equation}
	where $(i)$ is by Lemma~\ref{lem:nocvx_bound}. This completes the proof of Lemma~\ref{lem:descent}.
	
	\begin{lem}[Three Point Equality]\label{lem:three}
		For any $x,y,z\in \textbf{dom}\ \phi$, we have 
		$$D_\phi(x,z)-D_\phi(x,y)-D_\phi(y,z)=\langle \nabla \phi(y) - \nabla \phi(z), x - y \rangle.$$
	\end{lem}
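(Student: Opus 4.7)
The plan is to prove the identity by direct expansion of each of the three Bregman divergences using the defining formula $D_\phi(u,v) = \phi(u) - \phi(v) - \langle \nabla \phi(v), u - v \rangle$, and then to track the cancellations carefully. This is a purely algebraic identity that does not require any assumption beyond $\phi$ being differentiable on the relevant domain, so no convexity or smoothness hypothesis needs to be invoked.

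Concretely, I would first write out
\begin{align*}
D_\phi(x,z) &= \phi(x) - \phi(z) - \langle \nabla \phi(z), x-z \rangle,\\
D_\phi(x,y) &= \phi(x) - \phi(y) - \langle \nabla \phi(y), x-y \rangle,\\
D_\phi(y,z) &= \phi(y) - \phi(z) - \langle \nabla \phi(z), y-z \rangle,
\end{align*}
and then form the combination $D_\phi(x,z) - D_\phi(x,y) - D_\phi(y,z)$. The $\phi(x)$ terms from the first two expressions cancel, the two occurrences of $\phi(z)$ cancel against each other, and the two $\phi(y)$ terms cancel as well, so all function values vanish and only inner products remain. What is left is $-\langle \nabla \phi(z), x - z \rangle + \langle \nabla \phi(y), x - y \rangle + \langle \nabla \phi(z), y - z \rangle$.

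Finally, I would group the two terms involving $\nabla \phi(z)$: they combine into $\langle \nabla \phi(z), (y-z) - (x-z) \rangle = \langle \nabla \phi(z), y-x \rangle = -\langle \nabla \phi(z), x-y \rangle$. Putting this together with the remaining $\langle \nabla \phi(y), x-y \rangle$ term yields $\langle \nabla \phi(y) - \nabla \phi(z), x-y \rangle$, which is exactly the right-hand side of the claimed equality. There is no real obstacle here since the statement is a well-known bookkeeping identity about Bregman divergences; the only point to be careful about is sign-tracking in the grouping step, where swapping the order of $(y-z)$ and $(x-z)$ inside the inner product is what produces the final $x-y$ direction.
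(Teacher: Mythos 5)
Your proof is correct: the direct expansion of the three Bregman divergences, cancellation of all $\phi$-value terms, and regrouping of the $\nabla\phi(z)$ inner products is exactly the standard argument for this identity (the paper itself states the lemma without proof, treating it as a known bookkeeping fact). Nothing is missing, and your observation that only differentiability of $\phi$ is needed is accurate.
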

	
	\begin{lem}[Optimality Condition for~\eqref{eq:upref}]\label{lem:optcond}
		Let $v^{t+1}\in\partial h_n(x^{t+1}_n)$, then 
		\begin{equation}\label{eq:optcond}
			\begin{aligned}
				\langle \hat{g}_n^t +v_n^{t+1}+ \frac{1}{\eta_t}(\nabla \phi(x^{t+1}_n) - \nabla \phi(x^{t}_n)), y - x^{t+1}_n\rangle\geq 0,\ \forall y\in\mathcal{X}_n,
			\end{aligned}
		\end{equation}
	\end{lem}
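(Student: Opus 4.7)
The plan is to derive the stated variational inequality as the standard first-order optimality condition for the convex subproblem in \eqref{eq:upref}. Denote the objective of the subproblem by
$$
\psi_n(y) \;=\; \langle \tilde{g}_n^t,\, y - x_n^t\rangle \;+\; h_n(y) \;+\; \tfrac{1}{\eta_t} D_{\phi}(y,x_n^t).
$$
Since $h_n$ is convex (an indicator of the convex set $\mathcal{X}_n$) and $D_\phi(\cdot,x_n^t)$ is convex in its first argument (as $\phi$ is convex), $\psi_n$ is convex in $y$. Therefore $x_n^{t+1}$ is a global minimizer of $\psi_n$ if and only if $0 \in \partial \psi_n(x_n^{t+1})$.

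Next, I would expand the subdifferential term by term. The first term is linear, contributing $\tilde{g}_n^t$. For the Bregman term, observe that in $y$ we have $D_\phi(y,x_n^t) = \phi(y) - \phi(x_n^t) - \langle \nabla \phi(x_n^t), y - x_n^t\rangle$, whose gradient in $y$ equals $\nabla \phi(y) - \nabla \phi(x_n^t)$ (this is a differentiable function, so no set-valued subdifferential appears). The only non-smooth term is $h_n$, contributing $\partial h_n(x_n^{t+1})$. Hence
$$
\partial \psi_n(x_n^{t+1}) \;=\; \tilde{g}_n^t + \partial h_n(x_n^{t+1}) + \tfrac{1}{\eta_t}\bigl(\nabla\phi(x_n^{t+1}) - \nabla\phi(x_n^t)\bigr).
$$
The optimality condition $0 \in \partial \psi_n(x_n^{t+1})$ then yields the existence of some $v_n^{t+1} \in \partial h_n(x_n^{t+1})$ satisfying
$$
\tilde{g}_n^t + v_n^{t+1} + \tfrac{1}{\eta_t}\bigl(\nabla\phi(x_n^{t+1}) - \nabla\phi(x_n^t)\bigr) \;=\; 0.
$$
Taking the inner product of the left-hand side with $y - x_n^{t+1}$ for any $y \in \mathcal{X}_n$ gives exactly zero, hence trivially $\ge 0$, which is the claimed inequality~\eqref{eq:optcond}.

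There is really no hard step here; the main care is bookkeeping. In particular, one should verify that the calculus rule for subdifferentials of sums applies, which is justified because the differentiable Bregman term has full domain and the domain of $h_n$ (the set $\mathcal{X}_n$) is nonempty and convex, so Moreau–Rockafellar applies without qualification issues. A minor subtlety is that when $h_n$ is the indicator of $\mathcal{X}_n$, $\partial h_n(x_n^{t+1})$ coincides with the normal cone $\mathcal{N}_{\mathcal{X}_n}(x_n^{t+1})$, and the resulting inequality is precisely the projected first-order optimality condition used in~\eqref{eq:desc_optcond} of the proof of Lemma~\ref{lem:descent}. Thus the claim is immediate once the subdifferential is correctly identified.
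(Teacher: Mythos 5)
Your proof is correct and is the standard first-order optimality argument; the paper in fact states Lemma~\ref{lem:optcond} without any proof, so there is nothing to diverge from, and your derivation (via $0\in\partial\psi_n(x_n^{t+1})$ and the Moreau--Rockafellar sum rule) is exactly the intended justification. The one point worth flagging is that what you actually establish is the correct \emph{existential} reading of the lemma: there exists some $v_n^{t+1}\in\partial h_n(x_n^{t+1})$ making the left-hand side identically zero, whereas the statement's phrasing (``let $v^{t+1}\in\partial h_n(x_n^{t+1})$'') could be misread as holding for an arbitrary subgradient, which would be false since an arbitrary normal-cone element only satisfies $\langle v, y-x_n^{t+1}\rangle\le 0$; the existential version is precisely what is invoked in~\eqref{eq:desc_optcond}, so your proof supports the downstream use.
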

	
	\begin{lem}[Optimality of $\bm{\hat{x}}$]
		\label{lem:nocvx_bound}
		Let $\bm{\hat{x}}=\arg\min_{\bm{y}} \mathcal{L}(\bm{y};\bm{x})$ with $\bm{x}\in\mathcal{X}$. Then, we have
		\begin{equation}\label{eq:optcondevp2}
			\begin{aligned}
				-(\frac{1}{2\lambda}-L)D_\phi(\bm{\hat{x}},\bm{x})\geq \langle \nabla F(\bm{x}), \bm{\hat{x}} - \bm{x} \rangle
			\end{aligned}
		\end{equation}
	\end{lem}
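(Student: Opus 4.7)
The plan is to exploit two ingredients already available in the paper: (i) the optimality of $\bm{\hat{x}}$ as the global minimizer of the Bregman-augmented objective $\mathcal{L}(\bm{y};\bm{x})=F(\bm{y})+h(\bm{y})+\tfrac{1}{2\lambda}D_\phi(\bm{y},\bm{x})$, and (ii) the relative smoothness inequality from Lemma~\ref{lem:L}, which yields a tight two-sided control of $F$ in terms of its linear model and $D_\phi$.

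First, I would invoke optimality of $\bm{\hat{x}}$ by plugging in the feasible test point $\bm{y}=\bm{x}$. Since $\bm{x}\in\mathcal{X}$ and $D_\phi(\bm{x},\bm{x})=0$, this yields
\begin{equation*}
F(\bm{\hat{x}})+h(\bm{\hat{x}})+\tfrac{1}{2\lambda}D_\phi(\bm{\hat{x}},\bm{x})\;\le\;F(\bm{x})+h(\bm{x}).
\end{equation*}
Observing that $\bm{\hat{x}}$ must lie in $\mathcal{X}$ (otherwise $h(\bm{\hat{x}})=\infty$ contradicts minimality), and $\bm{x}\in\mathcal{X}$ by assumption, both indicator terms vanish. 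This reduces the optimality bound to $F(\bm{\hat{x}})-F(\bm{x})\le-\tfrac{1}{2\lambda}D_\phi(\bm{\hat{x}},\bm{x})$.

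Second, I would apply the lower-bound half of Lemma~\ref{lem:L} (relative smoothness) with $\bm{A}'=\bm{\hat{x}}$ and $\bm{A}=\bm{x}$, which gives
\begin{equation*}
F(\bm{\hat{x}})-F(\bm{x})-\langle \nabla F(\bm{x}),\bm{\hat{x}}-\bm{x}\rangle\;\ge\;-L\,D_\phi(\bm{\hat{x}},\bm{x}).
\end{equation*}
Rearranging and chaining with the previous inequality produces
\begin{equation*}
\langle \nabla F(\bm{x}),\bm{\hat{x}}-\bm{x}\rangle\;\le\;F(\bm{\hat{x}})-F(\bm{x})+L\,D_\phi(\bm{\hat{x}},\bm{x})\;\le\;-\bigl(\tfrac{1}{2\lambda}-L\bigr)D_\phi(\bm{\hat{x}},\bm{x}),
\end{equation*}
which is precisely the claimed inequality.

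The argument is essentially a two-line combination, so the main subtlety is not a computational obstacle but rather the bookkeeping of $h$: one must confirm that $\bm{\hat{x}}\in\mathcal{X}$ (implicit in the Bregman proximal map being well defined whenever $\phi$ is strongly convex with suitable domain) so that $h(\bm{\hat{x}})=0$, and that $\bm{x}\in\mathcal{X}$ holds on every iterate produced by Algorithm~\ref{alg:smd}. Both are direct consequences of the algorithmic construction and the indicator-function encoding of the constraints. Note the step-size condition $0<\lambda<\tfrac{1}{2L}$ ensures $\tfrac{1}{2\lambda}-L>0$, so the resulting bound is a genuine descent-type inequality with the correct sign.
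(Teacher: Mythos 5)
Your proof is correct and is essentially identical to the paper's own argument: both combine the lower-bound half of Lemma~\ref{lem:L} with the optimality of $\bm{\hat{x}}$ evaluated at the test point $\bm{y}=\bm{x}$, and use $h(\bm{\hat{x}})=h(\bm{x})=0$ to cancel the indicator terms. Your extra remarks on why $\bm{\hat{x}}\in\mathcal{X}$ and on the sign condition $0<\lambda<\tfrac{1}{2L}$ are accurate bookkeeping that the paper leaves implicit.
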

	\begin{proof}
		By Lemma~\ref{lem:L}, we have 
		$$F(\bm{\hat{x}}) - F(\bm{x}) -\langle \nabla F(\bm{x}), \bm{\hat{x}}-\bm{x}\rangle\geq -L D_\phi(\bm{\hat{x}},\bm{x}).$$ On the other hand, by the definition of $\bm{\hat{x}}$, we have $$F(\bm{\hat{x}})+h(\bm{\hat{x}})+\frac{1}{2\lambda}D_\phi(\bm{\hat{x}},\bm{x})\leq F(\bm{x})+h(\bm{x}).$$ 
		Sum up the above two inequalities completes the proof since $h(\bm{\hat{x}})=h(\bm{x})=0$.
	\end{proof}
	
	\section{Proof of Theorem~\ref{thm}}\label{app:thm}
	Recall Lemma~\ref{lem:descent}, as the optimization process continues in a Markovian manner~\cite{bottou2018optimization}, take the total expectation and sum up~\eqref{eq:desc_lem} in Lemma~\ref{lem:descent} from $t=0$ to $t=T$, we have 
	\begin{equation}
		\begin{aligned}
			\mathbb{E}\left[ \mathcal{M}(\bm{x}^{T}) \right]&\leq \mathcal{M}(\bm{x}^0)-\frac{(1-2\lambda L)}{4\lambda^2N}\sum_{t=0}^{T-1}\eta_t D_\phi(\bm{\hat{x}}^t,\bm{x}^t)\\
			&\quad+\frac{1}{4\lambda\sigma}\sum_{t=0}^{T-1}\eta_t^2\mathbb{E}\left[\|\bm{g}^t\|^2 \right].
		\end{aligned}
	\end{equation}
	Let $\bm{x}^*$ be the global minimizer Problem~\eqref{eq:optref}, we have $F(\bm{x}^*)\leq \mathbb{E}\left[ \mathcal{M}(\bm{x}^{T})\right]$ by the non-negativity of $D_{\phi}(\cdot,\cdot)$. This together with the above inequality implies 
	\begin{equation}\label{eq:sumupT}
		\begin{aligned}
			&\frac{(1-2\lambda L)}{4\lambda^2N}\sum_{t=0}^{T-1}\eta_t D_\phi(\bm{\hat{x}}^t,\bm{x}^t) \\
			&\leq \mathcal{M}(\bm{x}^0)-F(\bm{x}^*)
			+\frac{1}{4\lambda\sigma}\sum_{t=0}^{T-1}\eta_t^2\mathbb{E}\left[\|\bm{g}^t\|^2 \right].
		\end{aligned}
	\end{equation}
	If the step size sequence $\{\eta_t\}$ is diminishing, i.e.,  $\sum_{t=0}\eta_t=\infty$ and $\sum_{t=0}\eta_t^2<\infty$. Then we have $\liminf_{t\rightarrow \infty} \mathbb{E}\left[ D_\phi(\bm{\hat{x}}^t,\bm{x}^t) \right]=0$. 
	
	For a constant step size $\eta_t=\frac{1}{\sqrt{T}}$, substitute it into~\eqref{eq:sumupT} and  divide $\sqrt{T}$ on both side, we have 
	$$\frac{1}{T}\sum_{t=0}^{T-1}\mathbb{E}\left[D_\phi(\bm{\hat{x}}^t,\bm{x}^t) \right]\leq \frac{C}{\sqrt{T}},$$
	where $C=\frac{4\lambda^2N}{1-2\lambda L}\left[\mathcal{M}(\bm{x}^0)-F(\bm{x}^*) + \frac{1}{4\lambda\sigma}G \right]$ and $G$ is the upper bound of $\mathbb{E}\left[ \| \bm{g}^t\|^2\right]$. This immediately implies $\min_{0\leq t \leq T-1}\mathbb{E}\left[D_\phi(\bm{\hat{x}}^t,\bm{x}^t)\right] \leq \frac{C}{\sqrt{T}}$. 
	
	\section{Tables}
	\begin{table}[h]
		\renewcommand\tabcolsep{2.0pt}
		\centering
		\scriptsize 
		\caption{Explicit forms of $\hat{\bm{G}}^t_n$ for different $\ell(\cdot)$.}
		\begin{tabular}{c c}
			\toprule
			\textbf{Loss Function $\ell(\cdot)$} & $|\mathcal{F}_n|I_n\cdot\hat{\bm{G}}^t_n$\\
			\toprule
			$\frac{1}{2}(x-m)^2$  &$\left(\hat{\bm{H}}_n \bm{A}_n^t-{\hat{\bm{X}}}_n\right)^T\hat{\bm{H}}_n$ \\
			$\frac{x}{m+\epsilon}+\log (m+\epsilon)$ &$\left[\left(\hat{\bm{H}}_n \bm{A}_n^t-{\hat{\bm{X}}}_n+\epsilon\right)\circledast\left(\hat{\bm{H}}_n \bm{A}_n^t+\epsilon\right)^{.-2}\right]^T\hat{\bm{H}}_n $  \\
			$m-x\log(m+\epsilon)$ & $\left[-{\hat{\bm{X}}}_n\circledast\left(\hat{\bm{H}}_n \bm{A}_n^t+\epsilon\right)^{.-1}+1\right]^T\hat{\bm{H}}_n$ \\
			$e^m-xm$ & $\left[\exp(\hat{\bm{H}}_n \bm{A}_n^t)-{\hat{\bm{X}}}_n\right]^T\hat{\bm{H}}_n$ \\
			$\log(m+1)-x\log(m+\epsilon)$  & $\left(\hat{\bm{H}}_n \bm{A}_n^t+1\right)^{.-1}-{\hat{\bm{X}}}_n\circledast\left(\hat{\bm{H}}_n \bm{A}_n^t+\epsilon\right)^{.-1}$ \\
			$\log(1+e^m)-xm$& $\left[\exp(\hat{\bm{H}}_n \bm{A}_n^t)\circledast\left(\exp(\hat{\bm{H}}_n \bm{A}_n^t)+1\right)^{.-1}+{\hat{\bm{X}}}_n\right]^T\hat{\bm{H}}_n$ \\
			$\beta$-divergence & $\left[ ({\hat{\bm{H}}}_n \bm{A}_n^t+\epsilon)^{.\beta-2}\circledast (\hat{\bm{H}}_n \bm{A}_n^t-{\hat{\bm{X}}}_n+\epsilon)\right]^T{\hat{\bm{H}}}_n$\\
			\bottomrule
		\end{tabular}
		\label{tab:G}
	\end{table}
	
	\newpage
	\begin{table}[h]
		\renewcommand\tabcolsep{2.0pt}
		\centering
		\small
		\caption{$(\phi,\bm{\Gamma}_n^t)$ based on Jensen's inequality.}
		\begin{threeparttable}
			\begin{tabular}{c c c c}
				\toprule
				\textbf{Loss Function} & $\phi(a)$ & $|\mathcal{F}_n|I_n\cdot\bm{\Gamma}_n^t$\\
				\toprule
				Eucl. Dis.  & $\frac{1}{2}a^2$  &$\frac{1}{2}(\hat{\bm{X}}_n^t)^T\hat{\bm{H}}_n\oslash {\bm{A}}_n^t $ \\
				
				IS Div. &$\frac{1}{a}$ &${\bm{A}}_n^t\circledast \left[{\hat{\bm{X}}}_n\oslash(\hat{\bm{X}}_n^t)^{.2}\right]^T\hat{\bm{H}}_n $  \\
				
				KL Div. & $-\log a$ & $(\bm{A}_n^t)^T\circledast\left[\hat{\bm{H}}_n^T({\hat{\bm{X}}}_n\oslash(\hat{\bm{X}}_n^t))\right]$ \\
				
				$\beta$-Div. ($\beta>1$) & $a^{\beta}$ & $\frac{1}{\beta}(\hat{\bm{X}}_n^t)^{.\beta-1})^T\hat{\bm{H}}_n\oslash (\bm{A}_n^t)^{.\beta-1}$\\
				
				$\beta$-Div. ($\beta<1$) & $a^{\beta-1}$ & $\frac{1}{(1-\beta)}[{\hat{\bm{X}}}_n\circledast(\hat{\bm{X}}_n^t)^{.\beta-2})^T]\hat{\bm{H}}_n\oslash (\bm{A}_n^t)^{.\beta-2}$\\
				\bottomrule
			\end{tabular}
			
			\begin{tablenotes}
				\footnotesize
				\item {\scriptsize *In this table, $\hat{\bm{X}}_n^t=\hat{\bm{H}}_n \bm{A}_n^t$.}
			\end{tablenotes}
		\end{threeparttable}
		\label{tab:jensen}
	\end{table}
	\clearpage
	
	\onecolumn 
	\section{Figures}
	
	\begin{figure}[h]
		\centering
		\subfigure[$\approx5\%$ nonzero entries]{
			\includegraphics[width=0.23\linewidth]{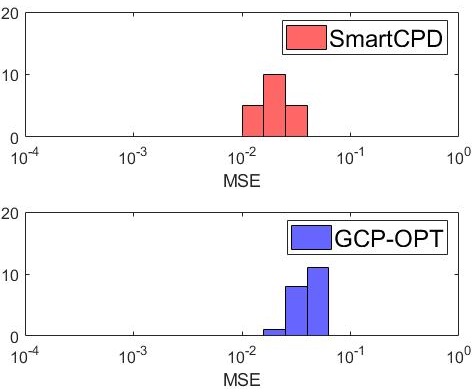}}
		\subfigure[$\approx15\%$ nonzero entries]{
			\includegraphics[width=0.23\linewidth]{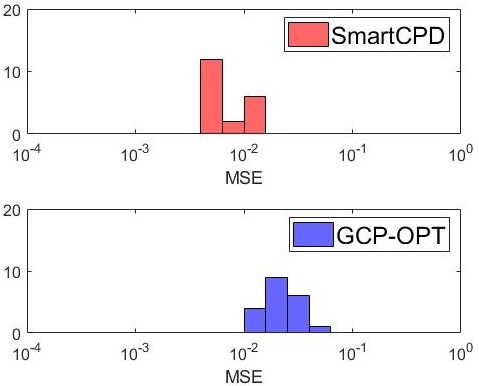}}
		\subfigure[$\approx28\%$ nonzero entries]{
			\includegraphics[width=0.23\linewidth]{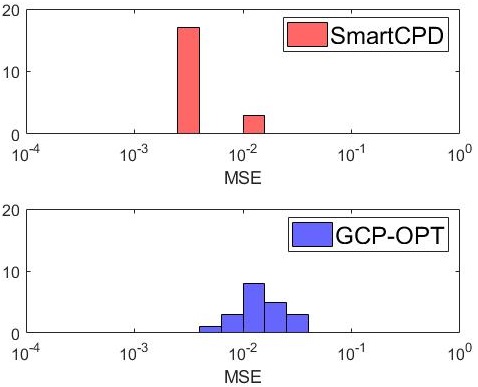}}
		\subfigure[$\approx40\%$ nonzero entries]{
			\includegraphics[width=0.23\linewidth]{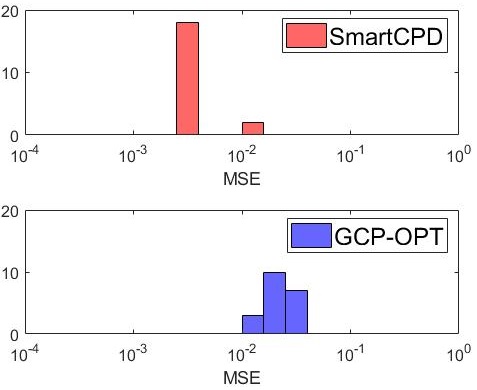}}
		\caption{Histogram of MSE after 60 seconds of $100\times100\times100$ binary tensor (rank $20$) with different level of sparsity.}
		\label{fig:binary_hist}
	\end{figure}
	\begin{figure}[h]
		\centering
		\includegraphics[width=1\linewidth]{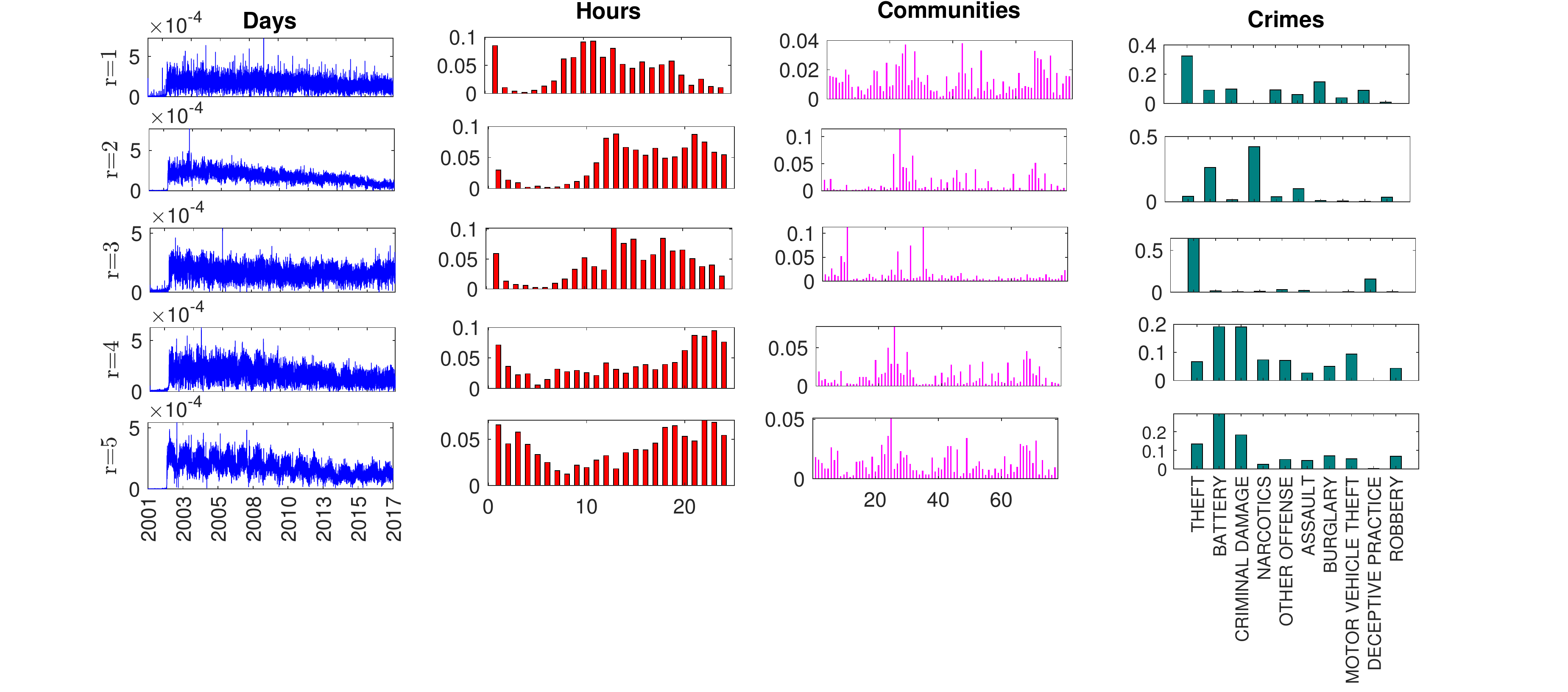}
		\caption{{Learned latent factors by \texttt{SmartCPD} when it reached the stopping criterion with cost value 0.043 (time= 35.81 sec.). }}
		\label{fig:factor_chicago1}
	\end{figure}

\end{document}